\def\1{\bm{1}}
\def\vh{{\bm{h}}}
\def\vs{{\bm{s}}}
\def\vu{{\bm{u}}}
\def\vv{{\bm{v}}}
\def\mD{{\bm{D}}}
\def\mI{{\bm{I}}}
\def\mN{{\bm{N}}}
\def\mU{{\bm{U}}}
\def\mV{{\bm{V}}}
\def\mW{{\bm{W}}}
\def\mX{{\bm{X}}}
\def\mY{{\bm{Y}}}
\def\mZ{{\bm{Z}}}
\DeclareMathAlphabet{\mathsfit}{\encodingdefault}{\sfdefault}{m}{sl}
\SetMathAlphabet{\mathsfit}{bold}{\encodingdefault}{\sfdefault}{bx}{n}
\def\gA{{\mathcal{A}}}
\def\gG{{\mathcal{G}}}
\def\gH{{\mathcal{H}}}
\def\gM{{\mathcal{M}}}
\def\gN{{\mathcal{N}}}
\def\gO{{\mathcal{O}}}
\def\gR{{\mathcal{R}}}
\def\gS{{\mathcal{S}}}
\def\gX{{\mathcal{X}}}
\def\sN{{\mathbb{N}}}
\def\sR{{\mathbb{R}}}
\newtheorem{thm}{Theorem}[section]
\newtheorem{defn}[thm]{Definition}
\newtheorem{lemma}[thm]{Lemma}
\newtheorem{assume}[thm]{Assumption}
\newtheorem{remark}[thm]{Remark}
\newtheorem{prop}[thm]{Proposition}
\newenvironment{proof}{Proof:}{\hfill$\square$}
\def\diag{\mathrm{diag}}
\newcommand{\norm}[1]{\left\|#1\right\|}
\begin{document}
\title{Global Convergence Analysis of Deep Linear Networks \\ with A One-neuron Layer} 
\author{Kun Chen\thanks{Equal Contribution.}
        \thanks{School of Mathematical Sciences, Peking University. \texttt{kchen0415@pku.edu.cn}}
        \and 
        Dachao Lin\footnotemark[1]
        \thanks{Academy for Advanced Interdisciplinary Studies, Peking University. \texttt{lindachao@pku.edu.cn}}
        \and
        Zhihua Zhang\thanks{School of Mathematical Sciences, Peking University. \texttt{zhzhang@math.pku.edu.cn}}
}
\maketitle

\begin{abstract}
	In this paper, we follow \citet{eftekhari2020training}'s work to give a non-local convergence analysis of deep linear networks.
	Specifically, we consider optimizing deep linear networks which have a layer with one neuron under quadratic loss.
	We describe the convergent point of trajectories with arbitrary starting point under gradient flow, including the paths which converge to one of the saddle points or the original point.
	We also show specific convergence rates of trajectories that converge to the global minimizer by stages.
	To achieve these results, this paper mainly extends the machinery in \cite{eftekhari2020training} to provably
	identify the rank-stable set and the global minimizer convergent set.
	We also give specific examples to show the necessity of our definitions. 
	Crucially, as far as we know, our results appear to be the first to give a non-local global analysis of linear neural networks from arbitrary initialized points, rather than the lazy training regime which has dominated the literature of neural networks, and restricted benign initialization in \cite{eftekhari2020training}. 
	We also note that extending our results to general linear networks without one hidden neuron assumption remains a challenging open problem.
\end{abstract}

\section{Introduction}	
Deep neural networks have been successfully trained with simple gradient-based methods, which require optimizing highly non-convex functions.
Many properties of the learning dynamic for deep neural networks are also present in the idealized and simplified case of deep linear networks. 
It is widely believed that deep linear networks could capture some important aspects of optimization in deep learning (\cite{saxe2013exact}). 
Therefore, many works \cite{hardt2016identity, arora2018optimization, arora2018convergence, bartlett2018gradient, shamir2019exponential, du2019width, hu2019provable, zou2019global, bah2021, eftekhari2020training} have tried to study this problem in recent years.
However, previous understanding mainly adopts local analysis or lazy training \cite{chizat2019lazy}, and there are few findings of the non-local analysis, even for linear networks.

\textbf{Local analysis of deep linear networks with quadratic loss.}
Several works analyzed linear networks with the quadratic loss.
\citet{bartlett2018gradient} provided a linear convergence rate of gradient descent with identity initialization by assuming that the target matrix is either close to identity or positive definite. 
\citet{bartlett2018gradient} also showed the necessity of the positive definite target under identity initialization (see \citet[Theorem 4]{bartlett2018gradient}).
\citet{arora2018convergence} also proved linear convergence of deep linear networks, by assuming that the initialization has a positive deficiency margin and is nearly balanced.
Later, a few works followed a similar idea with the neural tangent kernel (NTK) \cite{jacot2018neural} or lazy training \cite{chizat2019lazy} to establish convergence analysis.
\citet{du2019width} demonstrated that, if the width of hidden layers is all larger than the depth, gradient descent with Gaussian random initialization could converge to a global minimum at a linear rate. 
\citet{hu2019provable} improved the lower bound of width to be independent of depth, by utilizing orthogonal weight initialization, but requiring each layer to have the same width.
Moreover, \citet{wu2019global, zou2019global} obtained a linear rate convergence for linear ResNet \cite{he2016deep} with zero(-asymmetric) initialization, i.e., deep linear network with identity initialization.
Specifically, \citet{wu2019global} adopted zero-asymmetric initialization requiring a zero-initialized output layer and identity initialization for other layers.
Such unbalanced weight matrices lead to a small variation of output layer weight compared to the other layers, which is similar to local analysis.  
And \citet{zou2019global} applied identity initialization (for deep linear networks), but still required a lower bound for the width.
All the above works are not non-local analyses.

\textbf{Non-local analysis of deep linear networks with quadratic loss.}
The non-local analysis requires a more comprehensive analysis. 
As far as we know, current works mainly focused on gradient flow, i.e., gradient descent with an infinitesimal small learning rate. 
From the manifold viewpoint, \citet{bah2021} showed
that the gradient flow always converges to a critical point of the underlying functional. 
Moreover, they established that, for almost all initialization, the flow converges to a global minimum on the manifold of rank-$k$ matrices, where $k$ can be smaller than the largest possible rank of the induced weight. Hence, their work only ensured the convergence to minimizers in a constrained subset, which is not necessarily the global minimizer.
Additionally, they also provided a concrete example to display the existence of such rank unstable trajectories (see \citet[Remark 42]{bah2021}).
Following \citet{bah2021}, \citet{eftekhari2020training} provided a non-local convergence analysis for deep linear nets with quadratic loss. 
By assuming that one layer has only one neuron (including scalar output case) and the initialization is balanced, \citet{eftekhari2020training} elaborated that gradient flow converges to global minimizers starting from a restricted set. Moreover, \citet{eftekhari2020training} also confirmed that gradient flow could efficiently solve the problem by showing concrete linear convergence rates in the restricted set he defined.

In this work, we are interested in the \textbf{non-local analysis} of deep linear networks with the quadratic loss for \textbf{arbitrary initialization}.
To our knowledge, there was no non-local convergence analysis of gradient flow for deep linear nets in such a scheme. 

\subsection{Our Contributions}
In this paper, we analyze gradient flow for deep linear networks with quadratic loss following the setting of \cite{eftekhari2020training}. 
The main contributions of this paper are summarized as follows:
\begin{itemize}
	\item \textbf{Convergent result.}
	We first analyze the convergent behavior of trajectories.
	Compared to \citet{eftekhari2020training}, we define a more general rank-stable set of initialization to give \textit{almost surely} convergence guarantee to the global minimizer (Theorem \ref{thm:rank}).
	Moreover, we also describe a more general global minimizer convergent set to guarantee convergence to the global minimizer (Theorem \ref{thm:global}).
	
	Furthermore, inherited from the above results, we introduce the indicator of \textit{arbitrary} beginning point to decide the convergent point of the trajectory (Theorem \ref{thm:ab}).
	Our analysis covers the trajectories that converge to saddle points. 
	We emphasize that our analysis is beyond the lazy training scheme, and does not require the constrained initialization region mentioned in \cite{eftekhari2020training}.
	
	\item \textbf{Convergence rate.} 
	We also establish explicit convergence rates of the trajectories converging to the global minimizer. 
	Our convergence rates build on the fact that the dynamic of the singular value $s(t)$ can be divided into two stages: $s(t)$ decreases in the first stage and increases in the second stage.
	In the case where our convergent results declare that $\mW(t) \to s_1\vu_1\vv_1^T$, we show that in the worse case, there are three stages of convergence\footnote{We denote $s_1\vu_1\vv_1^T$ as the SVD of the best rank-one approximation of target, $\mW(t) = s(t)\vu(t)\vv(t)^\top$ as the SVD of the induced weight matrix at time $t$, $a_1(t) = \vu_1^\top\vu(t), b_1(t) = \vv_1^\top \vv(t)$, and $N$ is the number of layers, see Section \ref{sec:rates} for details.}:
	\begin{equation*}
	\begin{aligned}
	    \text{Stage 1: } &  1-a_1(t)b_1(t) = \gO\left([(N-2)t]^{-\frac{c_1}{N-2}}\right), \ s(t)=\Omega\left([(N-2)t]^{-\frac{N}{N-2}}\right) \text{ until } a_1(t)b_1(t) \geq 0; \\
	    \text{Stage 2: } &  1-a_1(t)b_1(t) = \gO\left([(N-2)t]^{-\frac{c_2}{N-2}}\right), \ s(t)=\Omega\left([(N-2)t]^{-\frac{N}{N-2}}\right) \text{ until } \dot{s}(t) \geq 0; \\
	    \text{Stage 3: } &  1-a_1(t)b_1(t) = \gO\left(e^{-c_5t}\right), \quad  s_1-s(t) = \gO \left(e^{-\min\{c_5,c_6\}t}\right),
	\end{aligned}
	\end{equation*}
	where $c_1, c_2$ are positive constants related to previous stages, target, initialization, and $c_5, c_6$ are analogous positive constants additionally related to depth $N$.
	We conclude that the rates begin from polynomial to linear convergence, and are heavily dependent on the initial magnitude of $a_1(0)+b_1(0)$.
	And our analysis is more comprehensive since \citet{eftekhari2020training} only gave linear convergence rates for the last stage.

	\item We conduct numerical experiments to verify our findings. 
	Though gradient descent seldom converges to strict saddle points \cite{lee2016gradient}, we discover that our analysis of gradient flow reveals the long stuck period of trajectory under gradient descent in practice and the transition of the convergence rates for trajectories.
\end{itemize}

\subsection{Additional Related work}

\textbf{Exponentially-tailed loss.} 
There is much literature \cite{gunasekar2018implicit, nacson2019convergence, ji2020directional, ji2019gradient, lyu2019gradient} focusing on classification tasks under exponentially-tailed loss, such as logit loss or cross-entropy loss. 
Though this paper mainly focuses on the quadratic loss, we also list some findings related to linear networks. 
Specifically, \citet{gunasekar2018implicit, nacson2019convergence} proved the convergence to a max-margin solution assuming the loss converged to global optima. 
\citet{lyu2019gradient, ji2019gradient, ji2020directional} also demonstrated the convergence to the max-margin solution under weaker assumptions that the initialization has zero classification error. 
These analyses focused on the final phase of training, which is still not a global analysis.
\citet{lin2021faster} showed a global analysis for directional convergence of deep linear networks. Their results also covered arbitrary initialization, but they required the spherically symmetric data assumption.

\textbf{Global landscape analysis.}
Except for the non-local trajectory analysis, there is another line of works on non-local landscape analysis (see, e.g.,
\cite{kawaguchi2016deep,lu2017depth,laurent2018deep,nouiehed2018learning,zhang2019depth, nguyen2018loss,li2018over,ding2019spurious,liang2018understanding,liang2018adding,liang2019revisiting,venturi2018spurious,safran2017spurious,achour2021global} and the surveys \cite{sun2020global,sun2020optimization}) which analyze the properties of stationary points, local minima, strict saddle points, etc.
These works draw a whole picture of the benign landscape of deep linear networks, which provides a potential guarantee of the trajectory analysis, and motivates our work.

\subsection{Organization}
The remainder of the paper is organized as follows. 
We present some preliminaries in Section \ref{sec:pre}, including the notation, assumptions, and some preparation of our narration.
In Section \ref{sec:convergent}, we analyze the convergent points of trajectories progressively.
We show the rank-stable initialized set in Subsection \ref{subsec:rank}, the global minimizer convergent set of initialization in Subsection \ref{subsec:global-con}, and the convergent behavior of arbitrary initialization in Subsection \ref{subsec:all}. 
We list some examples to support our results in Subsection \ref{subsec:examples}.
In Section \ref{sec:rates}, we give explicit convergence rates for the trajectories converging the global minimizer.
In Section \ref{sec:exp}, we perform numerical experiments to support our theoretical results.
Finally, we conclude our work in Section \ref{sec:con}.

\section{Preliminaries}\label{sec:pre}
In this paper, we consider the optimization of deep linear network under squared loss:
\[ \min_{\mW_1, \dots, \mW_N} L^N(\mW_1, \dots, \mW_N):= \norm{\mW_N\cdots \mW_1\mX-\mY}_F^2, \ N \geq 2, \]
where $\mW_i \in \sR^{d_i \times d_{i-1}}, d_0=d_x,d_N=d_y, \mX\in\sR^{d_x \times m}, \mY\in\sR^{d_y \times m}$.
For brevity, we denote the induced weight as $\mW = \mW_N\cdots \mW_1\in \sR^{d_y \times d_x}$, and $\mW_{\mN} = (\mW_1, \dots, \mW_N) \in \sR^{d_1 \times d_0} \times \dots \times \sR^{d_N \times d_{N-1}}$.

\paragraph{Notation.}
We denote vectors by lowercase bold letters (e.g., $ \bm{u}, \bm{x}$), and matrices by capital bold letters (e.g., $ \bm{W} = [w_{i j}] $). We set $[a, b] = \{a, \dots, b\}, [a] = [1, a], \forall a,b \in \sN$ if no ambiguity with the closed interval, and $s_i(\mZ)$ as the $i$-th largest singular of $\mZ$.
We adopt $\norm{\cdot}$ as the standard Euclidean norm ($\ell_2$-norm) for vectors, and $\norm{\cdot}_F$ as Frobenius norm for matrices. The convergence of vectors and matrices in this paper is defined under the standard Euclidean norm and Frobenius norm. 
We use the standard $\gO(\cdot), \Omega(\cdot)$ and $\Theta(\cdot)$ notation to hide universal constant factors.

We integrate our assumptions in this paper below:
\begin{assume}\label{ass}We assume that the target, data, network and the initialization satisfy:
    \quad 
    
	\begin{itemize}
		\item Target: $\mZ := \mY\mX^\top \in \sR^{d_y \times d_x}$ has different nonzero singular values, i.e., $s_1(\mZ) > \dots > s_d(\mZ)>0$, where $d = \mathrm{rank}(\mY\mX^\top)$. 
		\item Data: $\mX\mX^\top=\mI_{d_x}$. 
		\item Network: $r := \min_{j \leq N} d_j=1$.
		\item Initialization: $\mW_i\mW_i^\top=\mW_{i+1}^\top\mW_{i+1}, \forall i\in[N-1]$.
	\end{itemize}
\end{assume}

For the target, it is reasonable to assume different nonzero singular values in practice, since matrices with the same singular values have zero Lebesgue measure. 
The last three assumptions are the same as \citet{eftekhari2020training}. 
The second assumption shows that the data is statistically whitened, which is common in the analysis of linear networks (\cite{arora2018convergence,bartlett2018gradient}). The network assumption includes the scalar output case. 
As mentioned in \cite{eftekhari2020training}, this case is significant as it corresponds to the popular spiked covariance
model in statistics and signal processing (\cite{eftekhari2019moses, johnstone2001distribution, vershynin2012close, berthet2013optimal, deshpande2014information}), to name a few.
Moreover, the case $r=1$ appears to be the natural beginning building block for understanding the behavior of trajectory.
Finally, the last assumption is the common initialization technique for linear networks, which has appeared in \cite{hardt2016identity,bartlett2018gradient,arora2018convergence,arora2018optimization, arora2019implicit}.

From $\mX\mX^\top=\mI_{d_x}$ in Assumption \ref{ass}, we can simplify the problem as
\begin{equation}\label{eq:obj}
\min_{\mW_1, \dots, \mW_N} L^N(\mW_1, \dots, \mW_N) := \norm{\mW_N\cdots \mW_1-\mZ}_F^2+\norm{\mY}_F^2-\norm{\mZ}_F^2.
\end{equation} 
Hence, we call $\mZ$ as the target matrix. Moreover, we focus on the standard gradient flow method:
\begin{equation}\label{eq:flow-wi}
    \dot{\mW_i}(t) := \frac{d\mW_i(t)}{dt} = -\nabla_{\mW_i}L^N(\mW_{\mN}(t)), \forall j \in[N], t \geq 0.
\end{equation}

Under the balanced initialization in Assumption \ref{ass}, i.e., $\mW_i\mW_i^\top=\mW_{i+1}^\top\mW_{i+1}, \forall i\in[N-1]$, we have the induced weight flow of $\mW(t) := \mW_N(t) \cdots \mW_1(t)$ following \citet[Theorem 1]{arora2018optimization}:
\begin{equation}\label{eq:induced}
    \begin{aligned}
    \dot{\mW}(t) &= -\gA(\mW(t)) := -\sum_{j=1}^{N} \left(\mW(t)\mW(t)^\top \right)^{\frac{N-j}{N}}\nabla L^1(\mW(t))\left(\mW(t)^\top\mW(t)\right)^{\frac{j-1}{N}} \\
    &= -\sum_{j=1}^{N} \left(\mW(t)\mW(t)^\top \right)^{\frac{N-j}{N}}(\mW(t)-\mZ) \left(\mW(t)^\top\mW(t)\right)^{\frac{j-1}{N}}.
    \end{aligned}
\end{equation}

It is known that the induced flow in Eq.~\eqref{eq:induced} admits an analytic singular value decomposition (SVD), see Lemma 1
and Theorem 3 in \citet{arora2019implicit} for example.
Since $\mathrm{rank}(\mW) \leq 1$ from Assumptions \ref{ass}, we can denote the SVD of $\mW(t)$ as $\mW(t) = s(t)\vu(t)\vv(t)^\top$ if $\mW(t) \neq \bm{0}$. Here, $s(t),\vu(t),\vv(t)$ are all analytic functions of $t$. Moreover, 
$s(t) \in \sR, \vu(t) \in \sR^{d_y}, \vv(t) \in \sR^{d_x}$, and $\norm{\vu(t)}=\norm{\vv(t)}=1$. 
Previous work has already shown the variation of these terms:
\begin{align}
\dot{\vu}(t) &= s(t)^{1-\frac{2}{N}}\left(\mI_{d_y}-\vu(t)\vu(t)^\top\right)\mZ\vv(t), \label{eq:grad-u} \\
\dot{\vv}(t) &= s(t)^{1-\frac{2}{N}}\left(\mI_{d_x}-\vv(t)\vv(t)^\top\right)\mZ^\top\vu(t), \label{eq:grad-v}\\
\dot{s}(t) &= N s(t)^{2-\frac{2}{N}}\left(\vu(t)^\top\mZ\vv(t)-s(t)\right). \label{eq:grad-s}
\end{align}
Readers can discover the derivation of $\dot{s}(t)$ in \citet[Theorem 3]{arora2019implicit}, and $\dot{\vu}(t), \dot{\vv}(t)$ in \citet[Eq.~(139)]{eftekhari2020training} or \citet[Lemma 2]{arora2019implicit} with some simplification. We also give the derivation of $\dot{\vu}(t), \dot{\vv}(t)$ in Lemma \ref{lemma:dot-s-u-v}.


To describe the solution obtained by flow, we also need the full SVD of target as $\mZ = \mU\mD\mV^\top = \sum_{i=1}^{d}s_i\vu_i\vv_i^\top$ with $s_1 > \dots > s_d > 0$, orthogonal matrices $\mU = [\vu_i] \in \sR^{d_y \times d_y}$ and $\mV = [\vv_i] \in \sR^{d_x \times d_x}$, and $\mD = \left( \begin{smallmatrix} \diag\{\vs\} & \bm{0} \\
\bm{0} & \bm{0} \end{smallmatrix} \right) \in \sR^{d_y \times d_x}$ where $\vs = (s_1,\dots,s_d)^\top \in \sR^d$. 
And the best rank-one approximation matrix $\mZ_1 = s_1\vu_1\vv_1^\top$.
Note that $\mZ_1$ is the unique solution of problem Eq.~\eqref{eq:obj}, because
$\mZ$ has a nontrivial spectral gap by Assumption \ref{ass} (see \cite[Section 1]{golub1987generalization}).
For brevity, we define $s_k = 0, \forall k > d$.
We adopt the projection length of $\vu(t), \vv(t)$ to each $\vu_i$ and $\vv_i$ as $a_i(t) = \vu_i^\top\vu(t), \forall i\in [d_y]$ and $b_j(t) = \vv_j^\top\vv(t), \forall j\in [d_x]$. So we have
\begin{equation}\label{eq:uv-exp}
    \vu(t)=\sum_{i=1}^{d_y}a_i(t)\vu_i, \vv(t)=\sum_{j=1}^{d_x}b_j(t)\vv_j, \ \mathrm{and} \ \sum_{i=1}^{d_y} a^2_i(t)=\sum_{j=1}^{d_x} b^2_j(t)=1.
\end{equation}
Then we get
\begin{equation}\label{eq:uzv}
\begin{aligned}
&\vu(t)^\top\mZ\vv(t) = \sum_{i=1}^{d_y} \sum_{j=1}^{d_x} a_i(t)b_j(t) \vu_i^\top \mZ \vv_j = \sum_{i=j=1}^{d} a_i(t)b_j(t) \vu_i^\top \mZ \vv_j = \sum_{j=1}^{d} s_j a_j(t)b_j(t), \\
&\mZ\vv(t) = \sum_{j=1}^{d_x} b_j(t) \mZ \vv_j =\sum_{j=1}^{d} s_j b_j(t) \vu_j, \quad \ \; \mZ^\top\vu(t) = \sum_{i=1}^{d_y} a_i(t) \mZ^\top \vu_i = \sum_{i=1}^{d} s_i a_i(t) \vv_i.
\end{aligned}
\end{equation}
where we uses the fact that $\mZ^\top\vu_i = \bm{0}, \forall i > d$, $\mZ\vv_j = \bm{0}, \forall j > d$, and $\vu_i^\top \mZ \vv_j = 0, \vu_i^\top \mZ \vv_i = s_i, \forall i\neq j\leq d$.
Hence, we have the gradient flow of each item:
\begin{equation}\label{eq:grad-ab}
    \begin{aligned}
    \dot{a}_i(t) & \stackrel{\eqref{eq:grad-u}}{=} s(t)^{1-\frac{2}{N}}\vu_i^\top\left(\mI_{d_y}-\vu(t)\vu(t)^\top\right)\mZ\vv(t) \stackrel{\eqref{eq:uzv}}{=} s(t)^{1-\frac{2}{N}} \bigg(s_i b_i(t)-a_i(t)\sum_{j=1}^d \left[s_j a_j(t)b_j(t)\right]\bigg), \forall i \in [d_y]. \\
    \dot{b}_i(t) & \stackrel{\eqref{eq:grad-v}}{=} s(t)^{1-\frac{2}{N}}\vv_i^\top\left(\mI_{d_x}-\vv(t)\vv(t)^\top\right)\mZ^\top\vu(t) \stackrel{\eqref{eq:uzv}}{=} s(t)^{1-\frac{2}{N}}\bigg(s_i a_i(t)-b_i(t)\sum_{j=1}^d \left[s_j a_j(t)b_j(t)\right]\bigg), \forall i \in [d_x].
    \end{aligned}
\end{equation}

Before we provide our results, we first state several useful invariance during the whole training dynamic as follows, which is crucial to our proofs.
\begin{prop}\label{prop:base}
	If not mentioned specifically, we assume $s(0)>0$. We have the following useful properties: 
	\begin{itemize}
		\item 1). If $s(0) > 0$, then $\forall t\geq 0, s(t) > 0$. Otherwise, $s(0)=0$, then $\forall t\geq 0, s(t)=0$ (i.e., $\mW(t)=\bm{0}$).
		\item 2). $\vu(t)^\top\mZ\vv(t)$ is non-decreasing and converges.
		\item 3). $\vu(t)^\top\mZ_1\vv(t)$ is non-decreasing and converges.
		\item 4). For all $t \geq 0$, $a_{i}(t)+b_{i}(t)$ has the same sign with $a_i(0)+b_{i}(0)$, i.e., $a_i(t)+b_i(t)$ is identically zero if $a_i(0)+b_i(0) = 0$, is positive if $a_i(0)+b_i(0)>0$, and is negative if  $a_i(0)+b_i(0)<0$.
		\item 5). If for some $k \in [0, d-1]$, $a_i(0)+b_i(0) = 0, \forall i \in [k]$ (if $k=0$, then no such assumptions) and $a_{k+1}(0)+b_{k+1}(0) \neq 0$, then $|a_{k+1}(t)+b_{k+1}(t)|$ is non-decreasing, and $\lim_{t \to +\infty}a_{k+1}(t)+b_{k+1}(t)$ exists.
	\end{itemize}
\end{prop}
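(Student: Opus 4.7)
My plan is to prove the five parts roughly in order, each reducing to a direct manipulation of the ODEs \eqref{eq:grad-u}--\eqref{eq:grad-ab}. \textbf{For part 1}, I would use ODE uniqueness. The gradient flow on $(\mW_1,\dots,\mW_N)$ has a polynomial right-hand side, hence is locally Lipschitz, so the IVP has a unique forward/backward solution. Under the balance assumption with $r=1$, all $\mW_i$ have the same singular values and rank at most one (a standard consequence of balance), and the chain of identifications $\mW_i\mW_i^\top=\mW_{i+1}^\top\mW_{i+1}$ forces the rank-one directions to align, so $\mW(t)=\bm{0}$ occurs iff every $\mW_i(t)=\bm{0}$. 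Since $(\bm{0},\dots,\bm{0})$ is a critical point of $L^N$ (for $N\geq 2$), the constant-zero trajectory solves the ODE; hence if $s(t^*)=0$ for any $t^*$, uniqueness makes the entire trajectory zero. Contrapositively, $s(0)>0$ implies $s(t)>0$ for all $t$; the case $s(0)=0$ is immediate from the same uniqueness.

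\textbf{For parts 2 and 3}, I just differentiate. For part 2, combining Eqs.~\eqref{eq:grad-u} and \eqref{eq:grad-v} with the product rule gives
\[
    \tfrac{d}{dt}\bigl(\vu^\top\mZ\vv\bigr) = s^{1-2/N}\bigl(\|\mZ\vv\|^2 + \|\mZ^\top\vu\|^2 - 2(\vu^\top\mZ\vv)^2\bigr),
\]
which is nonnegative by Cauchy--Schwarz (both $\|\mZ\vv\|^2$ and $\|\mZ^\top\vu\|^2$ dominate $(\vu^\top\mZ\vv)^2$ since $\vu,\vv$ are unit vectors), and is bounded above by $s_1$, so $\vu^\top\mZ\vv$ converges. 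For part 3, writing $\vu^\top\mZ_1\vv=s_1 a_1 b_1$ and using Eq.~\eqref{eq:grad-ab} gives
\[
    \tfrac{d}{dt}(a_1 b_1)=s^{1-2/N}\bigl[s_1(a_1^2+b_1^2)-2 a_1 b_1 S\bigr],\quad S:=\textstyle\sum_{j=1}^d s_j a_j b_j.
\]
Since $|S|\leq s_1\sum_j |a_j b_j|\leq s_1$ and $a_1^2+b_1^2\geq 2|a_1 b_1|$, the bracket is at least $s_1(|a_1|-|b_1|)^2\geq 0$; boundedness of $a_1 b_1\in[-1,1]$ then forces convergence.

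\textbf{For part 4}, summing the two ODEs in Eq.~\eqref{eq:grad-ab} for index $i$ factors cleanly as
\[
    \tfrac{d}{dt}\bigl(a_i+b_i\bigr)=s^{1-2/N}\bigl(s_i-S\bigr)\bigl(a_i+b_i\bigr),
\]
a linear ODE in $a_i+b_i$ whose strictly-positive integrating factor preserves the sign.

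\textbf{For part 5}, the extra work is upgrading part 4 to monotonicity of $|a_{k+1}+b_{k+1}|$, which requires showing $S(t)\leq s_{k+1}$ along the flow. By part 4, $a_i(t)+b_i(t)=0$ for $i\leq k$, i.e.\ $a_i=-b_i$, so $s_i a_i b_i=-s_i a_i^2\leq 0$ for those indices; for $j>k$, AM--GM $|a_j b_j|\leq\tfrac12(a_j^2+b_j^2)$ combined with $\sum_j a_j^2=\sum_j b_j^2=1$ gives $\sum_{j>k}s_j a_j b_j\leq s_{k+1}\sum_{j>k}|a_j b_j|\leq s_{k+1}$. Together $S\leq s_{k+1}$, so the coefficient in the linear ODE above is nonnegative when $i=k+1$, forcing $|a_{k+1}+b_{k+1}|$ to be non-decreasing; boundedness by $2$ and the fixed sign from part 4 yield a well-defined limit. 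Part 5 is the main obstacle, as it is the only step where the spectral ordering $s_1>\dots>s_d>0$ enters nontrivially: the vanishing of the lower modes must be converted into a \emph{sign} statement on their contributions to $S$, which is exactly what decouples the $(k{+}1)$-th mode from the higher singular values.
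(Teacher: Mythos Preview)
Your proposal is correct and follows essentially the same approach as the paper. The differences are minor and stylistic: for Parts~1 and~4 the paper invokes \citet[Lemma 4]{arora2019implicit} (the sign-preservation lemma for ODEs of the form $\dot{s}=(s^2)^\alpha g(t)$), together with \citet[Theorem 5]{bah2021} to rule out divergence, whereas you argue directly via ODE uniqueness (Part~1) and the integrating-factor formula (Part~4); for Part~5 the paper uses Cauchy--Schwarz on the tail to get the slightly sharper $\sum_{j>k}s_j a_j b_j\le s_{k+1}(1-\sum_{j\le k}a_j^2)$, but your AM--GM bound $\sum_{j>k}s_j a_j b_j\le s_{k+1}$ already suffices since the first-$k$ contributions are nonpositive.
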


\section{Convergent Behavior of Trajectories}\label{sec:convergent}
\subsection{Rank-stable Set}\label{subsec:rank}
By \citet[Theorem 5]{bah2021} (Theorem \ref{thm:aux-3}), $(\mW_1(t), \dots, \mW_N(t))$ always converges to a critical point of $L^N$ as $t \to +\infty$. Hence, we can define $\overline{\mW}:=\lim_{t \to +\infty} \mW(t)$, and $\bar{s} = \lim_{t \to +\infty}s(t) = \lim_{t \to +\infty} \norm{\mW(t)}_F$. 
To specify the convergent point, we define rank-$ r $ set following \cite{eftekhari2020training} as
\[ \gM_{r} = \{\mW: \mathrm{rank}(\mW)=r \}. \]

Furthermore, as mentioned in \citet[Lemma 3.3]{eftekhari2020training} (Lemma \ref{aux-lemma:1}), we have $\mathrm{rank}(\mW(t)) = \mathrm{rank}(\mW(0))=1, \forall t \geq 0$ if $\mW(0) \neq \bm{0}$.
However, the limit point $\overline{\mW}(t)$ might not belong to $\gM_{1}$ because $\gM_{1}$ is not closed, see \cite[Lemma 3.4]{eftekhari2020training}.
To exclude the zero matrix ($\bar{s}=0$) as the limit point of gradient flow, \citet{eftekhari2020training} introduced a restricted initialization set:
\[ \gN_{\alpha}(\mZ) = \{\mW: \mW \stackrel{\mathrm{SVD}}{=} \vu \cdot s \cdot \vv^\top, \ s > \alpha s_1-s_2 > 0, \vu^\top \mZ_1 \vv > \alpha s_1 \}, \alpha \in (s_2/s_1, 1]. \]
While we find another rank-stable set $\gR_b(\mZ)$ below with similar rank-stable property shown in Lemma \ref{lemma:hb}.
\[ \gR_b(\mZ) = \{\mW: \mW \stackrel{\mathrm{SVD}}{=} \vu \cdot s \cdot \vv^\top, \ s>b, \vu^\top \mZ \vv>b \}, b>0. \]

\begin{lemma}[Extension of Lemma 3.7 in \citet{eftekhari2020training}]\label{lemma:hb}
	Under Assumption \ref{ass}, for gradient flow initialized at $\mW(0)) \in \gR_b(\mZ)$, the limit point exists and satisfies $\overline{\mW}=\lim_{t \to +\infty} \mW(t) \in \gM_{1}$.
\end{lemma}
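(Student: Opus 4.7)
The plan is to show two things: (i) the trajectory converges, and (ii) the limit matrix retains rank one. The first statement can be inherited directly from \citet[Theorem 5]{bah2021} applied to the balanced gradient flow system, which guarantees that $(\mW_1(t),\dots,\mW_N(t))$, and hence the induced weight $\mW(t)$, converges to a critical point. So the real content of the lemma is proving $\bar{s}>0$, since $\mW(t)=s(t)\vu(t)\vv(t)^\top$ is rank one along the flow by \citet[Lemma 3.3]{eftekhari2020training} (Lemma \ref{aux-lemma:1}) and taking limits can only drop the rank, so the only obstruction to $\overline{\mW}\in\gM_1$ is that $s(t)$ might collapse to $0$.

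The key idea is to use the two defining inequalities of $\gR_b(\mZ)$ together with monotonicity from Proposition~\ref{prop:base}. Since $\mW(0)\in\gR_b(\mZ)$, we have $\vu(0)^\top\mZ\vv(0)>b$, and by part~2 of Proposition~\ref{prop:base} the map $t\mapsto \vu(t)^\top\mZ\vv(t)$ is non-decreasing, hence
\[
\vu(t)^\top\mZ\vv(t)\;\geq\;\vu(0)^\top\mZ\vv(0)\;>\;b\qquad\text{for all }t\geq 0.
\]
Plugging this into the scalar ODE \eqref{eq:grad-s}, namely $\dot{s}(t)=Ns(t)^{2-2/N}\bigl(\vu(t)^\top\mZ\vv(t)-s(t)\bigr)$, one sees that whenever $s(t)\leq b$ the right-hand side is strictly positive, so $s$ is forced to strictly increase on the level set $\{s\leq b\}$.

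I would make this precise by a continuity/contradiction argument. By part~1 of Proposition~\ref{prop:base}, $s(t)>0$ for all $t\geq 0$, so $s$ is continuous. Assume for contradiction that there exists a first time $t_1>0$ with $s(t_1)=b$. Then $s(t)>b$ on $[0,t_1)$, so the left derivative at $t_1$ is nonpositive: $\dot{s}(t_1)\leq 0$. But at that time $s(t_1)^{2-2/N}=b^{2-2/N}>0$ and $\vu(t_1)^\top\mZ\vv(t_1)-s(t_1)>b-b=0$ by the previous display, giving $\dot{s}(t_1)>0$, a contradiction. Therefore $s(t)>b$ for all $t\geq 0$, and consequently
\[
\bar{s}\;=\;\lim_{t\to+\infty}s(t)\;=\;\lim_{t\to+\infty}\|\mW(t)\|_F\;=\;\|\overline{\mW}\|_F\;\geq\;b\;>\;0.
\]
Since $\overline{\mW}$ is the limit of rank-one matrices, $\mathrm{rank}(\overline{\mW})\leq 1$; combined with $\|\overline{\mW}\|_F\geq b>0$ we get $\mathrm{rank}(\overline{\mW})=1$, i.e.\ $\overline{\mW}\in\gM_1$.

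The only conceptually delicate step is the contradiction argument in the middle paragraph, because one must be careful that both $s(t)$ and $\vu(t)^\top\mZ\vv(t)$ remain smooth/continuous and that the barrier $b$ is truly crossed only transversally. Both facts are already in hand: the analytic SVD of $\mW(t)$ gives smoothness of $s$, and the monotone convergence of $\vu(t)^\top\mZ\vv(t)$ from Proposition~\ref{prop:base} locks the driving term strictly above $b$ before any candidate hitting time. Everything else — convergence of $\mW(t)$, non-negativity of $s(t)$, non-decrease of $\vu(t)^\top\mZ\vv(t)$ — is black-boxed from previously stated results, so the lemma reduces to this one-line ODE barrier argument.
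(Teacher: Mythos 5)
Your proposal is correct and follows essentially the same route as the paper: both use the monotonicity of $\vu(t)^\top\mZ\vv(t)$ from Proposition~\ref{prop:base} to lock the driving term above $b$, and then run a barrier/contradiction argument on the ODE \eqref{eq:grad-s} to show $s(t)$ can never fall to $b$, hence $\bar{s}\geq b>0$ and the rank-one limit follows. Your version merely makes the hitting-time step slightly more explicit (first time $t_1$ with $s(t_1)=b$, left derivative nonpositive versus $\dot{s}(t_1)>0$) than the paper's informal "pushes the flow back into $\gR_b(\mZ)$" phrasing.
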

\begin{proof}
    We only need to prove that $\mW(t) \in \gR_b(\mZ), \forall t\geq 0$.
	Since $\mW(0) \in \gR_b(\mZ)$, then $\vu(0)^\top \mZ \vv(0) > b$. Thus, we have $\vu(t)^\top \mZ \vv(t) > b, \forall t\geq 0$ by 2) in Proposition \ref{prop:base}. 
	Hence, $\mW(t)$ only could leave the region $\gR_b(\mZ)$ when $s(T) = b$ for some time $T>0$.
    
    Since $\mW(0) \in \gR_b(\mZ)$, then $s(0) > b>0$. Thus, we have $s(T) > 0$ by 1) in Proposition \ref{prop:base}. 
	Therefore, we get
	\begin{equation*}
	\dot{s}(T) \stackrel{\eqref{eq:grad-s}}{=} Ns(T)^{2-\frac{2}{N}}\left(\vu(T)^\top\mZ\vv(T)-s(T)\right) = N s(T)^{2-\frac{2}{N}}\left(\vu(T)^\top\mZ\vv(T)-b\right) > 0,
	\end{equation*}
	which pushes the singular value up and thus pushes the induced flow back into $\gR_b(\mZ)$. Contradiction!
\end{proof}

\begin{remark}
    Indeed, our rank-stable set $\gR_b(\mZ)$ is more general than $\gN_{\alpha}(\mZ)$. 
    For any $\alpha \in (s_2/s_1, 1], \mW \in \gN_{\alpha}(\mZ)$, since $|\vu^\top\left(\mZ-\mZ_1\right)\vv| \leq s_2$, we get $ \vu^\top \mZ \vv \geq  \vu^\top \mZ_1 \vv -s_2 \geq \alpha s_1 -s_2>0 $. 
    Hence, we can find some $b > 0$, such that $\mW \in \gR_{b}(\mZ)$. 
    Moreover, when $\mW = s_2\vu_2\vv_2^\top$, we can see $\mW \in \gR_{s_2/2}(\mZ)$, but $\mW \not\in \gN_{\alpha}(\mZ), \forall \alpha \in (s_2/s_1, 1]$, since $\vu^\top \mZ_1 \vv=0$.
    Additionally, we will see the necessity of our rank-stable set by showing counterexamples in Section \ref{subsec:examples}.
\end{remark}

Applying the same analysis as \citet[Theorem 3.8]{eftekhari2020training}, we could obtain almost surely convergence to the global minimizer from the initialization in our rank-stable set.

\begin{thm}[Extension of Theorem 3.8 in \citet{eftekhari2020training}]\label{thm:rank}
	Under Assumption \ref{ass}, gradient flow converges to a global minimizer of the original problem Eq.~\eqref{eq:obj} from the initialization in $\mW(0) \in \gR_b(\mZ), b>0$, outside of a subset with Lebesgue measure zero.
\end{thm}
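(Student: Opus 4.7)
The plan is to combine Lemma \ref{lemma:hb} (rank-stability on $\gR_b(\mZ)$) with the invariance property in item 5) of Proposition \ref{prop:base} in order to pin down the limit to the finite set of rank-one critical points of the induced flow, and then to exclude every non-global candidate on an open set whose complement in $\gR_b(\mZ)$ is codimension-one.

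First, fix $\mW(0)\in\gR_b(\mZ)$. By Lemma \ref{lemma:hb}, $\overline{\mW}=\lim_{t\to\infty}\mW(t)\in\gM_1$, so we may write $\overline{\mW}=\bar s\,\bar\vu\bar\vv^\top$ with $\bar s>0$ and unit $\bar\vu,\bar\vv$. On the other hand, \citet[Theorem 5]{bah2021} (i.e., Theorem \ref{thm:aux-3}) guarantees that $\mW_{\mN}(t)$ converges to a critical point of $L^N$, so multiplying out gives $\gA(\overline{\mW})=\vzero$. Feeding $\overline{\mW}=\bar s\bar\vu\bar\vv^\top$ into \eqref{eq:grad-u}--\eqref{eq:grad-s} at equilibrium yields $\mZ\bar\vv=\bar s\bar\vu$ and $\mZ^\top\bar\vu=\bar s\bar\vv$, so $(\bar\vu,\bar\vv)$ is a unit singular-vector pair of $\mZ$ with singular value $\bar s$. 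Since Assumption \ref{ass} forces distinct nonzero singular values, we conclude that, up to a joint sign, $(\bar\vu,\bar\vv)=\pm(\vu_k,\vv_k)$ and $\bar s=s_k$ for some $k\in[d]$. Hence the only possible limits are the $d$ matrices $\{s_k\vu_k\vv_k^\top:k\in[d]\}$.

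Second, I rule out the non-global candidates $k\ge 2$ whenever $a_1(0)+b_1(0)\neq 0$. By item 5) of Proposition \ref{prop:base} applied with $k=0$, the function $|a_1(t)+b_1(t)|$ is non-decreasing, so $\lim_{t\to\infty}|a_1(t)+b_1(t)|\geq|a_1(0)+b_1(0)|>0$. But if $\overline{\mW}=s_k\vu_k\vv_k^\top$ with $k\geq 2$, then $\bar\vu=\pm\vu_k$ and $\bar\vv=\pm\vv_k$ with the same sign, which gives $\lim_{t\to\infty}a_1(t)=\vu_1^\top\bar\vu=0$ and $\lim_{t\to\infty}b_1(t)=\vv_1^\top\bar\vv=0$, contradicting the lower bound. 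Hence the flow must converge to the unique global minimizer $s_1\vu_1\vv_1^\top$ of \eqref{eq:obj} whenever $a_1(0)+b_1(0)\neq 0$.

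Third, the possibly bad initializations lie in the set
\[
\gB:=\{\mW(0)=s\vu\vv^\top\in\gR_b(\mZ):\vu_1^\top\vu+\vv_1^\top\vv=0\},
\]
which is cut out on the rank-one manifold by the single smooth equation $\vu_1^\top\vu+\vv_1^\top\vv=0$. Since this equation has nonvanishing gradient on an open, dense subset of the parametrization $(\vu,\vv)\in S^{d_y-1}\times S^{d_x-1}$, the set $\gB$ is a smooth codimension-one subvariety and therefore has Lebesgue measure zero on $\gM_1\cap\gR_b(\mZ)$; pulling back through the smooth (and, modulo finitely many sign choices, regular) map $\mW_{\mN}\mapsto\mW$ on balanced tuples, the preimage is still Lebesgue-null in the balanced weight space. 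The main subtlety I anticipate is making the measure-zero statement compatible with the exact measure concept used in the paper for the balanced manifold, but since the balanced parametrization is real-analytic and of constant rank, this reduces to a routine change-of-variables argument rather than a conceptual obstacle.
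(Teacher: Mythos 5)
Your proposal is correct in substance, but it takes a genuinely different route from the paper. The paper's proof is a two-line appeal to abstract machinery: after Lemma \ref{lemma:hb} guarantees $\overline{\mW}\in\gM_1$, it invokes \citet[Theorem 28]{bah2021} (gradient flows avoid strict saddles for almost every initial point) together with \citet[Proposition 33]{bah2021} (every non-minimizing critical point of $L^1$ on $\gM_1$ is a strict saddle) and stops there, without saying anything about which initializations are bad. You instead classify the possible limits explicitly — using $\gA(\overline{\mW})=\vzero$ and $\bar s>0$ to force $(\bar\vu,\bar\vv)$ to be a singular pair, so the limit set is $\{s_k\vu_k\vv_k^\top\}_{k\in[d]}$ — and then use the sign-preservation/monotonicity of $a_1(t)+b_1(t)$ from item 5) of Proposition \ref{prop:base} to show the exceptional set is exactly contained in $\{\vu_1^\top\vu(0)+\vv_1^\top\vv(0)=0\}$, a real-analytic codimension-one (hence null) subset of $\gM_1$. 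This is essentially the argument the paper defers to Theorem \ref{thm:ab} in Section 3.3, specialized to $\gR_b(\mZ)$; your version buys an explicit description of the null set and avoids the center-stable-manifold machinery entirely, at the cost of redoing the ODE analysis that the paper's citation-based proof sidesteps. Two minor points worth tightening: (i) the passage from $\mW(t)\to\overline{\mW}$ to $a_1(t)\to 0$ should go through $\vu(t)\vu(t)^\top=\mW(t)\mW(t)^\top/s(t)^2\to\bar\vu\bar\vu^\top$ so that $a_1(t)^2\to(\vu_1^\top\bar\vu)^2$, avoiding any sign ambiguity in $\bar\vu,\bar\vv$ individually; (ii) the pullback of the null set to the balanced tuple space is not needed for the theorem as stated (the exceptional set lives in $\gR_b(\mZ)\subset\gM_1$, exactly where your codimension argument applies), and if you do want it, you should say explicitly that the parametrization is a submersion onto $\gM_1\setminus\{\bm{0}\}$, since preimages of null sets under merely smooth maps need not be null.
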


\begin{proof}
    From Lemma \ref{lemma:hb}, we already know the convergent point $\overline{\mW}(t)$ is still rank-one. 
    Hence, using the facts shown in \citet[Theorem 28]{bah2021} (Theorem \ref{thm:aux-3}) that gradient flows avoid strict saddle points almost surely, and \citet[Proposition 33]{bah2021} (Proposition \ref{thm:aux-4}) that $L^1(\mW)$ on $\gM_1$ satisfies the strict saddle point property, we could see gradient flow of $\mW(t)$ converges to a global minimizer almost surely.
\end{proof}

\subsection{Global Minimizer Convergent Set}\label{subsec:global-con}
Section \ref{subsec:rank} mainly analyzes the behavior of $s(t)$ to guarantee the rank of limit point is not degenerated.
Though Theorem \ref{thm:rank} ensures the almost surely convergence to the global minimizer, there still have some bad trajectories which converge to saddle points, such as $s_i\vu_i\vv_i^\top, i\neq 1$.
In this subsection, we move on to give another restricted initialization set to guarantee the global minimizer convergence without excluding a zero measure set.
Our strategy mainly adopts singular vector analysis. 
To analyze the behavior of $\vu(t), \vv(t)$, we need the following lemmas. 
In the following, we always assume $\mW(0) \neq \bm{0}$ to ensure well-defined $\vu(t), \vv(t)$.

\begin{lemma}\label{lemma:baruv}
	There exists a sequence $\{t_n\}$ with $t_n \to +\infty$, such that $\lim_{n \to +\infty}\left(\mI_{d_y}-\vu(t_n)\vu(t_n)^\top\right)\mZ\vv(t_n) = \bm{0}$, and $\lim_{n \to +\infty}\left(\mI_{d_x}-\vv(t_n)\vv(t_n)^\top\right)\mZ^\top\vu(t_n) = \bm{0}$.
	More specifically, we have
	\begin{align}
	\lim_{n \to +\infty} \bigg(\sum_{j=1}^d s_j a_j(t_n)b_j(t_n)\bigg) a_i(t_n) - s_i b_i(t_n) = 0, \forall i \in [d_y], \label{eq:abti} \\
	\lim_{n \to +\infty} \bigg(\sum_{j=1}^d s_j a_j(t_n)b_j(t_n)\bigg) b_i(t_n) - s_i a_i(t_n) = 0, \forall i \in [d_x]. \label{eq:bati} 
	\end{align}
	
	Furthermore, if there exists $i_0 \in [d]$, such that $\lim_{n \to +\infty} a_{i_0}(t_n)+b_{i_0}(t_n)$ exists and is not zero, we could obtain
	\begin{equation}\label{eq:uzv-s}
		\lim_{n \to +\infty} \vu(t_n)^\top\mZ\vv(t_n) = \lim_{n \to +\infty} \sum_{j=1}^d s_j a_j(t_n)b_j(t_n) = s_{i_0}.
	\end{equation}
\end{lemma}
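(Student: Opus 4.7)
The plan is to exploit the monotonicity of $\vu(t)^\top \mZ\vv(t)$ established in Proposition \ref{prop:base}. Differentiating via the singular-vector ODEs \eqref{eq:grad-u}--\eqref{eq:grad-v} and using the projection identity $\vw^\top(\mI-\vq\vq^\top)\vw=\|(\mI-\vq\vq^\top)\vw\|^2$ for any unit $\vq$, I would first establish
\begin{equation*}
\frac{d}{dt}\bigl(\vu(t)^\top\mZ\vv(t)\bigr) = s(t)^{1-\frac{2}{N}}\bigl[\|(\mI_{d_y}-\vu(t)\vu(t)^\top)\mZ\vv(t)\|^2 + \|(\mI_{d_x}-\vv(t)\vv(t)^\top)\mZ^\top\vu(t)\|^2\bigr].
\end{equation*}
Since $\vu^\top\mZ\vv$ is bounded (by $s_1$) and non-decreasing, it converges, and hence the non-negative right-hand side is integrable on $[0,\infty)$. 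Therefore there exists a sequence $t_n\to+\infty$ along which the integrand vanishes.

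To upgrade this to vanishing of the bracket alone, I split on $\bar s := \lim_{t\to\infty}s(t)$, which exists because $\mW(t)\to\overline\mW$ by Theorem \ref{thm:aux-3}. If $\bar s>0$, then $s(t_n)^{1-2/N}\ge(\bar s/2)^{1-2/N}$ for large $n$, forcing the bracket to tend to $0$; expanding via \eqref{eq:uv-exp}--\eqref{eq:uzv} then produces the componentwise identities \eqref{eq:abti}--\eqref{eq:bati}. If $\bar s=0$, I would integrate the decoupled modal ODEs obtained from \eqref{eq:grad-ab},
\begin{align*}
\frac{d}{dt}(a_i+b_i) &= s^{1-\frac{2}{N}}(s_i-\vu^\top\mZ\vv)(a_i+b_i), \\
\frac{d}{dt}(a_i-b_i) &= -s^{1-\frac{2}{N}}(s_i+\vu^\top\mZ\vv)(a_i-b_i),
\end{align*}
combined with the sphere bound $|a_i\pm b_i|\le 2$ and the observation that $\bar s=0$ forces $\lim_{t\to\infty}\vu^\top\mZ\vv\le 0$ (otherwise $\dot s>0$ eventually, contradicting $s\to 0$). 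These constraints pin down every accumulation point $(\bar\vu,\bar\vv)$ of $(\vu(t),\vv(t))$ to be concentrated on a singular-vector pair of $\mZ$, which directly delivers \eqref{eq:abti}--\eqref{eq:bati}.

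For the furthermore conclusion \eqref{eq:uzv-s}, adding \eqref{eq:abti} and \eqref{eq:bati} at $i=i_0$ yields
\begin{equation*}
\bigl(\vu(t_n)^\top\mZ\vv(t_n)-s_{i_0}\bigr)\bigl(a_{i_0}(t_n)+b_{i_0}(t_n)\bigr)\longrightarrow 0.
\end{equation*}
Since $\lim_{n\to\infty}(a_{i_0}+b_{i_0})(t_n)$ is nonzero by hypothesis, the second factor is bounded away from $0$ for all large $n$, so dividing gives $\vu(t_n)^\top\mZ\vv(t_n)\to s_{i_0}$.

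The main obstacle is the degenerate case $\bar s=0$: here $s^{1-2/N}\to 0$ and the bare integrability of the derivative does not pin down the bracket along $t_n$. The decoupled modal ODEs above supply the essential extra information, excluding asymptotic configurations of $(\vu,\vv)$ other than singular-vector pairs of $\mZ$.
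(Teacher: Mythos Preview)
Your $\bar s>0$ case and the ``furthermore'' conclusion are handled correctly and match the paper's argument. The gap is in the $\bar s=0$ case: the modal ODEs you write down are correct, but they carry no information about accumulation points unless you first establish that $\int_0^\infty s(t)^{1-2/N}\,dt=\infty$. If this integral were finite, the right-hand sides of your modal ODEs would be absolutely integrable, each $a_i\pm b_i$ would simply converge to some limit determined by the initial data, and nothing would force that limit to sit on a singular-vector pair. Your proposal never addresses why the integral diverges.

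The paper supplies exactly this missing ingredient. When $\bar s=0$ one has $\vu(t)^\top\mZ\vv(t)\le 0$ for all $t$ (as you observe), so $\dot s\ge N s^{2-2/N}\bigl(\vu(0)^\top\mZ\vv(0)-s(0)\bigr)$. Integrating this differential inequality gives, for $N\ge 3$, $s(t)^{1-2/N}\ge (A+Bt)^{-1}$ for explicit constants $A,B>0$; for $N=2$ the factor $s^{1-2/N}$ is identically $1$. In either case $\int_0^\infty s^{1-2/N}=\infty$. The paper then bypasses the modal ODEs entirely: since $\vu^\top\mZ\vv$ is bounded, the integral $\int_a^\infty s^{1-2/N}\cdot\bigl[\|(\mI-\vu\vu^\top)\mZ\vv\|^2+\|(\mI-\vv\vv^\top)\mZ^\top\vu\|^2\bigr]$ is finite, whereas $\int_a^\infty s^{1-2/N}=\infty$; hence $\inf_{t\ge a}[\text{bracket}]=0$ for every $a$, which yields the sequence $t_n$. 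Even if you insist on the modal-ODE route, you would still need the divergence to force $a_i+b_i\equiv 0$ on $[d]$, and then a separate convergence argument for the time-changed sphere dynamics to reach the bracket-vanishing conclusion---considerably more work than the paper's direct integrability contradiction.
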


\begin{lemma}\label{lemma:getab}
	Suppose there exists a sequence $\{t_n\}$ that $t_n \to +\infty$, and for some $k<d, a_i(t_n)+b_i(t_n)=0, \forall i \in [k], n \geq 0$. Then if $\lim_{n \to +\infty}\sum_{j=1}^d s_j a_j(t_n)b_j(t_n) = s_{k+1}$, we could obtain 
	\begin{equation}\label{eq:ab-limit}
	    \lim_{n \to +\infty} a_i(t_n) = \lim_{n \to +\infty} b_j(t_n) = 0, \forall i, j \neq k+1, i \in[d_y], j \in[d_x]; \ \lim_{n \to +\infty} a_{k+1}(t_n)b_{k+1}(t_n) = 1.
	\end{equation}
	That is, $\lim_{n \to +\infty} \vu(t_n)\vv(t_n)^\top = \vu_{k+1}\vv_{k+1}^\top$.
\end{lemma}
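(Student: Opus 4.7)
The plan is to use $b_i(t_n) = -a_i(t_n)$ on $[k]$ together with the unit-norm conditions $\sum_i a_i(t_n)^2 = \sum_j b_j(t_n)^2 = 1$ to derive an upper bound $P_n := \sum_{j=1}^d s_j a_j(t_n) b_j(t_n) \leq s_{k+1} - (\text{sum of nonnegative penalties})$. Since the hypothesis forces $P_n \to s_{k+1}$, each penalty must vanish, which will pin $\vu(t_n), \vv(t_n)$ to the $(k+1)$-th singular direction.

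First I would decompose
\begin{equation*}
P_n \;=\; -\sum_{i=1}^{k} s_i a_i(t_n)^2 \;+\; s_{k+1} a_{k+1}(t_n) b_{k+1}(t_n) \;+\; \sum_{j=k+2}^{d} s_j a_j(t_n) b_j(t_n),
\end{equation*}
and introduce $A := \sum_{i=1}^{k} a_i(t_n)^2 = \sum_{i=1}^{k} b_i(t_n)^2$, $\gamma_a := \sum_{j=k+2}^{d_y} a_j(t_n)^2$, and $\gamma_b := \sum_{j=k+2}^{d_x} b_j(t_n)^2$, so that $a_{k+1}(t_n)^2 = 1 - A - \gamma_a$ and $b_{k+1}(t_n)^2 = 1 - A - \gamma_b$. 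Three routine bounds---(i) $s_i \geq s_k$ on $[k]$ gives $-\sum_{i=1}^k s_i a_i^2 \leq -s_k A$; (ii) AM--GM gives $s_{k+1} a_{k+1} b_{k+1} \leq \tfrac{s_{k+1}}{2}(a_{k+1}^2 + b_{k+1}^2)$; (iii) Cauchy--Schwarz on the tail with $s_j \leq s_{k+2}$ gives $\sum_{j=k+2}^{d} s_j a_j b_j \leq s_{k+2}\sqrt{\gamma_a \gamma_b}$---sum to $P_n \leq -s_k A + \tfrac{s_{k+1}}{2}(a_{k+1}^2 + b_{k+1}^2) + s_{k+2}\sqrt{\gamma_a \gamma_b}$. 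Substituting the normalizations and invoking the algebraic identity $\tfrac{\gamma_a + \gamma_b}{2} = \sqrt{\gamma_a \gamma_b} + \tfrac{(\sqrt{\gamma_a} - \sqrt{\gamma_b})^2}{2}$ rearranges this to
\begin{equation*}
s_{k+1} - P_n \;\geq\; (s_k + s_{k+1}) A \;+\; (s_{k+1} - s_{k+2}) \sqrt{\gamma_a \gamma_b} \;+\; \tfrac{s_{k+1}}{2} \bigl(\sqrt{\gamma_a} - \sqrt{\gamma_b}\bigr)^{2},
\end{equation*}
with the conventions that the first term is absent when $k = 0$ and $s_{k+2} := 0$ when $k + 1 = d$.

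By Assumption~\ref{ass} the coefficients $s_k + s_{k+1}$ and $s_{k+1} - s_{k+2}$ are strictly positive, so $P_n \to s_{k+1}$ forces $A \to 0$, $\sqrt{\gamma_a \gamma_b} \to 0$, and $\sqrt{\gamma_a} - \sqrt{\gamma_b} \to 0$; the last two jointly imply $\gamma_a \to 0$ and $\gamma_b \to 0$ individually. Consequently $a_i(t_n) \to 0$ for every $i \in [d_y] \setminus \{k+1\}$, $b_j(t_n) \to 0$ for every $j \in [d_x] \setminus \{k+1\}$, and $a_{k+1}(t_n)^2, b_{k+1}(t_n)^2 \to 1$. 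Returning to the decomposed form of $P_n$, the first and third sums vanish in the limit, so $s_{k+1} a_{k+1}(t_n) b_{k+1}(t_n) \to s_{k+1}$, i.e.\ $a_{k+1}(t_n) b_{k+1}(t_n) \to 1$. Expanding $\vu(t_n)\vv(t_n)^\top = \sum_{i,j} a_i(t_n) b_j(t_n) \vu_i \vv_j^\top$ and taking term-by-term limits leaves only the $(k+1, k+1)$ coefficient, yielding $\vu(t_n)\vv(t_n)^\top \to \vu_{k+1} \vv_{k+1}^\top$.

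The main obstacle is isolating three \emph{independent} nonnegative penalties in $s_{k+1} - P_n$. A naive bound $s_j \leq s_{k+1}$ for $j \geq k+1$ recovers only $P_n \leq s_{k+1}(1 - A)$, enough to extract $A \to 0$ but blind to the split between mass at index $k+1$ and mass in the tail; the spectral gap $s_{k+1} > s_{k+2}$ is precisely what creates the $\sqrt{\gamma_a \gamma_b}$-penalty, while the AM--GM refinement $(\gamma_a + \gamma_b)/2 - \sqrt{\gamma_a \gamma_b} = (\sqrt{\gamma_a} - \sqrt{\gamma_b})^2/2$ is what forces $\gamma_a, \gamma_b \to 0$ individually rather than merely their product.
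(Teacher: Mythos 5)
Your proof is correct, and it takes a genuinely different route from the paper's. The paper argues in two sequential stages: it first uses the bound $\sum_{j=1}^d s_j a_j(t_n)b_j(t_n)\le s_{k+1}\bigl(1-\sum_{j=1}^k a_j^2(t_n)\bigr)$ (via Eq.~\eqref{eq:ineq2}) and a $\limsup$/$\liminf$ sandwich to force $a_j(t_n),b_j(t_n)\to 0$ for $j\in[k]$; it then deduces $\sum_{j=k+1}^d s_j a_j(t_n)b_j(t_n)\to s_{k+1}$ and applies a second Cauchy--Schwarz estimate, $\bigl[\sum_{j=k+1}^d s_j a_j b_j\bigr]^2\le\bigl(\sum_{j=k+1}^d s_j^2 a_j^2\bigr)\bigl(1-\sum_{j=1}^k b_j^2\bigr)$, using $s_j^2-s_{k+1}^2<0$ for $j\ge k+2$ to kill the tail coordinates of $\vu$ and, by symmetry, of $\vv$. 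You instead compress everything into a single inequality bounding the defect $s_{k+1}-P_n$ below by three independently nonnegative penalties whose coefficients $s_k+s_{k+1}$, $s_{k+1}-s_{k+2}$ and $s_{k+1}/2$ are strictly positive under Assumption~\ref{ass} (with the stated conventions for $k=0$ and $k+1=d$); letting the defect vanish then yields all the conclusions at once, and your closing step correctly recovers $a_{k+1}(t_n)b_{k+1}(t_n)\to 1$ (rather than $-1$) by returning to the decomposition of $P_n$. Both arguments hinge on the same two ingredients---the sign constraint $b_i=-a_i$ on $[k]$ and the spectral gap $s_{k+1}>s_{k+2}$---but yours is more quantitative, giving an explicit modulus relating $s_{k+1}-P_n$ to the mass off the $(k+1)$-st singular directions, while the paper's two-stage version avoids the bookkeeping needed to separate $\gamma_a$ and $\gamma_b$ from their product.
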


Now we define the global minimizer convergent set as 
\[ \gG_b(\mZ) = \{\mW: \mW \stackrel{\mathrm{SVD}}{=} \vu \cdot s \cdot \vv^\top, \ s>b, \vu^\top \mZ_1 \vv>b \}. \]
The only difference compared to $\gR_b(\mZ)$ is that we replace $\mZ$ to $\mZ_1$.
\begin{thm}[Extension of Theorem 3.8 in \citet{eftekhari2020training}]\label{thm:global}
	Under Assumption \ref{ass}, gradient flow converges to a global minimizer of the original problem Eq.~\eqref{eq:obj} from the initialization $\mW(0) \in \gG_b(\mZ), b>0$.
\end{thm}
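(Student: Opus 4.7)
The plan is to use Proposition \ref{prop:base} together with Lemmas \ref{lemma:baruv}--\ref{lemma:getab} to force the flow to align with the top singular pair $(\vu_1,\vv_1)$ of $\mZ$, and to bootstrap off Lemma \ref{lemma:hb} in order to rule out the degenerate limit $\bar s = 0$. The initialization condition $\vu(0)^\top \mZ_1 \vv(0) = s_1 a_1(0) b_1(0) > b$, combined with item 3 of Proposition \ref{prop:base}, gives $s_1 a_1(t) b_1(t) > b$ for every $t \geq 0$, so in particular $a_1(0)+b_1(0) \neq 0$. Item 5 of Proposition \ref{prop:base} applied with $k = 0$ (for which the prior-vanishing hypothesis is vacuous) then yields that $|a_1(t)+b_1(t)|$ is non-decreasing and that $L := \lim_{t\to\infty}(a_1(t)+b_1(t))$ exists, with $|L| \geq |a_1(0)+b_1(0)| > 0$.

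Next I would extract from Lemma \ref{lemma:baruv} a sequence $t_n \to \infty$ along which (\ref{eq:abti})--(\ref{eq:bati}) hold, and apply (\ref{eq:uzv-s}) with $i_0 = 1$ (legal because $L \neq 0$) to obtain $\vu(t_n)^\top \mZ \vv(t_n) \to s_1$. Item 2 of Proposition \ref{prop:base} lifts this to the full limit $\lim_{t\to\infty}\vu(t)^\top\mZ\vv(t) = s_1$. In particular, for $t_*$ large enough, $\vu(t_*)^\top \mZ \vv(t_*) > s_1/2$; since $s(t_*)>0$ by item 1, choosing $c := \min\{s(t_*),\,s_1/2\}/2 > 0$ places $\mW(t_*) \in \gR_c(\mZ)$. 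Applying Lemma \ref{lemma:hb} to the flow shifted to begin at $t_*$ then guarantees $\overline{\mW} = \lim_{t \to \infty}\mW(t) \in \gM_1$, so that $\bar s := \lim_{t \to \infty} s(t) > 0$.

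With $\bar s > 0$ established, I would apply Lemma \ref{lemma:getab} (also with $k = 0$) to the same sequence $\{t_n\}$: the hypothesis that $a_i+b_i$ vanishes on $[k]$ is vacuous, and the condition $\lim_n \sum_j s_j a_j(t_n) b_j(t_n) = s_{k+1} = s_1$ has just been verified. The lemma therefore yields $\lim_n a_i(t_n) = 0 = \lim_n b_j(t_n)$ for all $i, j \neq 1$ and $\lim_n a_1(t_n)b_1(t_n) = 1$, hence $\vu(t_n)\vv(t_n)^\top \to \vu_1\vv_1^\top$. Because $s(t)$ converges, $\dot s(t)\to 0$, and (\ref{eq:grad-s}) combined with $\vu(t)^\top\mZ\vv(t) \to s_1$ forces $\bar s = s_1$; multiplying, $\mW(t_n) \to s_1 \vu_1 \vv_1^\top = \mZ_1$, and convergence of the full trajectory promotes this to $\overline{\mW} = \mZ_1$.

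The main obstacle is checking that the two subsequence-based lemmas can both be invoked at $k = 0$, and that the upgrade from subsequential to full limits (both of $\vu(t)^\top\mZ\vv(t)$ and of $s(t)$) is rigorous; both points hinge on the monotonicity items of Proposition \ref{prop:base}. Leveraging Lemma \ref{lemma:hb} on the time-shifted flow is what lets us bypass any direct ODE comparison to exclude $\bar s = 0$.
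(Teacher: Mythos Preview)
Your proposal is correct and follows essentially the same route as the paper: Proposition~\ref{prop:base}(3,5) feeds Lemma~\ref{lemma:baruv} at $i_0=1$, Lemma~\ref{lemma:getab} at $k=0$ forces the alignment $\vu(t_n)\vv(t_n)^\top\to\vu_1\vv_1^\top$, and Lemma~\ref{lemma:hb} applied to the time-shifted flow rules out $\bar s=0$, after which (\ref{eq:grad-s}) pins down $\bar s=s_1$. One caution on phrasing: the implication ``$s(t)$ converges $\Rightarrow \dot s(t)\to 0$'' is false in general---what actually makes it work here is that the right-hand side of (\ref{eq:grad-s}) converges (since both $s(t)$ and $\vu(t)^\top\mZ\vv(t)$ do), and a bounded function whose derivative has a limit must have that limit equal to zero; the paper's Step~4 is equally terse on this point.
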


\begin{proof}
	Our proof is separated to the following steps.
	\begin{itemize}
		\item \textbf{Step 1.} 
		Since $\mW(0) \in \gG_b(\mZ)$, we have $s_1 a_1(0) b_1(0) = \vu(0)^\top \left(s_1\vu_1 \vv_1^\top\right)\vv(0) =\vu(0)^\top\mZ_1 \vv(0)> b$. By 3) in Proposition \ref{prop:base}, we obtain $\vu(t)^\top\mZ_1\vv(t) = s_1\vu(t)^\top\vu_1\vv_1^\top\vv(t) = s_1a_1(t)b_1(t)$ is non-decreasing. Hence, $s_1a_1(t)b_1(t)\geq s_1 a_1(0) b_1(0)>b>0$, leading to $|a_1(t)+b_1(t)|>0$.
		Additionally, by 5) in Proposition \ref{prop:base}, we get $|\lim_{t \to +\infty} a_1(t)+b_1(t)| \geq |a_1(0)+b_1(0)| > 0$.
		Thus, applying Lemma \ref{lemma:baruv} with $i_0=1$, Eq.~\eqref{eq:uzv-s} holds, i.e., there exists a sequence $\{t_n\}$ with $t_n \to +\infty$, s.t., $\lim_{n \to +\infty} \vu(t_n)^\top\mZ\vv(t_n) = \lim_{n \to +\infty} \sum_{i=1}^d s_j a_j(t_n)b_j(t_n) = s_{1}$. 
		
		\item \textbf{Step 2.} From \textbf{Step 1}, $\lim_{n \to +\infty} \sum_{i=1}^d s_j a_j(t_n)b_j(t_n) = s_{1}$. Then we can employ Lemma \ref{lemma:getab} with $k=0$, showing that $\vu(t_n)\vv(t_n)^\top \to \vu_1\vv_1^\top$.
		
		\item \textbf{Step 3.} From \textbf{Step 1}, we know that $\vu(t_n)^\top\mZ \vv(t_n) \to s_1 > 0$, leading to $\exists N > 0$, $\vu(t_N)^\top \mZ \vv(t_N)>0$. 
		Moreover, since $\mW(0) \in \gG_b(\mZ)$, we have $s(0)>0$. Thus, we have $s(t_N)>0$ by 1) in Proposition \ref{prop:base}. Hence, $\mW(t_N) \in \gR_b(\mZ)$ for some $b>0$. Thus, by Lemma \ref{lemma:hb}, we have $\bar{s}>0$.
		
		\item \textbf{Step 4.} Finally, taking $t_n \to +\infty$ in Eq.~\eqref{eq:grad-s}, we obtain $0 = N\bar{s}^{2-\frac{2}{N}}\left(s_1-\bar{s}\right)$. While by \textbf{Step 3}, $\bar{s}>0$. Hence, $\bar{s}=s_1$. 
	\end{itemize}
	Combining \textbf{Step 2} and \textbf{Step 4}, we obtain $\mW(t_n)$ converges to the global minimizer $s_1\vu_1\vv_1^\top$.
	Finally, we note that by Theorem 5 in \citet{bah2021}, $\mW(t)$ converges. Hence, $\mW(t) \to s_1\vu_1\vv_1^\top$, which is the global minimizer.
\end{proof}

\begin{remark}
    We underline that Theorem \ref{thm:global} does not need to leave out a zero measure set comparing to Theorem \ref{thm:rank}.
    Meanwhile, $\gG_b(\mZ)$ is more general than $\gN_{\alpha}(\mZ)$ as well, since we have less constraint for $\vu^\top\mZ_1\vv$ in $\gG_b(\mZ)$.
    Moreover, we will see the necessity of our global minimizer convergent set by showing counterexamples in Section \ref{subsec:examples}.
\end{remark}


\subsection{Convergence Analysis for All Initialization}\label{subsec:all}
Now we change our perspective to the whole initialization, which includes the trajectories that converge to saddle points, instead of the global minimizer. 
The main conclusion is that the convergent point is decided by the indicator of initialization: $a_i(0)+b_i(0), i \in [d]$.

\begin{thm}\label{thm:ab}
	Under Assumption \ref{ass}, and assume $s(0)>0$. (I) If $k \in [0, d-1], a_i(0)+b_i(0) = 0, \forall i \in [k]$ (if $k= 0$, then no such assumptions), and $a_{k+1}(0)+b_{k+1}(0)\neq 0$, we have $\mW(t) \to s_{k+1}\vu_{k+1}\vv_{k+1}^\top$. (II) Otherwise, i.e., $a_i(0)+b_i(0) = 0, \forall i \in [d]$, then we have $\mW(t) \to \bm{0}$.
\end{thm}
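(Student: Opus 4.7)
The plan is to treat the two cases separately, because they correspond to fundamentally different dynamics: in (I) the trajectory is driven into the singular direction $\vu_{k+1}\vv_{k+1}^\top$ identified by the first nonzero indicator $a_{k+1}(0)+b_{k+1}(0)$, while in (II) every indicator vanishes and the scale $s(t)$ itself decays to zero. The common backbone in both cases is the sign-preservation Proposition \ref{prop:base}~(4)--(5), combined with the subsequential limit Lemmas \ref{lemma:baruv}--\ref{lemma:getab}, and the full-trajectory convergence provided by Theorem 5 of \cite{bah2021}.

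For case (I), Proposition \ref{prop:base}~(4) gives $a_i(t)+b_i(t)\equiv 0$ for every $i\in[k]$ (vacuously if $k=0$), and Proposition \ref{prop:base}~(5) gives that $|a_{k+1}(t)+b_{k+1}(t)|$ is non-decreasing with a strictly positive limit. This unlocks Lemma \ref{lemma:baruv} at $i_0=k+1$: there is a sequence $t_n\to\infty$ along which $\sum_{j=1}^d s_j a_j(t_n) b_j(t_n)\to s_{k+1}$. Since $a_i(t_n)+b_i(t_n)=0$ for $i\in[k]$, the hypotheses of Lemma \ref{lemma:getab} hold, so $\vu(t_n)\vv(t_n)^\top\to \vu_{k+1}\vv_{k+1}^\top$. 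For the singular value, $\vu(t_n)^\top\mZ\vv(t_n)\to s_{k+1}>0$ together with Proposition \ref{prop:base}~(1) places some $\mW(t_N)$ inside $\gR_b(\mZ)$ with $b>0$, so Lemma \ref{lemma:hb} yields $\bar s>0$. Invoking the full-flow convergence from \cite{bah2021} forces $\dot s(t)\to 0$, and passing to the limit in Eq.~\eqref{eq:grad-s} along $t_n$ pins $\bar s=s_{k+1}$. The same full-trajectory convergence upgrades the subsequential limit of $\vu\vv^\top$ to a true limit, delivering $\mW(t)\to s_{k+1}\vu_{k+1}\vv_{k+1}^\top$.

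For case (II), Proposition \ref{prop:base}~(4) forces $b_i(t)=-a_i(t)$ for every $i\in[d]$, so from Eq.~\eqref{eq:uzv} we obtain $\vu(t)^\top\mZ\vv(t)=-\sum_{j=1}^d s_j a_j(t)^2\le 0$. Substituting into Eq.~\eqref{eq:grad-s} gives the differential inequality $\dot s(t)\le -N s(t)^{3-2/N}$ as long as $s(t)>0$. Dividing by $s^{3-2/N}$ and integrating yields
\begin{equation*}
 s(t)^{-(2-2/N)}\ \ge\ s(0)^{-(2-2/N)}+(2N-2)\,t\ \longrightarrow\ +\infty,
\end{equation*}
so $s(t)\to 0$ and hence $\mW(t)=s(t)\vu(t)\vv(t)^\top\to \bm{0}$. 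I expect the main obstacle to lie in case (I): upgrading a subsequential identification of $\vu(t_n)\vv(t_n)^\top$ to the limit of the full trajectory is not tautological, but it will be resolved by the existence of $\lim_{t\to\infty}\mW(t)$ from \cite{bah2021}, which forces every subsequential limit to coincide with $\overline{\mW}$.
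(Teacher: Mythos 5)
Your proposal is correct and follows essentially the same route as the paper's own proof: Proposition \ref{prop:base}~(4)--(5) to activate Lemmas \ref{lemma:baruv} and \ref{lemma:getab} along a subsequence, Lemma \ref{lemma:hb} to rule out $\bar s=0$, passing to the limit in Eq.~\eqref{eq:grad-s} to pin $\bar s=s_{k+1}$, and the convergence of the full flow from \citet{bah2021} to upgrade the subsequential limit; case (II) is handled by the identical differential inequality $\dot s\le -Ns^{3-2/N}$ and integration. No gaps to report.
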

\begin{proof}
	(I) For the first conclusion, our proof is separated to the following steps.
	\begin{itemize}
		\item \textbf{Step 1.} From 5) in Proposition \ref{prop:base}, we could see $|\lim_{t \to +\infty} a_{k+1}(t)+b_{k+1}(t)| \geq |a_{k+1}(0)+b_{k+1}(0)| > 0, \forall t\geq 0$. Thus, applying Lemma \ref{lemma:baruv}, we obtain $\exists \{t_n\}$ with $t_n \to +\infty$, s.t.,
		\[ \lim_{n \to +\infty} \vu(t_n)^\top\mZ\vv(t_n) = \lim_{n \to +\infty} \sum_{i=1}^d s_j a_j(t_n)b_j(t_n) = s_{k+1}. \]
		
		\item \textbf{Step 2.} Hence, by \textbf{Step 1} and Lemma \ref{lemma:getab}, we could obtain $\vu(t_n)\vv(t_n)^\top \to \vu_{k+1}\vv_{k+1}^\top$.
		
		\item \textbf{Step 3.} By \textbf{Step 1}, $\vu(t_n)^\top\mZ\vv(t_n) \to s_{k+1} > 0$, leading to $\exists N > 0$, $\vu(t_N)^\top \mZ \vv(t_N)>0$. 
		Moreover, since $s(0)>0$, we have $s(t_N)>0$ by 1) in Proposition \ref{prop:base}. Hence, $\mW(t_N) \in \gR_b(\mZ)$ for some $b>0$. Thus, by Lemma \ref{lemma:hb}, we have $\bar{s}>0$.
		
		\item \textbf{Step 4.} Finally, taking $t_n \to +\infty$ in Eq.~\eqref{eq:grad-s}, by \textbf{Step 1}, we obtain $0 = N\bar{s}^{2-\frac{2}{N}}\left(s_{k+1}-\bar{s}\right)$. While we have $\bar{s}>0$ by \textbf{Step 3}. Hence, $\bar{s}=s_{k+1}$. 
	\end{itemize}
	Combining \textbf{Step 2} and \textbf{Step 4}, we obtain that $\mW(t_n)$ converges to $s_{k+1}\vu_{k+1}\vv_{k+1}^\top$.
	Finally, we note that by Theorem 5 in \citet{bah2021}, $\mW(t)$ converges. Hence, $\mW(t) \to s_{k+1}\vu_{k+1}\vv_{k+1}^\top$.

	(II) For the second conclusion, i.e, assume $a_i(0)+b_i(0) = 0, \forall i \in [d]$. Then by 4) in Proposition \ref{prop:base}, we obtain $a_i(t)+b_i(t) = 0, \forall i \in[d], t\geq 0$. Hence, we get 
	\[ \vu(t)^\top\mZ \vv(t) \stackrel{\eqref{eq:uzv}}{=}\sum_{j=1}^{d} s_j a_j(t)b_j(t) = -\sum_{j=1}^{d} s_j a^2_j(t) \leq 0 \Rightarrow \dot{s}(t) \stackrel{\eqref{eq:grad-s}}{=} N s(t)^{2-\frac{2}{N}}\left(\vu(t)^\top\mZ\vv(t)-s(t)\right) \leq -N s(t)^{3-\frac{2}{N}}. \]
	By solving the ordinary differential equation (ODE) above, we derive that
	\[ \frac{N}{2-2N}s(t)^{\frac{2}{N}-2} -\frac{N}{2-2N}s(0)^{\frac{2}{N}-2}\le -Nt \Rightarrow s(t)^{\frac{2}{N}-2} - s(0)^{\frac{2}{N}-2}\geq (2N-2)t. \]
	Therefore, we obtain
	\[ s(t)^{2-\frac{2}{N}} \leq \left[s(0)^{\frac{2}{N}-2} + (2N-2)t \right]^{-1} \Rightarrow \bar{s} = \lim_{t \to +\infty}s(t) \leq \lim_{t \to +\infty}\left[s(0)^{\frac{2}{N}-2} + (2N-2)t \right]^{-\frac{N}{2N-2}} = 0. \]
	That is, $\mW(t) \to \bm{0}$.
\end{proof}

\begin{remark}
	We note that $s(0)=0$ indicates that $\mW(0)=0$, and $\mW(t)=\bm{0}, \forall t\geq 0$, which is a trivial case.
	Moreover, we also give a convergence rate for the second conclusion, i.e., the rate for the convergence to the original point.
\end{remark}

\subsection{Some Intuitive Examples}\label{subsec:examples}

Previous sections have shown the convergent behavior of arbitrary initialization. To give a better understanding of our results and the training behavior, we list some examples below. 

\paragraph{Example 1.}
If $\mW(0) = -s(0)\vu_i\vv_i^\top $ for some $i \in [\min\{d_y, d_x\}]$, we have $\forall t \geq 0, \dot{\vu}(t)=\bm{0}, \dot{\vv}(t)=\bm{0}$ and $\vu(t)^\top\mZ\vv(t) = s_i$ from Eqs.~\eqref{eq:grad-u} and \eqref{eq:grad-v}. Thus, we obtain the ODE of $s(t)$ as follows
\[ \dot{s}(t) \stackrel{\eqref{eq:grad-s}}{=} N s(t)^{2-\frac{2}{N}}\left(\vu(t)^\top\mZ\vv(t)-s(t)\right) = -N s(t)^{2-\frac{2}{N}}\left(s_i+s(t)\right) \leq 0. \]
So we could obtain $s(t) \to 0$. We see that $\mW(t) \to \bm{0}$, which is a rank-unstable trajectory. Thus, the gradient flow of Eq.~\eqref{eq:obj} does not converges to a global minimizer. 

\paragraph{Remark.} We note that our Theorem \ref{thm:ab} covers this example by choosing $k=d$, i.e., the second case in Theorem \ref{thm:ab}.
Moreover, we can see $-s(0)\vu_i\vv_i^\top \not\in \gR_{b}(\mZ)$, since $-\vu_i^\top\mZ\vv_i =-s_i \leq 0$. Thus, we could not further improve our definition of $\gR_{b}(\mZ)$. 

\paragraph{Example 2.}
If $\mW(0) = s(0)\vu_i\vv_i^\top$ for some $i \in [d]$ and $s(0)>0$, then from Eqs.~\eqref{eq:grad-u} and \eqref{eq:grad-v}, we obtain $\dot{\vu}(t)=\bm{0}, \dot{\vv}(t)=\bm{0}$ and $\vu(t)^\top\mZ\vv(t) = s_i$,  $\forall t\geq 0$.
Thus, we obtain the ODE of $s(t)$ as follows
\[ \dot{s}(t) \stackrel{\eqref{eq:grad-s}}{=} Ns(t)^{2-\frac{2}{N}}\left(\vu(t)^\top\mZ\vv(t)-s(t)\right) = N s(t)^{2-\frac{2}{N}}\left(s_i-s(t)\right). \]
Hence, we obtain $s(t) \to s_i$. 
Once $i\neq 1$, we could see that $\mW(t) \to s_i\vu_i\vv_i^\top$, i.e., the gradient flow of Eq.~\eqref{eq:obj} does not converges to a global minimizer. 

\paragraph{Remark.} We note that our Theorem \ref{thm:ab} covers this example by choosing $k=i$. Moreover, if $i \neq 1$, we can see $s(0)\vu_i\vv_i^\top \not\in \gG_{b}(\mZ)$, since $\vu_i^\top\mZ_1\vv_i=0$. Thus, we could not further improve our definition of $\gG_{b}(\mZ)$. 

\paragraph{Example 3.} \citet[Remark 42]{bah2021}: If $\mZ \succeq \bm{0}$, and $s(0)>0, \vu(0)=-\vv(0)$. Then from Eqs.~\eqref{eq:grad-u} and \eqref{eq:grad-v}, we obtain $\dot{\vu}(t) = -\dot{\vv}(t)$ if $\vu(t) = -\vv(t)$. Hence, we get $\vu(t)=-\vv(t), \forall t \geq 0$.
Thus, we obtain $\dot{s}(t)$ as follows:
\[ \dot{s}(t) \stackrel{\eqref{eq:grad-s}}{=} N s(t)^{2-\frac{2}{N}}\left(\vu(t)^\top\mZ\vv(t)-s(t)\right) =Ns(t)^{2-\frac{2}{N}}\left(-\vu(t)^\top\mZ\vu(t)-s(t)\right) \leq - Ns(t)^{3-\frac{2}{N}} \leq 0, \]
leading to $s(t) \to 0$. We could see that $\mW(t) \to \bm{0}$, which is a rank-unstable trajectory. Thus, the gradient flow of Eq.~\eqref{eq:obj} does not converges to a global minimizer. 

\paragraph{Remark.} We note that our Theorem \ref{thm:ab} covers this example by choosing $k=d$, i.e., the second case in Theorem \ref{thm:ab}.
Moreover, we can see $-s(0)\vu(0)\vu(0)^\top \not\in \gR_{b}(\mZ)$, since $-\vu(0)^\top\mZ\vu(0) \leq 0$. Thus, we could not further improve our definition of $\gR_{b}(\mZ)$ as well. 

\section{Convergence Rates to Global Minimizers}\label{sec:rates}
We briefly show the specific convergence rates in this section. 
We note that the proof of Theorem \ref{thm:ab} has already shown the rate for $\mW(t) \to \bm{0}$. Now we consider the rates to the global minimizers under Assumption \ref{ass}, which is common in previous works. That is, from Theorem \ref{thm:ab}, we consider the initialization which satisfies $a_1(0)+b_1(0) \neq 0 $. 
Typically, the trajectories can be divided into three stages\footnote{We only list the case $N \geq 3$, which is more common in practice. The theorem in this section also focus on the case $N\ge 3$. We leave the case of $N=2$ in Appendix \ref{case:N=2}, where we provide the similar results and convergence rates.}:

\paragraph{Stage 1.}
For $t \in [0, t_1]$, where $t_1 := \inf \{t: a_1(t)b_1(t) \ge0 \} < +\infty$, we have $a_1(t)b_1(t) \leq 0$, and the rates are
\[ 1-a_1(t)b_1(t) = \gO\left([(N-2)t]^{-\frac{c_1}{N-2}}\right), \ s(t)=\Omega\left([(N-2)t]^{-\frac{N}{N-2}}\right). \text{ [Theorem \ref{thm:stage1}] } \]

\paragraph{Stage 2.}
For $t \in (t_1, t_2]$, where $t_2:= \inf \{t: \vu(t)^\top\mZ\vv(t) \ge s(t)\}$, we have $a_1(t)b_1(t) > 0, \dot{s}(t)\le 0$, and
\[ 1-a_1(t)b_1(t) = \gO\left([(N-2)t]^{-\frac{c_2}{N-2}}\right), \ s(t)=\Omega\left([(N-2)t]^{-\frac{N}{N-2}}\right). \text{ [Theorem \ref{thm:stage2}] } \]

\paragraph{Stage 3.}
For $t \in (\max\{t_1, t_2\}, +\infty)$, we have $a_1(t)b_1(t) > 0$ and $\dot{s}(t) \geq 0$, and the rate are
\[ 1-a_1(t)b_1(t) = \gO\left(e^{-c_5t}\right), \quad  s_1-s(t) = \gO \left(e^{-\min\{c_5,c_6\}t}\right). \text{ [Theorem \ref{thm:stage3}] } \]

\begin{remark}
    We explain other minor cases: 1) If $t_1=0$, then the Stage 1 vanishes; 2) If $t_1\ge t_2$, then the Stage 2 vanishes;
    3) If $t_2=+\infty$\text{ [Theorem \ref{thm:stage2.5}] }, then we have similar rates as Stage 3: $1-a_1(t)b_1(t) = \gO(e^{-c_3t}), \ |s(t)-s_1| = \gO(e^{-c_4t})$. However, this case is not suitable in our framework.
\end{remark}

The convergence rates go through a polynomial to a linear rate, which is intuitively correct in practice.
Before we give the detail of analysis, we need some preparation in advance. Theorem \ref{thm:monotone} below shows the characteristic through $t_1$ and $t_2$. 

\begin{thm}\label{thm:monotone}
    Let $c_1(t):=a_1(t)b_1(t)$, $t_1:=\inf \{t: a_1(t)b_1(t) \ge0 \}$, $t_2:=\inf \{t: \vu(t)^\top\mZ\vv(t) \ge s(t) \}$. 
    Then we have
    (I) $t_1 < +\infty$ if $a_1(0)+b_1(0) \neq 0$, and $\dot{c}_1(t) \ge 0$ for all $t\ge 0$;
    (II) $\dot{s}(t)\le 0$ for $t\in[0,t_2)$, and $\dot{s}(t)\ge 0$ for $t\in[t_2,+\infty)$.
\end{thm}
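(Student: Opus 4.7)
The statement splits into two self-contained parts, and both reduce to direct manipulation of the scalar ODEs in Eqs.~\eqref{eq:grad-s} and \eqref{eq:grad-ab} combined with the invariants in Proposition~\ref{prop:base}.

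For Part (I), I would first compute $\dot{c}_1$ explicitly. Writing $S(t):=\vu(t)^\top\mZ\vv(t)=\sum_{j=1}^{d}s_j a_j(t)b_j(t)$ and substituting Eq.~\eqref{eq:grad-ab} into $\dot{c}_1=\dot{a}_1 b_1+a_1\dot{b}_1$, a short computation yields
\[
\dot{c}_1(t)=s(t)^{1-\frac{2}{N}}\Big[s_1\big(a_1(t)^2+b_1(t)^2\big)-2a_1(t)b_1(t)\,S(t)\Big].
\]
Then the aggregate Cauchy--Schwarz bound $|S(t)|\le s_1\sum_{j=1}^{d}|a_j(t)b_j(t)|\le s_1$, which follows from Eq.~\eqref{eq:uv-exp}, combined with the AM--GM inequality $2|a_1 b_1|\le a_1^2+b_1^2$, gives
\[
s_1(a_1^2+b_1^2)-2 a_1 b_1 S \;\ge\; s_1(a_1^2+b_1^2)-2 s_1 |a_1 b_1| \;\ge\; 0,
\]
so $\dot{c}_1(t)\ge 0$. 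For the finiteness of $t_1$, I would invoke Theorem~\ref{thm:ab}(I) with $k=0$, which yields $\mW(t)\to s_1\vu_1\vv_1^{\top}$; together with $s(t)\to s_1>0$, this forces $a_1(t)b_1(t)=\vu_1^\top(\vu(t)\vv(t)^\top)\vv_1\to 1$, so $c_1(t)$ becomes strictly positive for all large $t$.

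For Part (II), on $[0,t_2)$ the definition of the infimum together with continuity gives $\vu(t)^\top\mZ\vv(t)<s(t)$, and Eq.~\eqref{eq:grad-s} immediately implies $\dot{s}(t)\le 0$. The nontrivial direction is persistence on $[t_2,+\infty)$: set $h(t):=\vu(t)^\top\mZ\vv(t)-s(t)$, so that by continuity $h(t_2)=0$ (or $h(0)\ge 0$ if $t_2=0$). From Eq.~\eqref{eq:grad-s} one has $\dot{h}(t)=\tfrac{d}{dt}(\vu^\top\mZ\vv)-N s(t)^{2-2/N}h(t)$. Introducing the integrating factor $g(t):=\exp\!\Big(N\int_{t_2}^{t}s(\tau)^{2-2/N}\,d\tau\Big)>0$ produces
\[
\tfrac{d}{dt}\big(g(t)h(t)\big)=g(t)\,\tfrac{d}{dt}\big(\vu(t)^\top\mZ\vv(t)\big)\ge 0,
\]
where non-negativity uses 2) in Proposition~\ref{prop:base}. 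Hence $g(t)h(t)\ge g(t_2)h(t_2)=0$, so $h(t)\ge 0$, and $\dot{s}(t)\ge 0$ on $[t_2,+\infty)$ via Eq.~\eqref{eq:grad-s}.

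The main obstacle is in Part~(I): one must recognize that the cross term $2 a_1 b_1 S$ can be controlled uniformly by the quadratic $s_1(a_1^2+b_1^2)$, via the aggregate bound $|S|\le s_1$ rather than any finer information about the off-diagonal components. Everything else is either a routine ODE computation or the standard integrating-factor/Gronwall persistence argument for the sign of $h$.
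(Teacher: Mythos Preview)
Your proof is correct. Part~(I) matches the paper's argument: the paper simply cites item~3) of Proposition~\ref{prop:base} for $\dot{c}_1\ge 0$ (whose proof is exactly the Cauchy--Schwarz/AM--GM computation you wrote out) and then invokes Theorem~\ref{thm:ab} with $k=0$ for $t_1<+\infty$, just as you do.

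Part~(II) is where your approach genuinely differs. The paper proceeds by a case split on whether the singular vectors ever become stationary: it introduces $T=\inf\{t:\dot{\vu}(t)=\bm{0},\dot{\vv}(t)=\bm{0}\}$, then for $t\in[t_2,T)$ invokes the auxiliary Lemma~\ref{lemma:s(t)-two} (which says $h=0$ implies $\dot h>0$ provided the vectors are not stationary) together with an external sign-persistence lemma from \cite{lin2021faster}, and separately handles $t\ge T$ via Proposition~\ref{prop:station-vector}. Your integrating-factor argument bypasses all of this machinery: by rewriting $\dot h + N s^{2-2/N} h = \tfrac{d}{dt}(\vu^\top\mZ\vv)$ and multiplying through by $g(t)=\exp\big(N\int_{t_2}^{t}s^{2-2/N}\big)$, the monotonicity of $\vu^\top\mZ\vv$ from Proposition~\ref{prop:base}(2) alone forces $g h$ to be non-decreasing from its non-negative value at $t_2$. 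This is shorter, avoids the stationary/non-stationary case distinction entirely, and does not need the external citation. The paper's route does give slightly finer qualitative information (e.g., what happens once the vectors freeze), but for the purpose of establishing $\dot s\ge 0$ on $[t_2,\infty)$ your argument is cleaner.
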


\begin{remark}
    (I) in Theorem \ref{thm:monotone} tells us that the first stage, if exists, only appears a finite time in the beginning.
    (II) in Theorem \ref{thm:monotone} shows the induced weight norm ($\norm{\mW(t)}_F = s(t)$) goes through descending and ascending periods. If the initial induced weight norm starts with descending behavior, then it could descend forever, or it will change to ascending and continue increasing to $s_1$. If the initial induced weight norm begins with ascending behavior, then it would increase to $s_1$ directly. Such induced weight norm behavior also appears in deep linear networks with the logit loss \cite{lin2021faster}.
\end{remark}

\subsection{Convergence Rates of $s(t)$: Stage 1 and Stage 2}\label{sec:s-s12}
At Stage 1 and Stage 2, we have $\dot{s}(t) \leq 0$ from Theorem \ref{thm:monotone}. Now we first give a global lower bounds for the singular value $s(t)$, which may work as a proper lower bound within Stage 1 and Stage 2. Meanwhile, we provide a global upper bound of $s(t)$.

\begin{thm}\label{thm:s}
Assume $s(0)>0$, then we have $0<s(t)< s_0 := \max\{s_1,s(0)\}$ for all $t \geq 0$. Further we have
\begin{align}
    &\text{when } N = 2: s(t) \ge s(0) e^{-2(s_1+s_0)t}, i.e., s(t) = \Omega\left(e^{-2(s_1+s_0)t}\right); \label{eq:s-upper-2}\\
    &\text{when } N \geq 3: s(t) \geq \left[(s_1+s_0)(N-2)t+s(0)^{\frac{2}{N}-1}\right]^{-\frac{N}{N-2}}, i.e., s(t)=\Omega\left([(N-2)t]^{-\frac{N}{N-2}}\right). \label{eq:s-upper-3}
\end{align}
\end{thm}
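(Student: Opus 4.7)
The plan is to prove the three inequalities in sequence, using only the trivial spectral bound $|\vu(t)^\top \mZ \vv(t)| \leq s_1$ together with the singular-value ODE \eqref{eq:grad-s}. Positivity $s(t) > 0$ is immediate from 1) in Proposition~\ref{prop:base}, given the hypothesis $s(0) > 0$.

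For the upper bound, the key observation is that $\vu(t), \vv(t)$ are unit vectors, so $\vu(t)^\top \mZ \vv(t) \leq s_1(\mZ) = s_1$. Inserting this into Eq.~\eqref{eq:grad-s} yields
\[ \dot{s}(t) \leq N s(t)^{2-\frac{2}{N}}(s_1 - s(t)), \]
which is strictly negative on $\{s(t) > s_1\}$. A standard first-exit-time argument then rules out $s(t) > s_0$: if $s(0) \leq s_1$, any alleged first crossing of the level $s_1$ would require $\dot{s}(t) \geq 0$ at that point, contradicting the displayed bound; if $s(0) > s_1$, the same bound forces $\dot{s}(0) < 0$, so $s$ starts decreasing, and any later return to the initial level $s_0 = s(0)$ would again need $\dot{s} \geq 0$ at a point with $s > s_1$, which the same inequality forbids.

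For the lower bound, I would combine the symmetric inequality $\vu(t)^\top \mZ \vv(t) \geq -s_1$ with the just-established $s(t) \leq s_0$ to get the autonomous differential inequality $\dot{s}(t) \geq -N(s_1 + s_0)\, s(t)^{2-\frac{2}{N}}$. When $N = 2$, this is $(\ln s)'(t) \geq -2(s_1 + s_0)$, which integrates directly to \eqref{eq:s-upper-2}. When $N \geq 3$, I would separate variables and integrate from $0$ to $t$ to obtain
\[ \frac{s(t)^{\frac{2}{N}-1} - s(0)^{\frac{2}{N}-1}}{\frac{2}{N}-1} \geq -N(s_1+s_0)\, t; \]
since $\frac{2}{N}-1 < 0$, multiplying through flips the inequality into $s(t)^{\frac{2}{N}-1} \leq s(0)^{\frac{2}{N}-1} + (N-2)(s_1+s_0)\, t$, and raising both sides to the negative exponent $-\frac{N}{N-2}$ (which again flips the inequality) recovers \eqref{eq:s-upper-3}.

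The main obstacle is essentially cosmetic: the strict inequality $s(t) < s_0$ cannot hold literally at $t=0$ when $s(0) \geq s_1$, so the argument really establishes $s(t) \leq s_0$ globally, with strict inequality propagating for $t > 0$ (once it holds initially, the exit argument above prevents re-equality). Beyond this subtlety, both the first-exit argument for the upper bound and the separable-ODE integration for the lower bound are routine, and no invariants other than those recorded in Proposition~\ref{prop:base} are required.
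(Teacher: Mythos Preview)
Your proposal is correct and follows essentially the same approach as the paper: derive the differential inequality $\dot{s}(t)\le Ns(t)^{2-\frac{2}{N}}(s_1-s(t))$ for the upper bound and $\dot{s}(t)\ge -N(s_1+s_0)s(t)^{2-\frac{2}{N}}$ for the lower bound, then integrate. The only cosmetic differences are that the paper handles the upper bound via a comparison solution $\tilde{s}(t)$ rather than a first-exit argument, and reaches the lower differential inequality through the intermediate estimate \eqref{eq:pre-sab} (set up there for reuse in later theorems) rather than the one-line spectral bound $\vu(t)^\top\mZ\vv(t)\ge -s_1$ you use; your route is slightly cleaner, and your observation about the strict inequality failing at $t=0$ when $s(0)\ge s_1$ is a fair caveat that applies to the paper's statement as well.
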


We note that different lower bounds of $s(t)$ lead to different rates for the case $N=2$ and $N\geq 3$. For brevity, we only give the results for $N\geq 3$, and leave the simple case $N=2$ in Appendix \ref{case:N=2}.

\subsection{Convergence Rates of $a_1(t)b_1(t)$: Stage 1}
In the case where $a_1(0)b_1(0)<0$, we prove that the case will reduce to the case $a_1(0)b_1(0)\ge 0$ in a finite time when $a_1(0)+b_1(0)\neq 0$. We further give an upper bound for time staying in Stage 1 and a lower bound of $a_1(t)b_1(t)$.

\begin{thm}\label{thm:stage1}
	Suppose $N \ge 3$, $a_1(0)b_1(0)< 0$ and $a_1(0) + b_1(0) \neq 0$. Then we have 
	\[ 1-a_1(t)b_1(t) = \gO([(N-2)t]^{-\frac{c_1}{N-2}}),  0 \leq t \leq t_1, \]
	where $c_1 = (s_1+s_0)^{-1}$, and $s_0$ inherited from Theorem \ref{thm:s}. 
	Furthermore, we have the upper bound of $t_1$ below:
	\begin{equation}\label{eq:t1-upper}
	    t_1 \leq \frac{s(0)^{\frac{2}{N}-1}}{2(s_1+s_0)(N-2)} \cdot \left[ \left(\left|\frac{a_1(0)-b_1(0)}{a_1(0)+b_1(0)}\right|\right)^{(s_1+s_0)(N-2)} - 1 \right].
	\end{equation}
	Additionally, we could obtain
	\begin{equation}\label{eq:ab-lower-1}
	    a_1(t)b_1(t) = \Omega\left(\frac{(a_1(0)+b_1(0))^2}{N}\right), \text{ if } t \geq \frac{s(0)^{\frac{2}{N}-1}}{2(s_1+s_0)(N-2)} \cdot \left[ e\left(\left|\frac{a_1(0)-b_1(0)}{a_1(0)+b_1(0)}\right|\right)^{(s_1+s_0)(N-2)} - 1 \right].
	\end{equation}
\end{thm}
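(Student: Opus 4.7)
The plan is to reparametrize via $p(t):=a_1(t)+b_1(t)$ and $q(t):=a_1(t)-b_1(t)$, which diagonalizes the Stage~1 dynamics and links cleanly to $a_1 b_1 = (p^2-q^2)/4$. With $S(t):=\sum_{j=1}^d s_j a_j(t)b_j(t)=\vu(t)^\top\mZ\vv(t)$, taking sum and difference of the $i=1$ case of Eq.~\eqref{eq:grad-ab} yields
\begin{equation*}
\dot p(t) = s(t)^{1-2/N}(s_1-S(t))\,p(t), \qquad \dot q(t) = -s(t)^{1-2/N}(s_1+S(t))\,q(t).
\end{equation*}
Since $|S(t)|\le s_1(\mZ)=s_1$, both coefficients $s_1\pm S$ are nonnegative, so $|p|$ is non-decreasing and $|q|$ is non-increasing. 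The hypotheses $a_1(0)b_1(0)<0$ and $a_1(0)+b_1(0)\neq 0$ give $0<|p(0)|<|q(0)|$.

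The decisive step is that subtracting the log-derivatives cancels $S$ exactly:
\begin{equation*}
\frac{d}{dt}\ln\!\left|\frac{p(t)}{q(t)}\right| \;=\; s(t)^{1-2/N}\bigl[(s_1-S)+(s_1+S)\bigr] \;=\; 2s_1\,s(t)^{1-2/N}.
\end{equation*}
Integrating gives $\ln|p(t)/q(t)| = \ln|p(0)/q(0)| + 2s_1\int_0^t s(\tau)^{1-2/N}d\tau$. By definition of $t_1$, $a_1(t_1)b_1(t_1)=0$ forces $|p(t_1)|=|q(t_1)|$, so
\begin{equation*}
\ln\!\left|\frac{q(0)}{p(0)}\right| \;=\; 2s_1\int_0^{t_1} s(\tau)^{1-2/N}d\tau.
\end{equation*}
Plugging in the lower bound $s(\tau)^{(N-2)/N}\ge\bigl[(s_1+s_0)(N-2)\tau+s(0)^{2/N-1}\bigr]^{-1}$ from Theorem~\ref{thm:s} produces a logarithmic lower bound for the right-hand side; inverting the resulting inequality (and using $s_1\le s_0$ to simplify numerical constants) yields the upper bound Eq.~\eqref{eq:t1-upper} on $t_1$, which is automatically finite, establishing the first claim.

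For the polynomial rate on $1-a_1(t)b_1(t)$, I would write $1-a_1 b_1 = (4-p^2+q^2)/4$ and combine (i) the trivial bound $|1-a_1 b_1|\le 2$ on Stage~1 with (ii) $q(t)^2 = q(0)^2\exp(-2\int_0^t s^{1-2/N}(s_1+S)d\tau)$, bounded via the key ratio identity and $|p|\le 2$ by $|q(t)|\le 2|q(0)/p(0)|\exp(-2s_1\int_0^t s^{1-2/N}d\tau)$; Theorem~\ref{thm:s}'s lower bound on $s$ converts the exponential of a logarithm into the stated polynomial decay $[(N-2)t]^{-c_1/(N-2)}$. For the lower bound Eq.~\eqref{eq:ab-lower-1}, the time $\tilde t$ is designed so that $2s_1\int_0^{\tilde t} s^{1-2/N}d\tau$ exceeds $\ln|q(0)/p(0)|$ by a positive constant of order $1/[(s_1+s_0)(N-2)]$; this forces $|p(\tilde t)/q(\tilde t)|\ge\exp\bigl(1/[(s_1+s_0)(N-2)]\bigr)$, giving $p^2-q^2 \ge (1-e^{-2/[(s_1+s_0)(N-2)]})\,p(\tilde t)^2 \gtrsim p(0)^2/N$ via $1-e^{-x}\ge x/2$ for small $x$ and the monotonicity of $|p|$; dividing by $4$ yields the $\Omega((a_1(0)+b_1(0))^2/N)$ bound.

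The main obstacle is purely bookkeeping of constants: aligning the naive exponent $(s_1+s_0)(N-2)/(2s_1)$ from the inversion with the theorem's cleaner $(s_1+s_0)(N-2)$ requires carefully exploiting $s_1\le s_0$ and collecting the factor of $2$ between the denominators. Structurally the argument is transparent---the ``aligned'' projection $|p|$ grows at integrated rate $s_1\int s^{1-2/N}$ while the ``misaligned'' projection $|q|$ decays at the same rate after the cancellation of $S$, so the polynomial lower bound on $s$ drives the polynomial bound on $t_1$, and pushing the integral an extra constant past the crossover bootstraps into the explicit lower bound on $a_1(t)b_1(t)$.
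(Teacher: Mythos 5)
Your proposal is essentially the paper's own argument in different coordinates: the paper tracks the ratio $b_1(t)/a_1(t)$ and the quantity $\ln\sqrt{(a_1+b_1)/(a_1-b_1)}$, which is exactly $\tfrac12\ln|p(t)/q(t)|$ in your notation, feeds in the lower bound on $s(t)$ from Theorem~\ref{thm:s}, reads off the bound on $t_1$ from the time at which the ratio crosses $1$, and obtains Eq.~\eqref{eq:ab-lower-1} by pushing the integral a further constant of order $\frac{1}{(s_1+s_0)(N-2)}$ past the crossover. The structure, the key cancellation, and all three conclusions are derived the same way.

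The one substantive discrepancy is the factor $s_1$ in your identity $\frac{d}{dt}\ln|p(t)/q(t)| = 2 s_1 s(t)^{1-2/N}$, and here your computation is the correct one: differentiating via Eq.~\eqref{eq:grad-ab} gives $\frac{d}{dt}(b_1/a_1) = s_1\, s(t)^{1-2/N}\bigl(1-(b_1/a_1)^2\bigr)$, whereas the paper's proof drops the $s_1$. Consequently your derivation yields the exponent $(s_1+s_0)(N-2)/s_1$ in Eq.~\eqref{eq:t1-upper} and $c_1=s_1/(s_1+s_0)$ in the rate, which agree with the stated constants only when $s_1=1$. Your proposed repair --- ``exploiting $s_1\le s_0$'' --- cannot reconcile them: $s_1\le s_0$ gives no control of $s_1$ relative to $1$, and when $s_1<1$ your bound on $t_1$ is strictly weaker than the stated one (and the time threshold in Eq.~\eqref{eq:ab-lower-1} must be enlarged accordingly). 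This is a defect of the theorem's stated constants rather than of your argument; with $c_1=s_1/(s_1+s_0)$ and the exponent $(s_1+s_0)(N-2)/s_1$ everything you wrote goes through. One minor presentational point: on Stage~1 one always has $1-a_1(t)b_1(t)\in[1,2]$, so the claimed $\gO$ bound there amounts to showing that $[(N-2)t]^{-c_1/(N-2)}$ stays bounded below on $[0,t_1]$, which follows from the $t_1$ bound; the decay of $q(t)^2$ you invoke is not needed for that part.
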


\begin{remark}
    The upper bound of $t_1$ in Theorem \ref{thm:stage1} shows that if $a_1(0)+b_1(0) \approx 0$, then the Stage 1 would last for a long time according to Eq.~\eqref{eq:t1-upper}. Moreover, Theorem \ref{thm:ab} has already shown that once $a_1(0)+b_1(0) = 0$, the trajectory would not converge to the global minimizer. 
    Hence, our finding in Theorem \ref{thm:stage1} is consistent with Theorem \ref{thm:ab}.
    Additionally, we also give guarantee of the trajectory to arrive at $\gG_b(\mZ)$ for some $b>0$ from Eq.~\eqref{eq:ab-lower-1}.
    That is, the trajectory enters in our global minimizer convergent set. 
\end{remark}

\subsection{Convergence Rates of $a_1(t)b_1(t)$: Stage 2}
Based on Theorem \ref{thm:stage1}, we can see after finite time, we obtain $a_1(t)b_1(t) > 0$, that is, the trajectory enters in the global minimizer convergent set $\gG_b(\mZ)$. 
In the following, we begin with $a_1(0)b_1(0) > 0$ for short. We discover a similar convergence rate in Stage 2.

\begin{thm}\label{thm:stage2}
	Assume $N\ge 3$, $a_1(0)b_1(0) > 0$. Then we have
	\begin{equation*}
	    1-a_1(t)b_1(t) = \gO\left([(N-2)t]^{-\frac{c_2}{N-2}}\right),
	\end{equation*}
	 where $c_2 = \frac{2(s_1-s_2)}{s_1+s_0}$, and $s_0$ inherited from Theorem \ref{thm:s}.
\end{thm}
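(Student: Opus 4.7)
The plan is to track $c_1(t) := a_1(t)b_1(t)$ via a logistic-type substitution, turning a logarithmic differential inequality for $\phi(t) := c_1(t)/(1-c_1(t))$ into polynomial decay of $e(t) := 1-c_1(t)$. The essential upstream fact, from Theorem \ref{thm:monotone}(I), is that $c_1(t)$ is non-decreasing, so from the assumption $a_1(0)b_1(0) > 0$ we have $c_1(t) \in [c_1(0), 1)$ throughout Stage 2, hence $\phi$ is well-defined and positive.

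First, I would differentiate $c_1$ using Eq.~\eqref{eq:grad-ab}, obtaining $\dot{c}_1 = s(t)^{1-2/N}\bigl[s_1(a_1^2 + b_1^2) - 2c_1 r\bigr]$, where $r(t) := \vu(t)^\top\mZ\vv(t) = \sum_{j=1}^{d} s_j a_j b_j$ from Eq.~\eqref{eq:uzv}. Using the identity $a_1^2 + b_1^2 = (a_1-b_1)^2 + 2c_1$, splitting $r = s_1 c_1 + R$ with $R := \sum_{j\geq 2} s_j a_j b_j$, and invoking the AM--GM bound $|a_j b_j|\le (a_j^2+b_j^2)/2$ together with the normalization $\sum_{j\ge 2}(a_j^2+b_j^2) = 2 - a_1^2 - b_1^2 = 2e - (a_1-b_1)^2$ from Eq.~\eqref{eq:uv-exp}, I obtain $|R|\le s_2 e - (s_2/2)(a_1-b_1)^2$. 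Substituting back and observing that the $s_1 c_1^2$ contributions cancel yields $\dot{c}_1 \ge s(t)^{1-2/N}\bigl[(s_1+c_1 s_2)(a_1-b_1)^2 + 2c_1(s_1-s_2)e\bigr] \ge 2(s_1-s_2)\,s(t)^{1-2/N}\,c_1(1-c_1)$, after dropping the non-negative first summand.

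Second, I would make the logistic substitution $\phi := c_1/(1-c_1)$, for which $\dot{\phi} = \dot{c}_1/(1-c_1)^2$, so the lower bound above becomes the clean log-derivative inequality $\dot{\phi}/\phi \ge 2(s_1-s_2)\,s(t)^{1-2/N}$, in which the factor $c_1$ has been eliminated from the coefficient. Using Theorem \ref{thm:s} to lower bound $s(\tau)^{1-2/N} \ge \bigl[(s_1+s_0)(N-2)\tau + s(0)^{2/N-1}\bigr]^{-1}$ (valid for $N\ge 3$), integration over $[0,t]$ gives $\log \phi(t) - \log\phi(0) \ge \tfrac{2(s_1-s_2)}{(s_1+s_0)(N-2)}\log\!\bigl(1 + \tfrac{(s_1+s_0)(N-2)t}{s(0)^{2/N-1}}\bigr)$. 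Exponentiating and using $e(t) = 1/(1+\phi(t)) \le 1/\phi(t)$ yields $e(t) \le \phi(0)^{-1}\bigl(1 + \tfrac{(s_1+s_0)(N-2)t}{s(0)^{2/N-1}}\bigr)^{-c_2/(N-2)}$, which is precisely the stated $\gO\bigl([(N-2)t]^{-c_2/(N-2)}\bigr)$ bound with $c_2 = 2(s_1-s_2)/(s_1+s_0)$.

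The hard part is the clean algebraic lower bound $\dot{c}_1 \ge 2(s_1-s_2)\,c_1(1-c_1)\,s^{1-2/N}$ with the precise spectral-gap coefficient $s_1-s_2$. A naive bound $r\le s_1$ gives only $\dot{c}_1 \ge 2(s_1-r)c_1\,s^{1-2/N}$, which loses control as $r\to s_1$. The combination of the orthonormality identity with AM--GM must be engineered so that exactly one copy of $s_2 e$ subtracts from $2 s_1 c_1 e$ to produce $2(s_1-s_2) c_1 e$, while leaving a non-negative residual proportional to $(a_1-b_1)^2$ that can be harmlessly discarded. Once that logistic-type inequality is in hand, the substitution $\phi = c_1/(1-c_1)$ and the integration against Theorem \ref{thm:s}'s $1/\tau$-type lower bound are routine and, crucially, avoid picking up a spurious factor of $c_1(0)$ in the rate exponent.
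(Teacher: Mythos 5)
Your proposal is correct and follows essentially the same route as the paper: the same key differential inequality $\dot{c}_1(t) \ge 2(s_1-s_2)\,s(t)^{1-2/N}c_1(t)(1-c_1(t))$ (the paper derives it via the Cauchy--Schwarz bound $\sum_{j\ge 2}s_ja_jb_j\le s_2(1-|a_1b_1|)$ together with $a_1^2+b_1^2\ge 2a_1b_1$, which is just a repackaging of your termwise AM--GM argument with the $(a_1-b_1)^2$ residual), the same log-odds substitution $\ln\frac{c_1}{1-c_1}$, and the same integration against the $1/t$-type lower bound on $s(t)^{1-2/N}$ from Theorem \ref{thm:s}.
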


\subsection{Convergence Rates of $a_1(t)b_1(t)$ and $s(t)$: Stage 3}
Before we start our analysis in Stage 3, we need to handle the minor case $t_2:=\inf \{t: \vu(t)^\top\mZ\vv(t) \ge s(t) \}= +\infty$.
We can assume $a_1(0)b_1(0)>0$ from Stage 1.

\begin{thm}\label{thm:stage2.5}
    Suppose $N \ge 3$, $a_1(0)b_1(0) > 0$ and $t_2=+\infty$. Then we have
    \[ 1-a_1(t)b_1(t)= \gO\left(e^{-c_3t}\right), |s(t)-s_1|=\gO\left(e^{-c_4 t}\right), \]
    where $c_3 = 2s_1^{1-\frac{2}{N}}\left(s_1-s_2\right)$, $c_4 =N s_1^{2-\frac{2}{N}}$.
\end{thm}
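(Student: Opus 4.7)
First I would pin down the structural consequences of the assumptions. Theorem \ref{thm:monotone}(II) and $t_2 = +\infty$ force $\dot{s}(t) \leq 0$ for all $t \geq 0$, so $s(t)$ is non-increasing. Since $a_1(0) b_1(0) > 0$ implies $a_1(0) + b_1(0) \neq 0$, Theorem \ref{thm:ab} (with $k = 0$) gives $\mW(t) \to s_1 \vu_1 \vv_1^\top$, hence $s(t) \downarrow s_1$ and $a_1(t) b_1(t) \to 1$; in particular $s(t) \geq s_1$ throughout, and by Theorem \ref{thm:monotone}(I) the quantity $a_1(t) b_1(t)$ is non-decreasing, so $\phi(t) := 1 - a_1(t) b_1(t) \in [0, 1 - a_1(0) b_1(0)]$.

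Next I would derive a logistic-type differential inequality for $\phi(t)$. Starting from Eq.~\eqref{eq:grad-ab} and using $a_1^2 + b_1^2 = (a_1 - b_1)^2 + 2 a_1 b_1$, one computes
\begin{equation*}
\frac{d}{dt}\bigl(a_1 b_1\bigr)(t) = s(t)^{1 - \frac{2}{N}} \left[ s_1 (a_1(t) - b_1(t))^2 + 2 a_1(t) b_1(t) \bigl(s_1 - \vu(t)^\top \mZ \vv(t)\bigr) \right] \geq 2 s(t)^{1-\frac{2}{N}} a_1(t) b_1(t) \bigl(s_1 - \vu(t)^\top \mZ \vv(t)\bigr).
\end{equation*}
To bound $s_1 - \vu^\top \mZ \vv$ below in terms of $\phi$, I split $\vu^\top \mZ \vv = s_1 a_1 b_1 + \sum_{j \geq 2} s_j a_j b_j$, apply Cauchy--Schwarz to the tail, and use the elementary inequality $\sqrt{(1-a_1^2)(1-b_1^2)} \leq 1 - a_1 b_1$ (which rearranges to $(a_1 - b_1)^2 \geq 0$). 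This yields $s_1 - \vu^\top \mZ \vv \geq (s_1 - s_2)\phi$. Combining with $s(t) \geq s_1$ and $N \geq 3$ (so $s(t)^{1-2/N} \geq s_1^{1-2/N}$) produces
\begin{equation*}
\dot{\phi}(t) \leq -2 s_1^{1-\frac{2}{N}} (s_1 - s_2) \bigl(1 - \phi(t)\bigr)\phi(t) = -c_3 \phi(t)\bigl(1 - \phi(t)\bigr).
\end{equation*}

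The main obstacle is upgrading this logistic inequality to the sharp rate $e^{-c_3 t}$, rather than the weaker $e^{-(1-\epsilon) c_3 t}$ that one would get by merely lower-bounding $a_1 b_1$ by a positive constant on $[T, \infty)$. I would use the logit substitution $\tfrac{d}{dt} \ln \tfrac{\phi}{1-\phi} = \tfrac{\dot{\phi}}{\phi(1-\phi)} \leq -c_3$; integrating and solving for $\phi(t)$ delivers $\phi(t) \leq \tfrac{\phi(0)}{1 - \phi(0)} e^{-c_3 t}$, matching the claim. For $s(t) - s_1$ the rate is easier: with $\eta(t) := s(t) - s_1 \geq 0$, the coarse bound $\vu^\top \mZ \vv \leq s_1$ plugged into Eq.~\eqref{eq:grad-s} gives $\dot{\eta}(t) \leq -N s(t)^{2-\frac{2}{N}} \eta(t) \leq -N s_1^{2-\frac{2}{N}} \eta(t) = -c_4 \eta(t)$ (using $s \geq s_1$ and $2 - 2/N > 0$), and Gr\"onwall yields $\eta(t) \leq \eta(0) e^{-c_4 t}$, giving the second claimed rate.
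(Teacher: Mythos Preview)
Your proposal is correct and follows essentially the same route as the paper. The paper also uses $t_2=+\infty$ together with Theorem~\ref{thm:ab} to obtain $s(t)\geq s_1$, then feeds this into the same logistic differential inequality $\dot c_1 \geq c_3\,c_1(1-c_1)$ (their Eq.~\eqref{eq:lower-grad ab}, whose derivation is exactly your Cauchy--Schwarz plus $\sqrt{(1-a_1^2)(1-b_1^2)}\leq 1-a_1b_1$ bound, packaged as Eq.~\eqref{eq:pre-sab}), integrates via the logit substitution, and handles $s(t)-s_1$ by the same Gr\"onwall argument from $\dot s \leq N s_1^{2-2/N}(s_1-s)$.
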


Now we turn to the case $t_2<+\infty$. Additionally, we can assume $\dot{s}(0) \geq 0$ for short in Stage 3.

\begin{thm}\label{thm:stage3}
	Assume $N\ge 3$, $a_1(0)b_1(0) > 0$, and $\dot{s}(0) \geq 0$. Then we have
	\begin{equation*}
	    1-a_1(t)b_1(t) = \gO(e^{-c_5t}), |s_1-s(t)|=\gO\left(e^{-\min\{c_5, c_6\}t} \right),
	\end{equation*}
    where $c_5 = 2s(0)^{1-\frac{2}{N}}\left(s_1-s_2\right)$, $c_6 =N s(0)^{2-\frac{2}{N}}$.
\end{thm}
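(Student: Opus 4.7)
The plan is to establish the two exponential rates sequentially: first control $1-a_1(t)b_1(t)$, and then feed that bound into a linear ODE for $s_1-s(t)$. Write $c_1(t):=a_1(t)b_1(t)$ and $\Sigma(t):=\vu(t)^\top\mZ\vv(t)=\sum_{j=1}^d s_j a_j(t)b_j(t)$. Differentiating $c_1$ using Eq.~\eqref{eq:grad-ab} gives
\[ \dot c_1(t) = s(t)^{1-\tfrac{2}{N}}\bigl[s_1(a_1(t)^2+b_1(t)^2) - 2c_1(t)\Sigma(t)\bigr]. \]
Since Stage 3 begins with $c_1(0)>0$ and $c_1$ is non-decreasing by Theorem \ref{thm:monotone}(I), I use $a_1^2+b_1^2\ge 2a_1b_1=2c_1$ to get $\dot c_1\ge 2c_1 s^{1-2/N}(s_1-\Sigma)$.

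The central algebraic step is the lower bound $s_1-\Sigma\ge(s_1-s_2)(1-c_1)$. To see this, decompose $\Sigma=s_1c_1+\sum_{j\ge 2}s_j a_jb_j$ and apply $|a_jb_j|\le(a_j^2+b_j^2)/2$ together with $\sum_{j\ge 2}(a_j^2+b_j^2)\le 2-a_1^2-b_1^2\le 2(1-c_1)$, which yields $|\sum_{j\ge 2}s_j a_jb_j|\le s_2(1-c_1)$. Combining, $\dot c_1\ge 2(s_1-s_2)c_1(1-c_1)s^{1-2/N}$. In Stage 3 we have $\dot s\ge 0$, hence $s(t)\ge s(0)$, so
\[ \dot c_1(t)\ge c_5\, c_1(t)\bigl(1-c_1(t)\bigr),\qquad c_5 = 2(s_1-s_2)s(0)^{1-\tfrac{2}{N}}. \]
This separable inequality integrates via partial fractions $\tfrac{dc_1}{c_1(1-c_1)}=d\ln\tfrac{c_1}{1-c_1}$, giving $\tfrac{c_1(t)}{1-c_1(t)}\ge\tfrac{c_1(0)}{1-c_1(0)}e^{c_5 t}$ and therefore $1-c_1(t)\le\tfrac{1-c_1(0)}{c_1(0)}e^{-c_5 t}=\gO(e^{-c_5 t})$.

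For the singular-value rate, the same kind of decomposition yields an upper bound $s_1-\Sigma(t)\le 2s_1(1-c_1(t))=\gO(e^{-c_5 t})$ (using $|s_j a_jb_j|\le s_1(a_j^2+b_j^2)/2$ and $1-(a_1^2+b_1^2)/2\le 1-c_1$). From Eq.~\eqref{eq:grad-s},
\[ \frac{d}{dt}\bigl(s_1-s(t)\bigr) = -Ns(t)^{2-\tfrac{2}{N}}\bigl(s_1-s(t)\bigr) + Ns(t)^{2-\tfrac{2}{N}}\bigl(s_1-\Sigma(t)\bigr). \]
In Stage 3 we have $s(0)\le s(t)\le s_1$: the first inequality uses $\dot s\ge 0$, and the upper bound follows from Theorem \ref{thm:s} because $\dot s(0)\ge 0$ forces $\Sigma(0)\ge s(0)$ and $\Sigma\le s_1$, so $s_0=\max\{s_1,s(0)\}=s_1$. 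Bounding $s(t)^{2-2/N}$ by $s(0)^{2-2/N}$ in the first term and by $s_1^{2-2/N}$ in the forcing term produces a linear comparison ODE
\[ \frac{d}{dt}\bigl(s_1-s(t)\bigr)\le -c_6\bigl(s_1-s(t)\bigr) + Ke^{-c_5 t},\qquad c_6=Ns(0)^{2-\tfrac{2}{N}}, \]
for an explicit constant $K$ depending on $s_1,N$, and the Stage 3 initial data.

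Applying the integrating factor $e^{c_6 t}$ and integrating from $0$ to $t$ gives $s_1-s(t)\le(s_1-s(0))e^{-c_6 t}+K\int_0^t e^{-c_6(t-\tau)-c_5\tau}d\tau$, whose evaluation is $\gO(e^{-\min\{c_5,c_6\}t})$ when $c_5\ne c_6$ (and $\gO(te^{-c_5 t})$, still dominated by any $e^{-(c_5-\eps)t}$, in the degenerate case). I expect the main obstacle to be the algebraic bound $s_1-\Sigma\ge(s_1-s_2)(1-c_1)$: it must be sharp enough to produce the logistic-type inequality $\dot c_1\ge c_5 c_1(1-c_1)$ (a weaker Grönwall bound would cost a factor of $c_1(0)$ in the exponent and miss the stated rate $c_5$). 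Once this lemma is in hand, the rest is a routine linear-ODE comparison argument plus careful accounting of which stage supplies the monotonicity hypotheses $\dot c_1\ge 0$ and $\dot s\ge 0$.
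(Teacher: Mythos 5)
Your proposal is correct and follows essentially the same route as the paper: the spectral-gap bound $s_1-\vu(t)^\top\mZ\vv(t)\ge(s_1-s_2)(1-a_1(t)b_1(t))$ yields the logistic inequality $\dot c_1\ge c_5\,c_1(1-c_1)$ via the Stage-3 monotonicity $s(t)\ge s(0)$, and the singular-value rate then follows from a linear comparison ODE integrated with the factor $e^{c_6 t}$. The only differences are cosmetic — AM--GM in place of the paper's Cauchy--Schwarz for the cross terms, and an exact evaluation of the convolution integral $\int_0^t e^{-c_6(t-\tau)-c_5\tau}d\tau$ where the paper uses a cruder (and in fact slightly miscomputed) closed form for $\int_0^t(1+Ae^{c_5x})^{-1}dx$ — so if anything your handling of the final integral is the more careful one.
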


As we mentioned before, the difference between the minor case $t_2=+\infty$ and Stage 3 is the constant above the exponent, and the proofs are similar between these two schemes. Thus, we combine them in a subsection.

\begin{remark}
    Though we don't provide an upper bound of $t_2$ here, we still have a slower global convergence guarantee of $\vu(t),\vv(t)$ following Stage 2.
    Moreover, we discover the linear rate in Stage 3 only appears in the late training phase from experiments (see Section \ref{sec:exp}), and gives high precision guarantee of solution at last.
    Furthermore, \citet{eftekhari2020training} also gave a linear rate in their restricted initialization set. Thus, we mainly focus on the previous stages to highlight that our results cover a larger initialization set.  
\end{remark}


\section{Experiments}\label{sec:exp}

\begin{figure}[t]
	\centering
	\begin{subfigure}[b]{0.48\textwidth}
		\includegraphics[width=\linewidth]{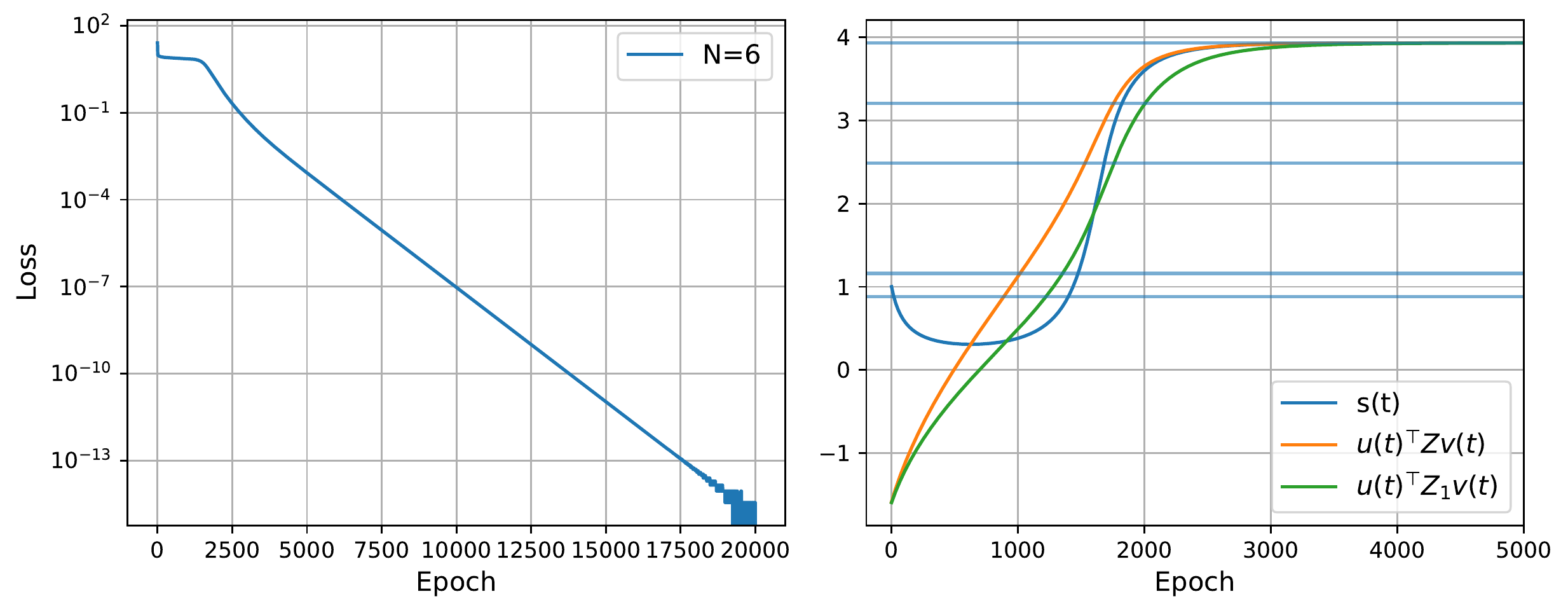}
		\caption{$k=0$.} \label{fig:k-0}
	\end{subfigure}
	\begin{subfigure}[b]{0.48\textwidth}
		\includegraphics[width=\linewidth]{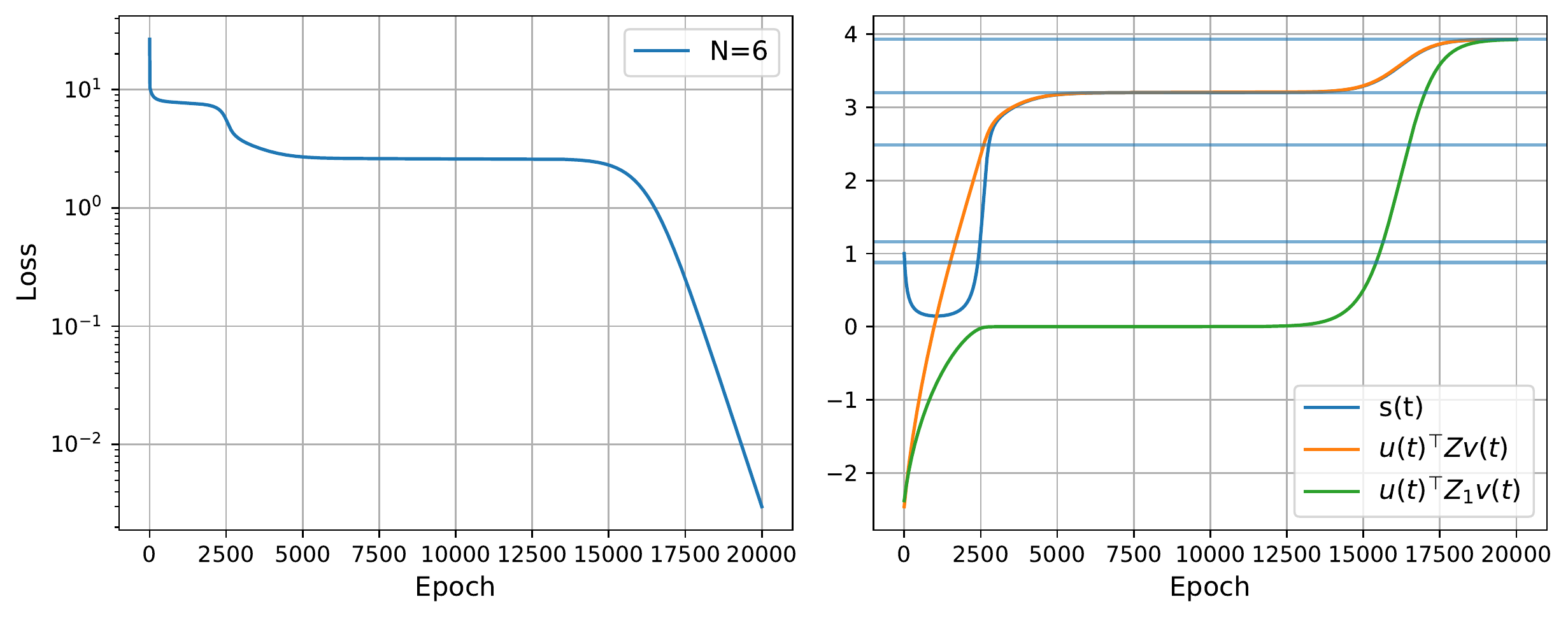}
		\caption{$k=1$.} \label{fig:k-1}
	\end{subfigure}
	\begin{subfigure}[b]{0.48\textwidth}
		\includegraphics[width=\linewidth]{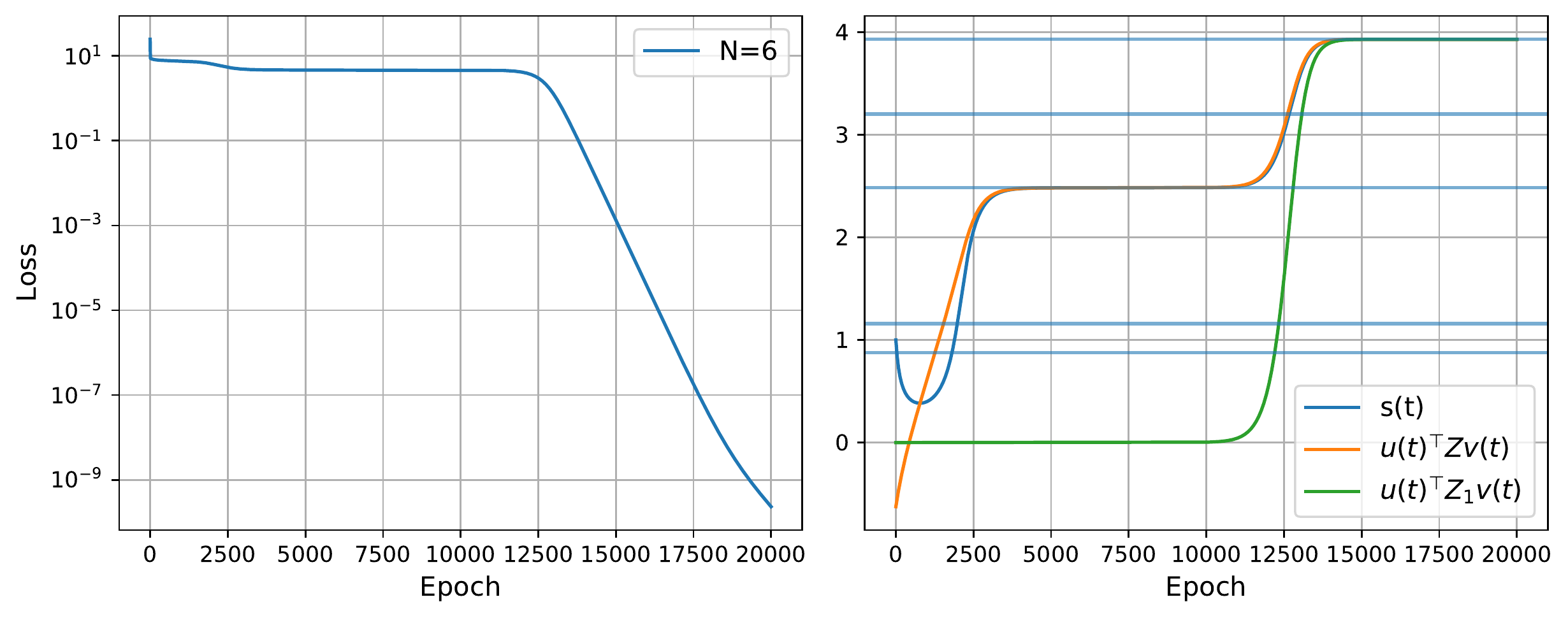}
		\caption{$k=2$.} \label{fig:k-2}
	\end{subfigure}
	\begin{subfigure}[b]{0.48\textwidth}
		\includegraphics[width=\linewidth]{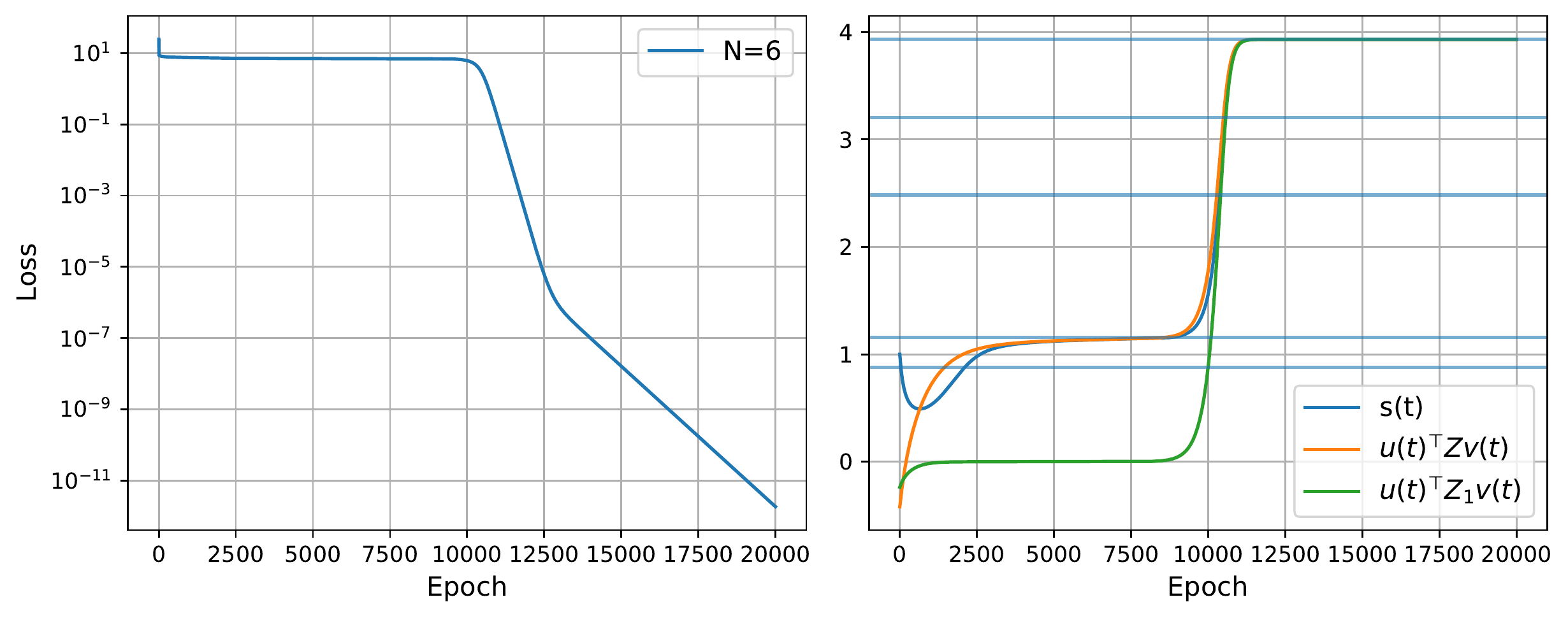}
		\caption{$k=3$.} \label{fig:k-3}
	\end{subfigure}
	\begin{subfigure}[b]{0.48\textwidth}
		\includegraphics[width=\linewidth]{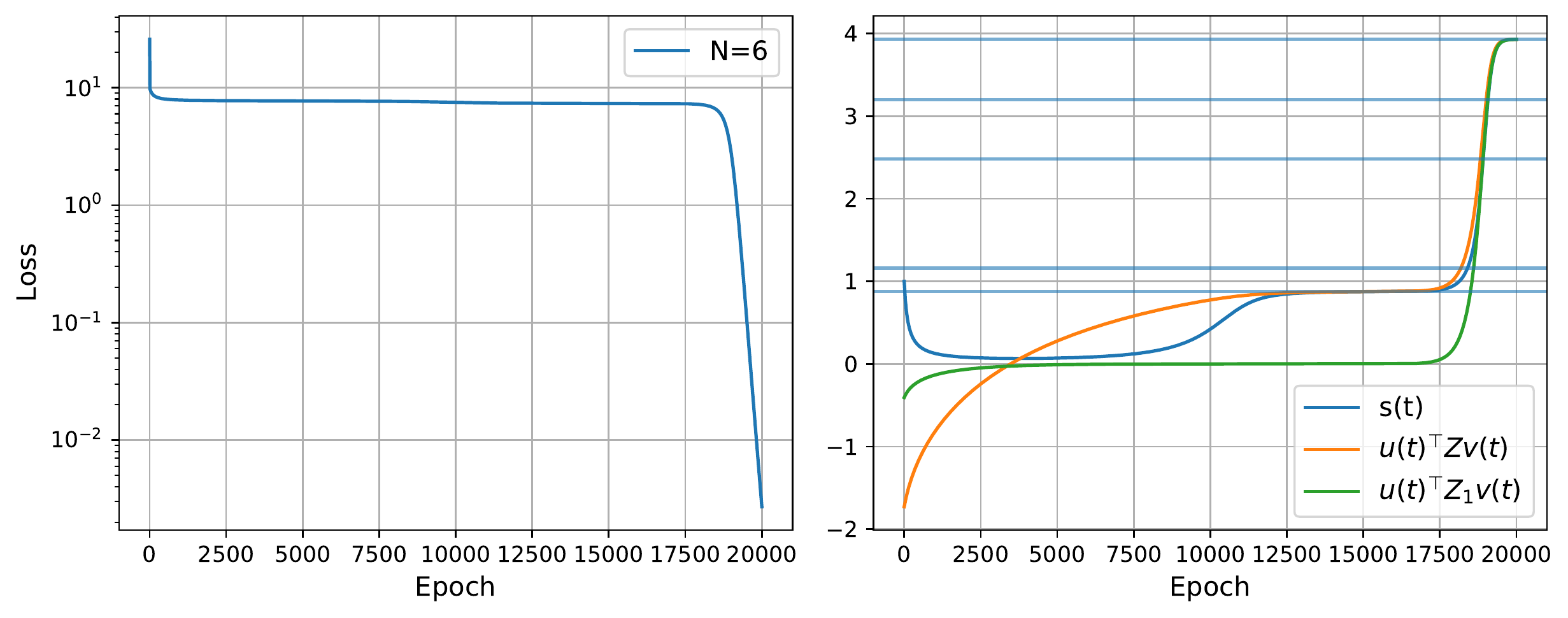}
		\caption{$k=4$.} \label{fig:k-4}
	\end{subfigure}
	\begin{subfigure}[b]{0.48\textwidth}
		\includegraphics[width=\linewidth]{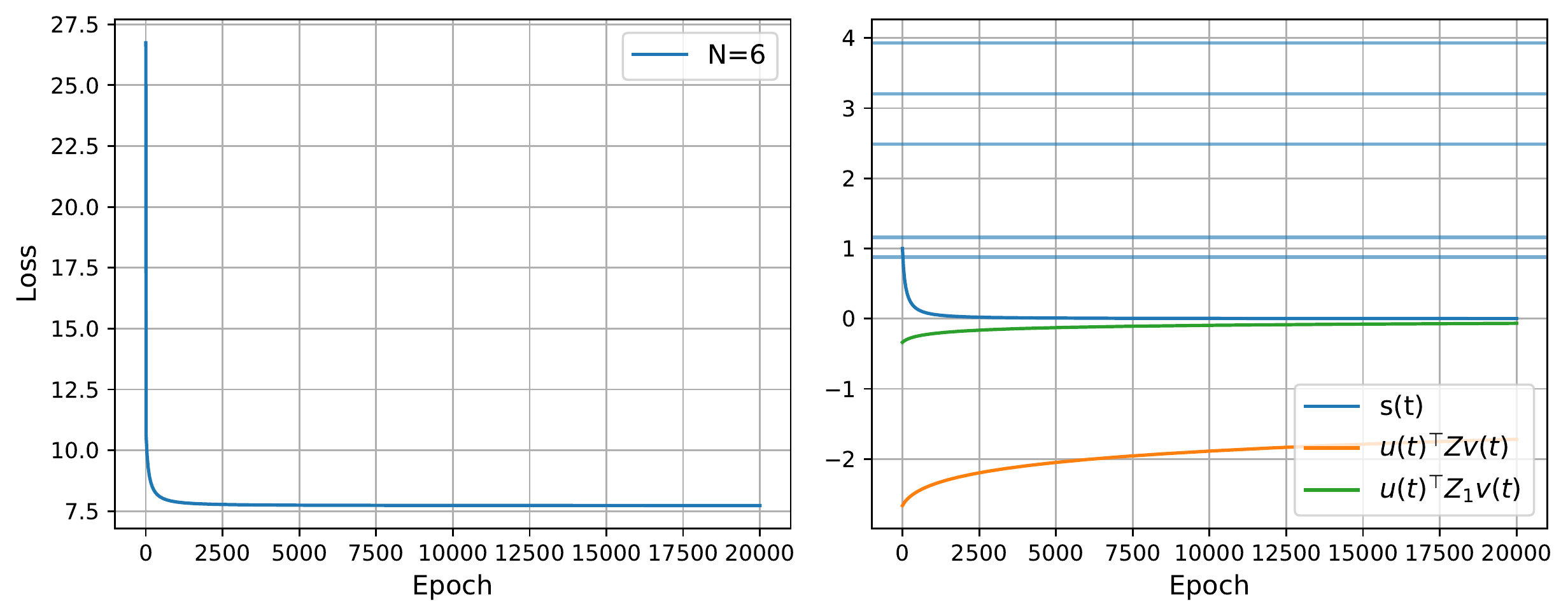}
		\caption{$k=5$.} \label{fig:k-5}
	\end{subfigure}
	\caption{We choose $N=6, d=5$ with hidden-layer width $(d_N, \dots, d_0) = (5, 4, 1, 10, 5, 3, 8)$, and set different $k \in [0, d]$ in Theorem \ref{thm:ab}. The transparent horizontal lines are the singular value of $\mZ$ in order. Learning rate is $5\times 10^{-4}$. Legends present $s(t), \vu(t)^\top \mZ \vv(t), \vu(t)^\top \mZ_1 \vv(t)$. Here $t$ is the running step in gradient descent.}
	\label{fig:deep-linear-k}
\end{figure}

In this section, we conduct simple numerical experiments to verify our discovery.
\paragraph{Different kinds of trajectories.} We construct $\vu(0) = \mU \bm{\alpha}_1$ and $\vv(0) = \mV \bm{\alpha}_2$, where $\bm{\alpha}_1 \in \sR^{d_y \times 1}$ and $\bm{\alpha}_2 \in \sR^{d_x \times 1}$ have the inverse items until $k$-th dimension, i.e., $(\bm{\alpha}_1)_i + (\bm{\alpha}_2)_i=0, \forall i \in [k], (\bm{\alpha}_1)_{k+1} + (\bm{\alpha}_2)_{k+1} \neq 0$, and $k \leq d$. 
Then we can see $a_i(0)+b_i(0) = \vu_i^\top\vu(0)+\vv_i^\top\vv(0) = 0, \forall i \in [k], a_{k+1}(0)+b_{k+1}(0) \neq 0$.
After $\vu(0), \vv(0)$ decided, we construct $\mW_i(0) = \vh_{i+1}\vh_{i}^\top$ with $\norm{\vh_i} = 1$ and $\vh_{1} = \vv(0), \vh_{N+1} = \vu(0)$ to obtain a balanced initialization $(\mW_1(0), \dots, \mW_N(0))$ and $\mW(0) = \vu(0)\vv(0)^\top$.
Finally, we run gradient descent (GD) for the problem \eqref{eq:obj} with a small learning rate $5\times 10^{-4}$.
The simulations are shown in Figure \ref{fig:deep-linear-k}.

As Figure \ref{fig:deep-linear-k} depicts, we discover $\vu(t)^\top\mZ\vv(t), \vu(t)^\top\mZ_1\vv(t)$ are non-decreasing as our Proposition \ref{prop:base} shows.
Moreover, we could see our construction gives a stuck region around $s_{k+1}$ according to the choice of $k \leq d$. Though our Theorem \ref{thm:ab} shows that the gradient flow of $\mW(t)$ would finally converges $s_{k+1}\vu_{k+1}\vv_{k+1}^\top$, we find after a period of (long) time, gradient descent can escape from the saddle point around $s_{k+1}\vu_{k+1}\vv_{k+1}^\top$, and finally converges to a global minimizer. We consider the numerical error during optimization and unbalanced weight matrix caused by GD may lead to the inconsistent of gradient flow and its discrete version GD. 
Overall, we describe the possible convergence behavior of all initialization in the ideal setting.

\begin{figure}[t]
	\centering
	\includegraphics[width=0.9\linewidth]{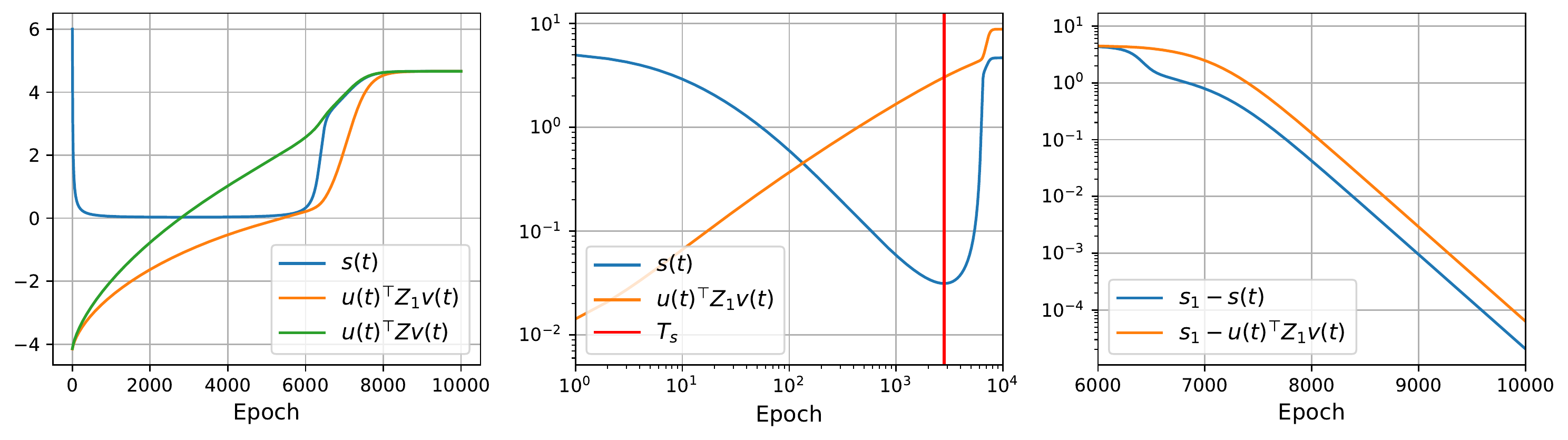}
	\caption{We plot choose $N=6, d=5$ with hidden-layer width $(d_N, \dots, d_0) = (5, 4, 1, 10, 5, 3, 8)$, and set $k=0$ in Theorem \ref{thm:ab}, i.e., the trajectory converges to the global minimizer. Learning rate is $5\times 10^{-4}$. 
	Left: Variation of $s(t), \vu(t)^\top \mZ \vv(t), \vu(t)^\top \mZ_1 \vv(t)$ during the whole optimization. 
	Middle: Polynomial convergence of $s(t)$ and $\vu(t)^\top \mZ_1 \vv(t)$ in the second stage. We plot $\vu(t)^\top \mZ_1 \vv(t)-\vu(0)^\top \mZ_1 \vv(0)$ instead of $\vu(t)^\top \mZ_1 \vv(t)$ to avoid negative values. Here, $T_s$ is the time that the monotonicity of $s(t)$ changes. 
	Right: Linear convergence of $s(t)$ and $\vu(t)^\top \mZ_1 \vv(t)$ in the final stage.
	Here $t$ is the running step in gradient descent.}
	\label{fig:deep-linear-detail-0}
\end{figure}

\paragraph{Trajectories converging to the global minimizer.}
We also plot the trajectory converging to the global minimizer in detail shown Figure \ref{fig:deep-linear-detail-0}. 
To give a more clear variation of stages, we adopt $s(0)=5$ and a small $\vu(0)^\top\mZ_1\vv(0)$.
As the left figure of Figure \ref{fig:deep-linear-detail-0} shown, $s(t)$ first decreases, then increases. 
Additionally, the middle figure shows that $s(t)$ decreases and $\vu(t)^\top \mZ_1 \vv(t)$ increases with an approximate polynomial rate (noting the log scale in both x-axis and y-axis). 
Moreover, the middle figure also shows that once $s(t)$ increases, i.e., $t \geq T_s$, $\vu(t)^\top \mZ_1 \vv(t)$ will increase much faster, and switches to another stage as we prove.
Finally, we observe the final stage, that is, the linear convergence of both $s(t)$ and $\vu(t)^\top \mZ_1 \vv(t)$ to $s_1$ in the right graph of Figure \ref{fig:deep-linear-detail-0}.
Overall, we conclude our convergent rates match with the numerical experiments well.

\section{Conclusion}\label{sec:con}
In this work, we have studied the training dynamic of deep linear networks which have a one-neuron layer. 
Specifically, we focus on the gradient flow methods under the quadratic loss and balanced initialization. 
We have shown the convergent point of an arbitrary starting point.
Moreover, we also give the convergence rates of the trajectories towards the global minimizer by stages.
The behavior predicted by our theorems is also observed in numerical experiments.
However, the analysis of general linear networks without a one-neuron layer remains a challenging open problem.
We hope that our limited view of training trajectories would bring a better understanding of (linear) neural networks. 

\bibliography{reference}
\bibliographystyle{plainnat}

\appendix
\section{Auxiliary Conclusion}
\subsection{Previous Results}
\begin{lemma}[Lemma 3.3 in \citet{eftekhari2020training}]\label{aux-lemma:1}
   For the induced flow in Eq.~\eqref{eq:induced}, we have that $\mathrm{rank}(\mW(t)) = \mathrm{rank}(\mW(0)), \forall t \geq 0$, provided that $\mX\mX^\top$ is invertible and the network depth $N \geq 2$.
\end{lemma}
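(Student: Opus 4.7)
The plan is to control the dynamics of each singular value of $\mW(t)$ individually and show that zero singular values remain zero while positive singular values remain positive, so the number of nonzero singular values (i.e., the rank) is invariant along the flow. The balanced initialization structure, together with the induced flow in Eq.~\eqref{eq:induced}, is what makes this clean.

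First I would invoke the analytic singular value decomposition of $\mW(t)$ developed by \citet{arora2019implicit} (their Lemma 1 and Theorem 3), which gives $\mW(t) = \sum_{i=1}^{r_0} s_i(t)\vu_i(t)\vv_i(t)^\top$ with $r_0 = \min(d_y, d_x)$ and each $s_i, \vu_i, \vv_i$ analytic in $t$. Next, I would derive the scalar ODE governing each singular value; the derivation mirrors that of Eq.~\eqref{eq:grad-s} and yields
\[
\dot{s}_i(t) = N s_i(t)^{2 - \frac{2}{N}} \Big(\vu_i(t)^\top \mZ \vv_i(t) - s_i(t)\Big),
\]
where, for general invertible $\mX\mX^\top$, $\mZ$ is replaced by $\mY\mX^\top(\mX\mX^\top)^{-1}$ and the whitening argument used to simplify Eq.~\eqref{eq:obj} still goes through. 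Writing the right-hand side as $N s_i(t)^{2-2/N} f_i(t)$ with $f_i$ bounded on any finite time interval (by $\|\mZ\|_F + \|\mW(t)\|_F$, itself bounded by energy monotonicity of gradient flow), I can apply standard ODE analysis to this autonomous-in-$s_i$ equation.

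The sign-preservation then splits into two cases. If $s_i(0) > 0$, separation of variables gives $\int_{s_i(0)}^{s_i(t)} \sigma^{\tfrac{2}{N}-2}\, d\sigma = N \int_0^t f_i(\tau)\, d\tau$. For $N \geq 3$, the primitive $\tfrac{N}{2-N}\sigma^{\tfrac{2}{N}-1}$ diverges as $\sigma \to 0^+$, while the right-hand side remains finite on any bounded interval; for $N = 2$, the equation reduces to $\tfrac{d}{dt}\ln s_i(t) = 2 f_i(t)$ and the same conclusion follows directly. Hence $s_i(t)$ cannot reach zero in finite time. Conversely, if $s_i(0) = 0$, the constant trajectory $s_i \equiv 0$ solves the ODE, and since the vector field $\sigma \mapsto N \sigma^{2-2/N} f_i(t)$ is locally Lipschitz at $\sigma = 0$ (because $2 - 2/N \geq 1$ for $N \geq 2$, so its derivative vanishes or is bounded at $0$), uniqueness forces $s_i(t) \equiv 0$. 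Combining the two cases shows $\mathrm{rank}(\mW(t))$ equals the number of positive $s_i(0)$'s for all $t \geq 0$.

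The main obstacle I expect is justifying the per-singular-value ODE globally in time: the analytic SVD can be subtle at times when two singular values coalesce, since individual eigenvalue curves might fail to be $C^1$ at generic crossings. I would handle this by leaning on the \citet{arora2019implicit} analytic SVD result (which exploits analyticity of $\mW(t)$ itself — a by-product of the induced flow \eqref{eq:induced} having an analytic right-hand side — to produce globally analytic singular value/vector curves via the Rellich-type theorem), so that the ODE derivation is valid away from a discrete set of times where two $s_i$'s agree; at such times continuity of the $s_i$'s bridges the argument. An alternative, which avoids SVD differentiability entirely, is to argue at the matrix level: show that $\ker \mW(t)^\top$ and $\ker \mW(t)$ are invariant under the flow using Eq.~\eqref{eq:induced}, by verifying that $\gA(\mW)\vx = \vzero$ whenever $\mW \vx = \vzero$ (and the analogous row statement), then conclude rank preservation from the constancy of these kernels along the trajectory.
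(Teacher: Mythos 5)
The paper does not actually prove this lemma: it is quoted verbatim as a ``previous result'' (Lemma 3.3 of \citet{eftekhari2020training}) in the appendix, so there is no in-paper proof to compare against. Your main argument is correct and is essentially the standard one, assembled from exactly the ingredients this paper already imports elsewhere: the analytic SVD of \citet{arora2019implicit}, the per-singular-value ODE (the general-rank analogue of Eq.~\eqref{eq:grad-s}), and the sign-preservation principle for $\dot{s}=(s^2)^{\alpha}g(t)$ with $\alpha=1-1/N\ge 1/2$, which is literally Lemma~\ref{thm:aux-2}; your two-case analysis (non-integrability of $\sigma^{2/N-2}$ at $0^+$ for escape from positivity, local Lipschitzness of $\sigma^{2-2/N}$ at $0$ for uniqueness of the zero solution) is a correct re-derivation of that lemma. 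Two caveats. First, for general invertible $\mX\mX^\top$ the gradient is $(\mW\mX\mX^\top-\mY\mX^\top)$, which is not of the form $\mW-\tilde{\mZ}$, so the ODE is not literally Eq.~\eqref{eq:grad-s} with a substituted target; this is harmless because all your argument uses is that $\dot{s}_i = N s_i^{2-2/N} f_i(t)$ with $f_i$ continuous and bounded on finite intervals, but the phrasing should be corrected. Second, and more substantively, your proposed ``alternative'' matrix-level argument does not work: $\ker\mW(t)$ and $\ker\mW(t)^\top$ are \emph{not} constant along the flow (the singular vectors rotate, per Eqs.~\eqref{eq:grad-u}--\eqref{eq:grad-v}), and the pointwise verification fails because the $j=1$ term of $\gA(\mW)\vx$, namely $(\mW\mW^\top)^{\frac{N-1}{N}}\nabla L^1(\mW)\vx$, need not vanish when $\mW\vx=\vzero$. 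Since that route is offered only as a fallback, the proof stands on its primary argument; just delete or repair the alternative.
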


\begin{lemma}[Lemma 4 in \citet{arora2019implicit}]\label{thm:aux-2}
    Let $\alpha \geq 1/2$ and $g: [0, \infty) \to \sR$ be a continuous function. Consider the initial value problem:
    \[ s(0) =s_0, \dot{s}(t) = (s^2(t))^{\alpha} \cdot g(t), \forall t \geq 0, \]
    where $s_0 \in \sR$. Then, as long as it does not diverge to $\pm \infty$, the solution to this problem ($s(t)$) has the same sign as its initial value ($s0$). That is, $s(t)$ is identically zero if $s_0 = 0$, is positive if $s_0 > 0$, and is negative if $s_0 < 0$.
\end{lemma}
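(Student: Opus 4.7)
The plan is to treat the three sign cases ($s_0=0$, $s_0>0$, $s_0<0$) separately, with the middle case containing essentially all the work and the last case following by symmetry. The key observation is that the right-hand side $f(s,t) := (s^2)^\alpha g(t) = |s|^{2\alpha} g(t)$ is locally Lipschitz in $s$ whenever $2\alpha \geq 1$, because $|s|^{2\alpha}$ has a bounded derivative on any neighborhood of any $s \ne 0$ and, at $s=0$, it is still Lipschitz (for $2\alpha = 1$ this is $|s|$ itself, with Lipschitz constant $1$; for $2\alpha > 1$ it is $C^1$ with derivative vanishing at $0$). Standard Picard--Lindel\"of uniqueness will therefore apply on any open interval on which the solution is bounded, which is all that is needed since the lemma only asserts the sign-preservation conclusion while the solution does not diverge.

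The case $s_0 = 0$ is immediate: the constant function $s(t)\equiv 0$ solves the IVP, and by the Lipschitz property above it is the unique solution, giving the first clause.

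For $s_0 > 0$, I would argue by contradiction. Define $T := \inf\{t \geq 0 : s(t) = 0\}$ and suppose, for a contradiction, that $T < \infty$ and the solution stays bounded on $[0,T]$. By continuity $s(t) > 0$ on $[0,T)$ and $s(T)=0$. On $[0,T)$ I can separate variables: $\dot s(t)/s(t)^{2\alpha} = g(t)$, integrating from $0$ to any $t<T$ yields
\begin{equation*}
\int_0^t \frac{\dot s(\tau)}{s(\tau)^{2\alpha}}\,d\tau \;=\; \int_0^t g(\tau)\,d\tau,
\end{equation*}
which, after substitution, becomes $\frac{s(t)^{1-2\alpha} - s_0^{1-2\alpha}}{1-2\alpha} = \int_0^t g(\tau)\,d\tau$ when $2\alpha > 1$ and $\log s(t) - \log s_0 = \int_0^t g(\tau)\,d\tau$ when $2\alpha = 1$. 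Letting $t \to T^-$, the left-hand side diverges to $-\infty$ (in the first case because $1-2\alpha < 0$ and $s(t)^{1-2\alpha} \to +\infty$; in the second because $\log s(t) \to -\infty$), whereas the right-hand side converges to the finite number $\int_0^T g(\tau)\,d\tau$ since $g$ is continuous on the compact interval $[0,T]$. This is the desired contradiction, so no such $T$ exists and $s(t) > 0$ for all $t \geq 0$ on the maximal interval of existence.

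The case $s_0 < 0$ follows symmetrically: apply the same argument to $\tilde s(t) := -s(t)$, which satisfies $\dot{\tilde s} = |\tilde s|^{2\alpha}(-g(t))$ with the continuous coefficient $-g$ and with $\tilde s(0) = -s_0 > 0$. The main obstacle I anticipate is simply the careful handling of the boundary as $s \to 0^+$ and the split between $\alpha = 1/2$ and $\alpha > 1/2$ in the separation-of-variables step; both are routine once the formula is in hand. Everything else is bookkeeping, and there is nothing in the argument that uses more than continuity of $g$ and the Lipschitz property of $|s|^{2\alpha}$ for $\alpha \geq 1/2$.
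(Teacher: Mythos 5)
Your proof is correct. Note that the paper itself does not prove this statement: it is imported verbatim as Lemma~4 of \citet{arora2019implicit}, so there is no in-paper argument to compare against. Your treatment of the three cases is sound: the observation that $|s|^{2\alpha}$ is locally Lipschitz precisely when $2\alpha\ge 1$ (Lipschitz with constant $1$ at $\alpha=1/2$, and $C^1$ with vanishing derivative at the origin for $\alpha>1/2$) correctly disposes of the $s_0=0$ case via Picard--Lindel\"of, and the separation-of-variables contradiction for $s_0>0$ is airtight --- on $[0,t]$ with $t<T$ the solution is bounded away from zero, so the substitution is legitimate, and the left-hand side $\bigl(s(t)^{1-2\alpha}-s_0^{1-2\alpha}\bigr)/(1-2\alpha)$ (or $\log s(t)-\log s_0$ when $\alpha=1/2$) diverges to $-\infty$ while the right-hand side stays finite. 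The standard proof in \citet{arora2019implicit} reaches the same conclusion slightly more economically: if $s(T)=0$ for some finite $T$, then running the (still locally Lipschitz) ODE backwards from time $T$ and invoking uniqueness forces $s\equiv 0$ on $[0,T]$, contradicting $s_0\neq 0$. That route uses the Lipschitz property for all three sign cases at once, whereas yours only needs it for $s_0=0$ and replaces it with an explicit quantitative estimate elsewhere; both are complete, and your version has the minor advantage of showing \emph{why} the solution cannot reach zero in finite time (the integral $\int \dot s/s^{2\alpha}$ would have to diverge).
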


\begin{thm}[Theorem 5 in \citet{bah2021}]\label{thm:aux-1}
    Assume $\mX\mX^\top$ has full rank. Then the flows $\mW_i(t)$ defined by Eq.~\eqref{eq:flow-wi} and $\mW(t)$ given by Eq.~\eqref{eq:induced} are defined and bounded for all $t \geq 0$ and $(\mW_1,\dots, \mW_N)$ converges to a critical point of $L^N$ as $t \to +\infty$.
\end{thm}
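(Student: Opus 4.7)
The plan is to prove the three claims — well-definedness for all $t \geq 0$, boundedness, and convergence to a critical point of $L^N$ — in order, working with the primary ODE system Eq.~\eqref{eq:flow-wi} (so that Eq.~\eqref{eq:induced} inherits the conclusions automatically). First, since $-\nabla_{\mW_i}L^N(\mW_{\mN})$ is a polynomial in the entries of $(\mW_1,\ldots,\mW_N)$, it is smooth and locally Lipschitz; Picard-Lindelöf then supplies a unique maximal solution on some interval $[0, T_{\max})$ with $T_{\max} \in (0,+\infty]$.

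Next I would establish uniform boundedness on $[0, T_{\max})$. Along the flow, $L^N$ is non-increasing: $\frac{d}{dt}L^N(\mW_{\mN}(t)) = -\sum_{i=1}^N \norm{\nabla_{\mW_i}L^N(\mW_{\mN}(t))}_F^2 \leq 0$. Hence $\norm{\mW(t)\mX-\mY}_F^2 \leq L^N(\mW_{\mN}(0))$, and since $\mX\mX^\top$ has full rank this bounds $\norm{\mW(t)}_F$ by some constant $C_1$. To upgrade this to a bound on each factor, I would invoke the standard conservation law for gradient flow on deep linear networks, namely $\frac{d}{dt}\bigl(\mW_{i+1}(t)^\top\mW_{i+1}(t) - \mW_i(t)\mW_i(t)^\top\bigr) = \bm{0}$ for each $i \in [N-1]$. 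Taking traces gives $\norm{\mW_{i+1}(t)}_F^2 - \norm{\mW_i(t)}_F^2 = \mathrm{const}$, so all layers grow or stay bounded together. If $\norm{\mW_{i_0}(t_n)}_F \to \infty$ along some sequence, then every $\norm{\mW_i(t_n)}_F \to \infty$ at the same rate; using the conservation law to pin the singular-value profiles of consecutive layers together up to a fixed offset, a spectral argument then shows $\norm{\mW(t_n)}_F$ must diverge as well, contradicting $\norm{\mW(t)}_F \leq C_1$. Hence each $\norm{\mW_i(t)}_F$ is uniformly bounded, and the trajectory lies in a fixed compact set. Standard ODE extension theory then rules out finite-time blow-up, so $T_{\max} = +\infty$.

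Finally, for convergence, since $L^N$ is real-analytic (indeed polynomial) and the orbit $\{\mW_{\mN}(t)\}_{t \geq 0}$ is bounded, the Łojasiewicz gradient inequality applies near every accumulation point of the trajectory. The classical Łojasiewicz argument then yields finite total arclength, $\int_0^\infty \norm{\dot{\mW}_{\mN}(t)}_F\, dt < +\infty$, so $\mW_{\mN}(t)$ converges to a unique limit $\mW_{\mN}^\star$. Combined with integrability of $\sum_i \norm{\nabla_{\mW_i}L^N(\mW_{\mN}(t))}_F^2$ (which follows from $L^N$ being non-increasing and bounded below), one concludes $\nabla L^N(\mW_{\mN}^\star) = \bm{0}$, i.e.\ $\mW_{\mN}^\star$ is a critical point of $L^N$.

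I expect the main obstacle to be boundedness of each individual factor $\mW_i(t)$, rather than just the product $\mW(t)$. A priori a ``splitting'' scenario where some $\norm{\mW_i}_F \to \infty$ while others shrink could keep the product bounded; the conservation of Gram-matrix differences is exactly what excludes this, but translating it into a genuine upper bound on each factor in terms of the bounded product requires a careful spectral argument — essentially showing that weights whose Gram matrices differ by fixed constants and whose product has bounded Frobenius norm cannot have any individual factor of unbounded norm.
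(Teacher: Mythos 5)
This statement is imported verbatim from \citet{bah2021} (their Theorem~5); the paper you are reading gives no proof of it, only the citation. Your outline nonetheless follows essentially the same route as the original proof in that reference: local existence by Picard--Lindel\"of, boundedness of the end-to-end product $\mW(t)$ from the decrease of $L^N$ together with $\lambda_{\min}(\mX\mX^\top)>0$, boundedness of the individual factors via the conserved quantities $\mW_{i+1}^\top\mW_{i+1}-\mW_i\mW_i^\top$, global existence by compactness, and convergence of the whole trajectory (not just subsequential convergence) via the \L{}ojasiewicz inequality for the polynomial $L^N$. So the strategy is the right one and matches the source.

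The one place your argument is a sketch rather than a proof is exactly the step you flag yourself: passing from ``$\norm{\mW(t)}_F$ bounded'' plus the Gram-matrix conservation law to ``each $\norm{\mW_i(t)}_F$ bounded.'' Note that taking traces only ties the Frobenius norms of consecutive layers up to additive constants; what actually excludes the splitting scenario is the full matrix identity $\mW_{i+1}^\top\mW_{i+1}=\mW_i\mW_i^\top+\mC_i$, which forces the singular values \emph{and} the relevant singular subspaces of consecutive factors to align up to a bounded perturbation. The quantitative version (roughly $\norm{\mW_j}^2 \le \norm{\mW}^{2/N}+c$ for a constant $c$ depending on the initialization, proved by propagating Weyl-type perturbation bounds through the chain of Gram identities) is a standalone lemma in \citet{bah2021}, and your proof is not complete without it. Since the claim you defer is true and is proved in the cited source, I would classify your proposal as a correct outline with one essential lemma stated but not established.
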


\begin{defn}[Definition 27 in \citet{bah2021}]
    Let $(\gM, g)$ be a Riemannian manifold with Levi-Civita connection $\nabla$ and let $f : \gM \to \sR$
    be a twice continuously differentiable function. 
    A critical point $x_0 \in \gM$, i.e., $\nabla^g f(x_0) = 0$ is called a strict saddle point if Hess $f(x)$ has a negative eigenvalue. We denote the set of all strict saddles of $f$ by $\gX = \gX(f)$.
    We say that $f$ has the strict saddle point property, if all critical points of f that are not local minima are
    strict saddle points.
\end{defn}

The following theorem shows that flows avoid strict saddle points almost surely (See Section 6.2 in \citet{bah2021} for detail).

\begin{thm}[Theorem 28 in \citet{bah2021}]\label{thm:aux-3}
Let $L : \gM \to \sR$ be a $C^2$-function on a second countable finite dimensional Riemannian
manifold $(\gM, g)$, where we assume that $\gM$ is of class $C^2$ as a manifold and the metric $g$ is of class $C^1$.
Assume that $\phi_t(x_0)$ exists for all $x_0 \in \gM$ and all $t \in [0, +\infty)$. Then the set
\[ \gS_L := \{x_0\in\gM : \lim_{t \to +\infty} \phi_t(x_0) \in\gX = \gX(L)\} \]
of initial points such that the corresponding flow converges to a strict saddle point of $L$ has measure zero.
\end{thm}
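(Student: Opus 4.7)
The plan is to invoke the center-stable manifold theorem at each strict saddle and then stitch the resulting local null sets together using second countability and the fact that each $\phi_t$ is a diffeomorphism. The key insight is that for a gradient flow $\dot{x}=-\nabla L(x)$, at a critical point $x^*$ the linearization of the time-one map $\phi_1$ is $\exp(-\mathrm{Hess}\,L(x^*))$ in a chart adapted to the metric, so a negative Hessian eigenvalue translates into an eigenvalue of $D\phi_1(x^*)$ with modulus strictly greater than one — i.e., a genuine unstable direction for the discrete dynamical system $\phi_1$.

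First, fix $x^* \in \gX(L)$. Applying the center-stable manifold theorem (e.g., Shub, \emph{Global Stability of Dynamical Systems}, Thm.~III.7) to the local $C^1$-diffeomorphism $\phi_1$ at its fixed point $x^*$, I obtain a neighborhood $U_{x^*}$ and a $C^1$-embedded submanifold $W^{cs}_{\mathrm{loc}}(x^*) \subseteq U_{x^*}$ of dimension strictly less than $\dim \gM$ such that every forward $\phi_1$-orbit that stays in $U_{x^*}$ lies inside $W^{cs}_{\mathrm{loc}}(x^*)$. In particular, $W^{cs}_{\mathrm{loc}}(x^*)$ is Lebesgue-null in $\gM$. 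Next, any continuous trajectory with $\phi_t(x_0)\to x^*$ must eventually be trapped in $U_{x^*}$, so there exists $n \in \sN$ with $\phi_n(x_0) \in W^{cs}_{\mathrm{loc}}(x^*)$; equivalently $x_0 \in \phi_{-n}(W^{cs}_{\mathrm{loc}}(x^*))$. Since $\phi_{-n}$ is a diffeomorphism it preserves null sets, so $\bigcup_{n \ge 0}\phi_{-n}(W^{cs}_{\mathrm{loc}}(x^*))$ is null. To handle all saddles simultaneously I would use that $\gM$ is second countable (hence Lindel\"of), so the open cover $\{U_{x^*}\}_{x^* \in \gX}$ of $\gX$ admits a countable subcover $\{U_{x_k^*}\}_{k \in \sN}$, and then
\[ \gS_L \;\subseteq\; \bigcup_{k \in \sN}\bigcup_{n \ge 0} \phi_{-n}\bigl(W^{cs}_{\mathrm{loc}}(x_k^*)\bigr), \]
which is a countable union of null sets, hence null.

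The chief obstacle is that $\gX$ need not be discrete or even countable — it can be a positive-dimensional critical submanifold, or worse — so one cannot naively index stable manifolds by the saddles themselves. Second countability is exactly what rescues the covering step, but using it correctly requires the center-stable manifold theorem to produce, at every saddle, a neighborhood whose captured orbits lie on a $C^1$ set of strictly smaller dimension; the uniformity of this dimension drop along the (possibly non-isolated) saddle set is the delicate point. A subordinate technicality is the passage from the discrete-time center-stable manifold of $\phi_1$ to the continuous-time convergence set of $\phi_t$: this is routine because $\phi_t(x_0)\to x^*$ as $t\to\infty$ iff the sampled sequence $(\phi_n(x_0))_{n\in\sN}$ does, after which the orbit stays in $U_{x^*}$ and therefore in $W^{cs}_{\mathrm{loc}}(x^*)$.
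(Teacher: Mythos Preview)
The paper does not prove this statement at all: it is quoted verbatim as Theorem~28 of \citet{bah2021} in the appendix under ``Previous Results'' and is used as a black box in the proof of Theorem~\ref{thm:rank}. So there is no proof in the paper to compare your proposal against.

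That said, your outline is the standard route to this result and is essentially sound. The linearization step (a negative Hessian eigenvalue at $x^*$ yields a $D\phi_1(x^*)$-eigenvalue of modulus $>1$), the invocation of the center-stable manifold theorem for the $C^1$ diffeomorphism $\phi_1$, the pullback by $\phi_{-n}$, and the Lindel\"of extraction are all correct. Your handling of the non-isolated saddle issue is also right, though you undersell it: the crucial point is that the local center-stable manifold $W^{cs}_{\mathrm{loc}}(x_k^*)$ traps \emph{every} $\phi_1$-orbit that never leaves $U_{x_k^*}$, not merely those converging to $x_k^*$ itself. Thus an orbit converging to any saddle in $U_{x_k^*}$ is eventually confined to $U_{x_k^*}$ and hence lands on $W^{cs}_{\mathrm{loc}}(x_k^*)$ --- no uniformity of the ``dimension drop'' across the saddle set is needed, only that at each chosen center $x_k^*$ the drop is at least one. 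One small regularity check you should make explicit: $L\in C^2$ and $g\in C^1$ give a $C^1$ gradient vector field, hence a $C^1$ flow, which is exactly what Shub's theorem requires.
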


\begin{prop}[Proposition 33 in \citet{bah2021}]\label{thm:aux-4}
    The function $L^1$ on $\gM_k$ for $k \leq r$ satisfies the strict saddle point property. More precisely, all critical points of $L^1$ on $\gM_k$ except for the global minimizers are strict saddle points.
\end{prop}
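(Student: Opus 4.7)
\textbf{Proof proposal for Proposition \ref{thm:aux-4}.} The plan is to combine a classical SVD-based enumeration of critical points of $L^1(\mW) = \norm{\mW - \mZ}_F^2$ on $\gM_k$ with an explicit tangent direction of descent obtained by ``swapping'' a selected singular component with a larger unselected one. First I would write down the tangent space to $\gM_k$ at a rank-$k$ matrix $\mW = \mU_W \mSigma_W \mV_W^\top$: it consists of matrices whose column space lies in $\mathrm{span}(\mU_W)$ plus those whose row space lies in $\mathrm{span}(\mV_W)$. The first-order optimality condition is that the Euclidean gradient $2(\mW - \mZ)$ is orthogonal to this tangent space. A standard SVD argument then forces both $\mU_W^\top(\mW-\mZ) = \bm{0}$ and $(\mW-\mZ)\mV_W = \bm{0}$.

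Combining this with $\mathrm{rank}(\mW) = k$ and the fact that the singular values of $\mZ$ are distinct under Assumption \ref{ass}, I would conclude that every critical point has the form
\[ \mW^{*} = \sum_{i \in I} s_i \vu_i \vv_i^\top, \qquad I \subseteq [d], \ |I|=k, \]
with $s_i, \vu_i, \vv_i$ the singular triples of $\mZ$. Since $L^1(\mW^*) = \sum_{i \notin I} s_i^2 + \mathrm{const}$, the global minimizers are exactly those with $I = \{1,\dots,k\}$. For any other critical point there must exist $i \in I$ and $j \notin I$ with $s_i < s_j$, and this pair will power the saddle certificate.

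To establish the strict saddle property, I would construct the explicit admissible curve
\[ \gamma(\theta) = \sum_{l \in I \setminus \{i\}} s_l \vu_l \vv_l^\top + s_i \bigl(\cos\theta \, \vu_i + \sin\theta \, \vu_j\bigr)\bigl(\cos\theta \, \vv_i + \sin\theta \, \vv_j\bigr)^\top, \]
which stays in $\gM_k$ because each summand is a rank-one matrix built from mutually orthogonal left and right factors. Expanding $L^1(\gamma(\theta)) = \norm{\gamma(\theta) - \mZ}_F^2$ using the orthonormality of the singular bases of $\mZ$ collapses to a small trigonometric polynomial in $\theta$, and differentiating twice at $\theta = 0$ yields a value proportional to $-s_i(s_j - s_i) < 0$. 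Since $\gamma(0) = \mW^*$ is critical, the Euclidean second derivative along $\gamma$ agrees with the Riemannian Hessian of $L^1|_{\gM_k}$ in the tangent direction $\dot\gamma(0)$, so this direction is a negative-curvature direction, certifying the strict saddle property.

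The main obstacle I anticipate is the technical check that $\gamma$ is genuinely a smooth curve into $\gM_k$ (the rotated rank-one piece could degenerate only if the rotated left and right vectors are parallel, which never happens for the chosen pair since $\{\vu_i,\vu_j\}$ and $\{\vv_i,\vv_j\}$ are orthonormal) and the verification that the Riemannian Hessian correction term involving the second fundamental form of $\gM_k$ vanishes at critical points of $L^1$, so the naive Euclidean computation does give the correct sign. The bookkeeping that every non-global critical configuration $I$ admits some violating pair $(i,j)$ with $s_i < s_j$ is then immediate from $I \neq \{1,\dots,k\}$.
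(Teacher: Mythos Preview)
The paper does not prove this proposition at all: it is listed in the appendix under ``Previous Results'' and simply cited as Proposition~33 of \citet{bah2021}. So there is no in-paper argument to compare your proposal against.

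That said, your outline is a correct proof (under Assumption~\ref{ass}, which gives distinct positive singular values of $\mZ$) and is essentially the standard route that \citet{bah2021} take. Two small clarifications. First, your characterisation of the critical set implicitly uses $k\le d$; in the setting of this paper $k\le r=1\le d$, so this is harmless, but in full generality one must also rule out critical points whose singular spectrum picks up zero singular values of $\mZ$ (impossible here since that would drop the rank below $k$). Second, your remark that ``the Riemannian Hessian correction term involving the second fundamental form vanishes at critical points'' is slightly off: the second-fundamental-form term does not vanish, but it is already absorbed in the identity
\[
\left.\frac{d^2}{d\theta^2}\right|_{\theta=0} L^1(\gamma(\theta)) \;=\; \mathrm{Hess}_{\gM_k} L^1(\mW^*)[\dot\gamma(0),\dot\gamma(0)],
\]
which holds at any critical point $\mW^*$ for any smooth curve $\gamma$ in $\gM_k$, because the tangential part of $\nabla L^1(\mW^*)$ is zero and the normal part pairs precisely with $II(\dot\gamma(0),\dot\gamma(0))=(\ddot\gamma(0))_\perp$. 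Your curve computation indeed yields $\left.\tfrac{d^2}{d\theta^2}\right|_{0}L^1(\gamma(\theta))=4s_i(s_i-s_j)<0$, so the negative eigenvalue of the Riemannian Hessian follows, and no separate ``vanishing'' argument is needed.
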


\subsection{Auxiliary Lemmas}
\begin{lemma}[Dynamic of $s(t),\vu(t), \vv(t)$]\label{lemma:dot-s-u-v}
    We give the derivation of $\dot{\vu}(t), \dot{\vv}(t), \dot{s}(t)$ shown in the main context in this lemma:
    \begin{align*}
        \dot{\vu}(t) &= s(t)^{1-\frac{2}{N}}\left(\mI_{d_y}-\vu(t)\vu(t)^\top\right)\mZ\vv(t),  \\
        \dot{\vv}(t) &= s(t)^{1-\frac{2}{N}}\left(\mI_{d_x}-\vv(t)\vv(t)^\top\right)\mZ^\top\vu(t),\\
        \dot{s}(t) &= N s(t)^{2-\frac{2}{N}}\left(\vu(t)^\top\mZ\vv(t)-s(t)\right).
    \end{align*}
\end{lemma}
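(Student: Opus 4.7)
The plan is to start from the induced weight flow \eqref{eq:induced} and the rank-one SVD $\mW(t)=s(t)\vu(t)\vv(t)^\top$, then compute both sides of $\dot{\mW}=-\gA(\mW)$ in closed form and match coefficients. First I would exploit the rank-one identities $\mW\mW^\top=s^2\vu\vu^\top$ and $\mW^\top\mW=s^2\vv\vv^\top$. For $j\in[2,N-1]$ the exponents $(N-j)/N$ and $(j-1)/N$ are strictly positive, so the spectral calculus on rank-one PSD matrices gives $(\mW\mW^\top)^{(N-j)/N}=s^{2(N-j)/N}\vu\vu^\top$ and likewise for $(\mW^\top\mW)^{(j-1)/N}$; for the boundary indices $j=1$ and $j=N$ the vanishing exponent is interpreted (as in \citet{arora2019implicit}) as the full identity matrix.

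Substituting into $\gA(\mW)=\sum_{j=1}^N(\mW\mW^\top)^{(N-j)/N}(\mW-\mZ)(\mW^\top\mW)^{(j-1)/N}$, the $\mW$ contribution of every summand collapses (using $\vu^\top\vu=\vv^\top\vv=1$) to the same $s^{3-2/N}\vu\vv^\top$, giving a total of $Ns^{3-2/N}\vu\vv^\top$. For the $\mZ$ contribution, the $N-2$ middle summands each yield $s^{2-2/N}(\vu^\top\mZ\vv)\vu\vv^\top$, while the $j=1$ and $j=N$ boundary summands produce $s^{2-2/N}\vu\vu^\top\mZ$ and $s^{2-2/N}\mZ\vv\vv^\top$ respectively. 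Thus
\begin{equation*}
\dot{\mW} = -Ns^{3-2/N}\vu\vv^\top + s^{2-2/N}\bigl[(N-2)(\vu^\top\mZ\vv)\vu\vv^\top + \vu\vu^\top\mZ + \mZ\vv\vv^\top\bigr].
\end{equation*}
On the other hand, the product rule gives $\dot{\mW} = \dot{s}\vu\vv^\top + s\dot{\vu}\vv^\top + s\vu\dot{\vv}^\top$, and differentiating $\|\vu\|^2=\|\vv\|^2=1$ gives the orthogonality $\vu^\top\dot{\vu}=\vv^\top\dot{\vv}=0$. Pre-multiplying by $\vu^\top$ and post-multiplying by $\vv$ isolates $\dot{s}=\vu^\top\dot{\mW}\vv$, which simplifies to $Ns^{2-2/N}(\vu^\top\mZ\vv-s)$. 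Post-multiplying $\dot{\mW}$ by $\vv$ alone and subtracting $\dot{s}\vu$ leaves $s\dot{\vu}$; after cancelling the $\pm Ns^{3-2/N}\vu$ contributions and combining the two $(\vu^\top\mZ\vv)\vu$ terms, the remainder is exactly $s^{2-2/N}(\mI_{d_y}-\vu\vu^\top)\mZ\vv$. The formula for $\dot{\vv}$ follows by the symmetric computation applied to $\vu^\top\dot{\mW}$.

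The main obstacle is the careful bookkeeping of the boundary indices $j=1$ and $j=N$: the convention for the zero-th power of the rank-deficient matrix $\mW\mW^\top$ matters, and one must justify interpreting it as the identity rather than the projector onto the range of $\vu$. I would appeal to the fact that \eqref{eq:induced} is itself derived in \citet{arora2019implicit} under the same convention, so consistency is automatic. A minor subtlety is that $\vu(t),\vv(t)$ are only defined up to a common sign; once $s(0)>0$, Proposition \ref{prop:base}(1) gives $s(t)>0$ for all $t$, so the rank-one SVD is analytic throughout and the derivatives are unambiguous.
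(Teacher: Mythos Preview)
Your proposal is correct and follows essentially the same approach as the paper: differentiate the rank-one SVD, use the orthogonality $\vu^\top\dot{\vu}=\vv^\top\dot{\vv}=0$, expand $\dot{\mW}$ from \eqref{eq:induced} via the rank-one spectral calculus (with the zero-power-equals-identity convention at $j=1,N$), and project to isolate each component. The only cosmetic differences are that the paper cites \citet{arora2019implicit} directly for $\dot{s}$ rather than rederiving it, and it isolates $\dot{\vu}$ by left-multiplying with the projector $(\mI_{d_y}-\vu\vu^\top)$ instead of subtracting $\dot{s}\vu$ after the fact; both routes are equivalent.
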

\begin{proof}
    $\dot{s}(t)$ directly follows \citet[Theorem 3]{arora2019implicit}. 
    As for $\dot{\vu}(t)$ and $\dot{\vv}(t)$, we begin with $\vu(t)^\top\vu(t) = \vv(t)^\top\vv(t) = 1$. Then by taking the derivative of the identities, we get
    \begin{equation}\label{eq:uu-vv}
        \vu(t)^\top\dot{\vu}(t)=\vv(t)^\top\dot{\vv}(t)=0, \forall t\geq 0.
    \end{equation}
    By taking derivative of both sides of the SVD $\mW(t)=s(t)\vu(t)\vv(t)^\top$, we also find that
    \[ \dot{\mW}(t) = s(t)\dot{\vu}(t)\vv(t)^\top + s(t)\vu(t)\dot{\vv}(t)^\top+\dot{s}(t)\vu(t)\vv(t)^\top, \forall t \geq 0. \]
    Hence, multiplying $\left(\mI_{d_y}-\vu(t)\vu(t)^\top\right)$ and $\vv(t)$, we get
    \[ s(t)^{-1} \left(\mI_{d_y}-\vu(t)\vu(t)^\top\right) \dot{\mW}(t) \vv(t) = \left(\mI_{d_y}-\vu(t)\vu(t)^\top\right)\dot{\vu}(t). \]
    From Eq.~\eqref{eq:uu-vv}, we know $\dot{\vu}(t) \perp \vu(t)$. Therefore, we obtain
    \begin{equation}\label{eq:dot-u}
        \dot{\vu}(t) = s(t)^{-1} \left(\mI_{d_y}-\vu(t)\vu(t)^\top\right) \dot{\mW}(t) \vv(t).
    \end{equation}
    Similarly, we can find that
    \begin{equation}\label{eq:dot-v}
        \dot{\vv}(t) = s(t)^{-1} \left(\mI_{d_x}-\vv(t)\vv(t)^\top\right) \dot{\mW}(t)^\top \vu(t).
    \end{equation}
    Now we replace $\dot{\mW}(t)$ by Eq.~\eqref{eq:induced} and $\mW(t)=s(t)\vu(t)\vv(t)^\top$:
    \begin{align*}
	\dot{\mW}(t) \stackrel{\eqref{eq:induced}}{=} & -N s(t)^{2-\frac{2}{N}} \vu(t)\vu(t)^\top \left[\mW(t)-\mZ\right]\vv(t)\vv(t)^\top \\
	&- s(t)^{2-\frac{2}{N}} \left(\mI_{d_y}-\vu(t)\vu(t)^\top\right) \left[\mW(t)-\mZ\right]\vv(t)\vv(t)^\top \\
	&- s(t)^{2-\frac{2}{N}}\vu(t)\vu(t)^\top \left[\mW(t)-\mZ\right]\left(\mI_{d_x}-\vv(t)\vv(t)^\top\right) \\
	=& -N s(t)^{1-\frac{2}{N}} \left(s(t)-\vu(t)^\top \mZ \vv(t) \right)\mW(t)\\
	&+ s(t)^{2-\frac{2}{N}} \left(\mI_{d_y}-\vu(t)\vu(t)^\top\right) \mZ\vv(t)\vv(t)^\top \\
	&+ s(t)^{2-\frac{2}{N}} \vu(t)\vu(t)^\top \mZ\left(\mI_{d_x}-\vv(t)\vv(t)^\top\right).
	\end{align*}
	Substituting $\dot{\mW}(t)$ back into Eq.~\eqref{eq:dot-u} and \eqref{eq:dot-v}, we reach 
	\[ \dot{\vu}(t) = s(t)^{1-\frac{2}{N}}\left(\mI_{d_y}-\vu(t)\vu(t)^\top\right)\mZ\vv(t), \ \dot{\vv}(t) = s(t)^{1-\frac{2}{N}}\left(\mI_{d_x}-\vv(t)\vv(t)^\top\right)\mZ^\top\vu(t). \]
\end{proof}

\begin{prop}[Stationary Singular Vector]\label{prop:station-vector}
	If for time $T \geq 0$, $s(T)>0, \dot{\vu}(T) = \bm{0}, \dot{\bm{\vv}}(T) = \bm{0}$, then
	\begin{equation*}
		\vu(T) = \pm \vu_i, \vv(T) = \pm \vv_i, \mathrm{for \ some} \ i \leq d, \mathrm{ or } \ \vu(T) \perp \vu_i, \vv(T) \perp \vv_i, \forall i \leq d.
	\end{equation*}
	Moreover, $\dot{\vu}(t) = \bm{0}, \dot{\bm{\vv}}(t) = \bm{0}, \forall t \geq T$, that is, $\vu(t) = \vu(T), \vv(t) = \vv(T), \forall t \geq T$.
\end{prop}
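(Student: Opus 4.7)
The plan is to combine the stationarity equations for $\vu$ and $\vv$ into an eigenproblem for $\mZ^\top\mZ$, use the simplicity of the nonzero singular values of $\mZ$ from Assumption~\ref{ass} to classify $(\vu(T),\vv(T))$, and then invoke ODE uniqueness to freeze the directions for all $t\ge T$.

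First, since $s(T)>0$, the scalar prefactor $s(T)^{1-2/N}$ in Eqs.~\eqref{eq:grad-u}--\eqref{eq:grad-v} is strictly positive, so $\dot{\vu}(T)=\bm{0}$ and $\dot{\vv}(T)=\bm{0}$ reduce to the algebraic conditions $(\mI_{d_y}-\vu(T)\vu(T)^\top)\mZ\vv(T)=\bm{0}$ and $(\mI_{d_x}-\vv(T)\vv(T)^\top)\mZ^\top\vu(T)=\bm{0}$. Setting $\lambda := \vu(T)^\top\mZ\vv(T)$, I would rewrite these as $\mZ\vv(T)=\lambda\vu(T)$ and $\mZ^\top\vu(T)=\lambda\vv(T)$, and compose them to get $\mZ^\top\mZ\,\vv(T)=\lambda^2\vv(T)$. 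If $\lambda\ne 0$, simplicity of the nonzero eigenvalues $s_1^2>\dots>s_d^2$ of $\mZ^\top\mZ$ forces $\lambda^2 = s_i^2$ for some $i\le d$ and $\vv(T)=\pm\vv_i$; plugging back into $\mZ\vv(T)=\lambda\vu(T)$ then yields $\vu(T)=\pm\vu_i$. If $\lambda=0$, then $\vv(T)\in\ker\mZ$ and $\vu(T)\in\ker\mZ^\top$, which is precisely the perpendicularity condition $\vv(T)\perp\vv_i$, $\vu(T)\perp\vu_i$ for every $i\le d$.

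For the ``moreover'' clause, I would build a candidate solution by fixing the directions: let $\tilde{\vu}(t)\equiv\vu(T)$, $\tilde{\vv}(t)\equiv\vv(T)$, and let $\tilde{s}(t)$ solve the scalar ODE $\dot{\tilde{s}}=N\tilde{s}^{2-2/N}(\lambda-\tilde{s})$ with $\tilde{s}(T)=s(T)$. In either case identified above, both algebraic stationarity conditions continue to hold with $(\vu,\vv)$ replaced by $(\tilde\vu,\tilde\vv)$, so $(\tilde s,\tilde\vu,\tilde\vv)$ satisfies the full coupled system Eqs.~\eqref{eq:grad-u}--\eqref{eq:grad-s} on $[T,+\infty)$ with the same data at $t=T$ as the true trajectory. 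Uniqueness then pins the true trajectory to this candidate, giving $\vu(t)=\vu(T)$ and $\vv(t)=\vv(T)$ for all $t\ge T$.

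The main obstacle is justifying uniqueness, because the factors $s^{1-2/N}$ and $s^{2-2/N}$ in the vector field are only H\"older continuous at $s=0$ when $N\ge 3$, so Picard--Lindel\"of does not apply directly at the degenerate point. The way out is 1) of Proposition~\ref{prop:base}: starting from $s(T)>0$, we have $s(t)>0$ for all $t\ge T$, so on every compact subinterval of $[T,+\infty)$ the trajectory stays in a region where the vector field is $C^1$. Applying standard local uniqueness along a covering of $[T,+\infty)$ then extends to global uniqueness on the trajectory, closing the argument.
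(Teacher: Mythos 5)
Your proof is correct. The first half — reducing $\dot{\vu}(T)=\dot{\vv}(T)=\bm{0}$ to the algebraic conditions, composing them into $\mZ^\top\mZ\,\vv(T)=\lambda^2\vv(T)$ and $\mZ\mZ^\top\vu(T)=\lambda^2\vu(T)$, and splitting on $\lambda\neq 0$ versus $\lambda=0$ — is essentially identical to the paper's argument. For the ``moreover'' clause, however, you take a genuinely different route. The paper substitutes the stationarity conditions \eqref{eq:sat-uv} into the induced matrix flow \eqref{eq:induced} to show that $\dot{\mW}(T)=-Ns(T)^{2-\frac{2}{N}}\bigl(s(T)-\vu(T)^\top\mZ\vv(T)\bigr)\vu(T)\vv(T)^\top$, i.e., the velocity of $\mW$ at the stationary configuration is a scalar multiple of $\vu(T)\vv(T)^\top$, and from this it asserts that the singular vectors remain frozen; strictly speaking this is a pointwise computation at $t=T$ and the step from ``the velocity is tangent to the ray at every such configuration'' to ``the trajectory stays on the ray for all $t\ge T$'' is an invariance claim left implicit. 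You instead construct an explicit frozen-direction solution $(\tilde s,\tilde\vu,\tilde\vv)$ of the coupled system \eqref{eq:grad-u}--\eqref{eq:grad-s} with the same data at $t=T$ and invoke ODE uniqueness, which makes that invariance step explicit. You also correctly identify the one delicate point — the vector field is only H\"older at $s=0$ when $N\ge 3$ — and dispose of it via 1) of Proposition~\ref{prop:base}, since $s(t)>0$ along the whole trajectory keeps it in the region where the field is $C^1$. The trade-off is that the paper's computation is shorter and stays at the level of the matrix flow, while yours is more self-contained as a rigorous argument; both are valid.
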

\begin{proof}
	From $s(T)>0, \dot{\vu}(T) = \bm{0}, \dot{\bm{\vv}}(T) = \bm{0}$ and Eqs.~\eqref{eq:grad-u} and \eqref{eq:grad-v}, we obtain
	\begin{equation}\label{eq:sat-uv}
	\left(\mI_{d_y}-\vu(T)\vu(T)^\top\right)\mZ\vv(T)=\bm{0}, \  \left(\mI_{d_x}-\vv(T)\vv(T)^\top\right)\mZ^\top\vu(T)=\bm{0}.
	\end{equation}
	Hence, we could see
	\begin{equation}\label{eq:sta}
	\mZ\vv(T) = \vu(T)^\top\mZ\vv(T) \cdot \vu(T), \mZ^\top\vu(T)=\vv(T)^\top\mZ^\top\vu(T) \cdot \vv(T),
	\end{equation}
	showing that $\mZ^\top\mZ\vv(T) = \left[\vu(T)^\top\mZ\vv(T)\right]^2 \cdot \vv(T)$, $\mZ\mZ^\top\vu(T) = \left[\vu(T)^\top\mZ\vv(T)\right]^2 \cdot \vu(T)$. Thus, we can see $\vu(T), \vv(T)$ are the eigenvectors of 
	$\mZ^\top\mZ, \mZ\mZ^\top$ with the same eigenvalue $\left[\vu(T)^\top\mZ\vv(T)\right]^2$. 
	Therefore, if $\vu(T)^\top\mZ\vv(T) \neq 0$, we obtain $\vu(T) = \pm\vu_i, \vv(T) = \pm\vv_i, \mathrm{for \ some} \ i \in [d]$. Otherwise, $\vu(T)^\top\mZ\vv(T) = 0$. From Eq.~\eqref{eq:sta}, we obtain $\mZ\vv(T)=\bm{0}, \mZ^\top\vu(T)=\bm{0}$, showing that $\vu(T) \perp \vu_i, \vv(T) \perp \vv_i, \forall i \in [d]$.
	
	Finally, we note that variation of $s(t)$ can not make $\dot{\vu}(t), \dot{\bm{\vv}}(t)$ become nonzero from time $T$. Specifically, 
	\begin{align*}
	\dot{\mW}(T) \stackrel{\eqref{eq:induced}}{=} & -N s(T)^{2-\frac{2}{N}} \vu(T)\vu(T)^\top \left[\mW(T)-\mZ\right]\vv(T)\vv(T)^\top \\
	&- s(T)^{2-\frac{2}{N}} \left(\mI_{d_y}-\vu(T)\vu(T)^\top\right) \left[\mW(T)-\mZ\right]\vv(T)\vv(T)^\top \\
	&- s(T)^{2-\frac{2}{N}} \vu(T)\vu(T)^\top \left[\mW(T)-\mZ\right]\left(\mI_{d_x}-\vv(T)\vv(T)^\top\right) \\
	\stackrel{\eqref{eq:sat-uv}}{=}& -N s(T)^{2-\frac{2}{N}} \vu(T)\vu(T)^\top \left[\mW(T)-\mZ\right]\vv(T)\vv(T)^\top \\
	\ = \ & -N s(T)^{2-\frac{2}{N}} \left(s(T)-\vu(T)^\top \mZ \vv(T)\right) \cdot \vu(T)\vv(T)^\top.
	\end{align*}
	Therefore, once $\dot{\vu}(T) = \bm{0}, \dot{\bm{\vv}}(T) = \bm{0}$, then $\vu(t) = \vu(T), \vv(t) = \vv(T), \forall t \geq T$.
\end{proof}

\begin{lemma}\label{lemma:s(t)-two}
    If for certain $t \geq 0$, $\vu(t)^\top\mZ\vv(t) = s(t)>0$, and $\dot{\vu}(t) \neq \bm{0}$ or $\dot{\vv}(t) \neq \bm{0}$, then we have
    \[ \frac{d\left(\vu(t)^\top\mZ\vv(t)-s(t)\right)}{dt} > 0. \]
\end{lemma}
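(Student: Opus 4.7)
The plan is to directly differentiate $\vu(t)^\top\mZ\vv(t) - s(t)$ at the given time and show positivity by a Cauchy--Schwarz argument whose equality case is exactly excluded by the hypothesis $\dot{\vu}(t)\neq\bm0$ or $\dot{\vv}(t)\neq\bm0$.

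First, I would observe that at the time $t$ in question, the hypothesis $\vu(t)^\top\mZ\vv(t)=s(t)$ combined with Eq.~\eqref{eq:grad-s} gives $\dot{s}(t)=0$. Hence
\begin{equation*}
\frac{d(\vu(t)^\top\mZ\vv(t)-s(t))}{dt}
=\dot{\vu}(t)^\top\mZ\vv(t)+\vu(t)^\top\mZ\dot{\vv}(t).
\end{equation*}
Substituting the formulas for $\dot{\vu}(t)$ and $\dot{\vv}(t)$ from Eqs.~\eqref{eq:grad-u}--\eqref{eq:grad-v} and using $\vu^\top\vu=\vv^\top\vv=1$, this becomes
\begin{equation*}
s(t)^{1-\frac{2}{N}}\Bigl[\|\mZ\vv(t)\|^2-(\vu(t)^\top\mZ\vv(t))^2+\|\mZ^\top\vu(t)\|^2-(\vu(t)^\top\mZ\vv(t))^2\Bigr].
\end{equation*}

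Next, I would invoke Cauchy--Schwarz twice: since $\|\vu(t)\|=\|\vv(t)\|=1$, we have $(\vu(t)^\top\mZ\vv(t))^2\leq\|\mZ\vv(t)\|^2$ and $(\vu(t)^\top\mZ\vv(t))^2\leq\|\mZ^\top\vu(t)\|^2$, so the bracket is non-negative. The key observation is that equality in the first inequality occurs iff $\mZ\vv(t)$ is proportional to $\vu(t)$, equivalently $(\mI_{d_y}-\vu(t)\vu(t)^\top)\mZ\vv(t)=\bm0$, which by Eq.~\eqref{eq:grad-u} is equivalent to $\dot{\vu}(t)=\bm 0$. Analogously, equality in the second inequality is equivalent to $\dot{\vv}(t)=\bm 0$.

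Therefore, under the hypothesis that $\dot{\vu}(t)\neq\bm0$ or $\dot{\vv}(t)\neq\bm0$, at least one of the two Cauchy--Schwarz inequalities is strict, so the bracket is strictly positive. Combined with $s(t)^{1-2/N}>0$ (which follows from $s(t)>0$), this yields the desired strict positivity. There is no real obstacle here: the whole argument is a direct computation followed by a tight equality-case analysis for Cauchy--Schwarz. The only subtlety to double-check is that the two vanishing conditions for $\dot{\vu}(t),\dot{\vv}(t)$ obtained from the equality cases really match the ones in the statement, but this is immediate from Eqs.~\eqref{eq:grad-u}--\eqref{eq:grad-v} since $s(t)^{1-2/N}\neq 0$.
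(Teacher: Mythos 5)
Your proof is correct and essentially identical to the paper's: the bracket you obtain equals the paper's, since $\|\mZ\vv(t)\|^2-(\vu(t)^\top\mZ\vv(t))^2=\|(\mI_{d_y}-\vu(t)\vu(t)^\top)\mZ\vv(t)\|^2$ (and similarly for the other term), so your Cauchy--Schwarz equality-case analysis is just the projection-norm positivity argument in the paper. The only cosmetic difference is that you use $\vu(t)^\top\mZ\vv(t)=s(t)$ to kill $\dot{s}(t)$ up front, whereas the paper carries the $-Ns(t)(\vu(t)^\top\mZ\vv(t)-s(t))$ term and cancels it at the end.
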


\begin{proof}
    Since $\dot{\vu}(t) \neq \bm{0}$ or $\dot{\vv}(t) \neq \bm{0}$, and $s(t)>0$, by Eqs.~\eqref{eq:grad-u} and \eqref{eq:grad-v}, we obtain 
    \begin{equation}\label{eq:geq0}
        \norm{\left(\mI_{d_y}-\vu(t)\vu(t)^\top\right)\mZ\vv(t)}_2^2 +\norm{\left(\mI_{d_x}-\vv(t)\vv(t)^\top\right)\mZ^\top\vu(t)}_2^2 > 0.
    \end{equation}
    From the derivation of $d(\vu(t)^\top\mZ\vv(t)-s(t)) / d t$ and $d s(t) / d t$, we get
    \begin{equation*}
        \begin{aligned}
            \lefteqn{\frac{d\left(\vu(t)^\top\mZ\vv(t)-s(t)\right)}{d t}}\\
            &\stackrel{\substack{\eqref{eq:grad-s} \\ \eqref{eq:mono-uzv}}}{=} s(t)^{1-\frac{2}{N}} \bigg[\norm{\left(\mI_{d_y}-\vu(t)\vu(t)^\top\right)\mZ\vv(t)}_2^2 +\norm{\left(\mI_{d_x}-\vv(t)\vv(t)^\top\right)\mZ^\top\vu(t)}_2^2 - N s(t)\left(\vu(t)^\top\mZ\vv(t)-s(t)\right)\bigg] \\
            &\ = \ s(t)^{1-\frac{2}{N}} \bigg[\norm{\left(\mI_{d_y}-\vu(t)\vu(t)^\top\right)\mZ\vv(t)}_2^2 +\norm{\left(\mI_{d_x}-\vv(t)\vv(t)^\top\right)\mZ^\top\vu(t)}_2^2 \bigg] \stackrel{\eqref{eq:geq0}}{>}0, 
        \end{aligned}
    \end{equation*}
    where the second equality uses the assumption $\vu(t)^\top\mZ\vv(t) = s(t)$.
\end{proof}

\section{Missing Proofs}

\subsection{Proof of Proposition \ref{prop:base}}
\begin{proof}
	1). From \citet[Theorem 5]{bah2021} (Theorem \ref{thm:aux-1}), we have $\mW(t)$ converges. Thus $s(t) = \norm{\mW(t)}_F$ also converges, and not diverges to infinity. Applying \citet[Lemma 4]{arora2019implicit} (Lemma \ref{thm:aux-2}), we can see $s(t)$ obviously preserves the sign of its initial value.
	
	2). $\vu(t)^\top\mZ\vv(t)$ is non-decreasing follows
	\begin{equation}\label{eq:mono-uzv}
	\begin{aligned}
	& \frac{d \vu(t)^\top\mZ\vv(t)}{d t} = \frac{d \vu(t)^\top}{d t}\mZ\vv(t) + \vu(t)^\top\mZ\frac{d \vv(t)}{d t} \\
	\stackrel{\eqref{eq:grad-u}, \eqref{eq:grad-v}}{=} & s(t)^{1-\frac{2}{N}} \left[\norm{\left(\mI_{d_y}-\vu(t)\vu(t)^\top\right)\mZ\vv(t)}_2^2 +\norm{\left(\mI_{d_x}-\vv(t)\vv(t)^\top\right)\mZ^\top\vu(t)}_2^2 \right] \geq 0.
	\end{aligned}
	\end{equation}
	Additionally, since $\norm{\vu(t)} = \norm{\vv(t)} = 1$, we have $\vu(t)^\top\mZ\vv(t) \leq s_1$. Hence, $\vu(t)^\top\mZ\vv(t)$ converges.
	
	3). Using Eq.~\eqref{eq:grad-ab}, we obtain
	\begin{equation*}
	\begin{aligned}
	\frac{d a_1(t)b_1(t)}{d t} &= \frac{d a_1(t)}{d t} \cdot b_1(t) + a_1(t) \cdot \frac{d b_1(t)}{d t} \stackrel{\eqref{eq:grad-ab}}{=} s(t)^{1-\frac{2}{N}}\bigg(s_1b_1^2(t) + s_1a_1^2(t)-2a_1(t)b_1(t) \sum_{j=1}^d\left[s_j a_j(t)b_j(t)\right]\bigg)\\
	&\geq s(t)^{1-\frac{2}{N}}\bigg(s_1b_1^2(t)+s_1a_1^2(t)-2s_1 |a_1(t)b_1(t)| \sum_{j=1}^d \left|a_j(t)b_j(t)\right| \bigg)\\
	&\geq s(t)^{1-\frac{2}{N}}\bigg(s_1b_1^2(t)+s_1a_1^2(t)-2s_1 |a_1(t)b_1(t)| \bigg) = s_1s(t)^{1-\frac{2}{N}}\left(|b_1(t)|-|a_1(t)|\right)^2\ge 0,
	\end{aligned}
	\end{equation*}
	where the second inequality uses Cauchy inequality:
	\begin{equation*}
	\bigg(\sum_{j=1}^d \left|a_j(t)b_j(t)\right|\bigg)^2 \leq \bigg(\sum_{j=1}^d a^2_j(t)\bigg) \cdot \bigg(\sum_{j=1}^d b^2_j(t)\bigg) \leq \bigg(\sum_{j=1}^{d_y} a^2_j(t)\bigg) \cdot \bigg(\sum_{j=1}^{d_x} b^2_j(t)\bigg) = 1.
	\end{equation*}
	
 	We note that $s_1 a_1(t) b_1(t) = s_1 \cdot \vu(t)^\top \vu_1 \cdot \vv_1^\top\vv(t) = \vu(t)^\top \left(s_1\vu_1 \vv_1^\top\right)\vv(t) =\vu(t)^\top\mZ_1 \vv(t)$. Hence, we obtain $\vu(t)^\top\mZ_1\vv(t)$ is non-decreasing.
	Moreover, since $\norm{\vu(t)} = \norm{\vv(t)} = 1$, we have $\vu(t)^\top\mZ_1\vv(t) \leq s_1$. Hence, $\vu(t)^\top\mZ_1\vv(t)$ also converges.
	
	4). Using the derivation in the above, we obtain 
	\begin{equation}\label{eq:aibi}
	\begin{aligned}
	\frac{d \left(a_i(t)+b_i(t)\right)}{dt} & \stackrel{\eqref{eq:grad-ab}}{=} s(t)^{1-\frac{2}{N}}\bigg[s_i\left(b_i(t) + a_i(t)\right)-\left(a_i(t)+b_i(t)\right) \sum_{j=1}^d\left[s_ja_j(t)b_j(t)\right]\bigg] \\
	&= s(t)^{1-\frac{2}{N}}\left(a_i(t) + b_i(t)\right)\bigg(s_i-\sum_{j=1}^d\left[s_ja_j(t)b_j(t)\right]\bigg).
	\end{aligned}
	\end{equation}
	Moreover, $|a_{i}(t)+ b_{i}(t)| = |\vv_{i}^\top\vu(t)+\vv_{i}^\top\vv(t)|\leq 2$, showing that $a_i(t)+b_i(t)$ does not diverge to infinity.
	Hence, by \citet[Lemma 4]{arora2019implicit} (Lemma \ref{thm:aux-2}), $a_i(t)+b_i(t)$ obviously preserves the sign of its initial value.
	
	5). Since $a_i(0)+b_i(0) = 0$, we get $a_i(t)+b_i(t) = 0$ by 4), i.e., 
	\begin{equation}\label{eq:ab-relat}
	a_i(t)= -b_i(t), \forall i\in[k], t \geq 0. 
	\end{equation} 
	Now we can bound
	\begin{equation}\label{eq:ineq2}
	\begin{aligned}
	\sum_{j=k+1}^d s_ja_j(t)b_j(t) & \leq s_{k+1} \sum_{j=k+1}^d |a_j(t)b_j(t)| \leq s_{k+1} \sqrt{\sum_{j=k+1}^d a_j^2(t) \sum_{j=k+1}^d b_j^2(t)} \\
	&\leq s_{k+1}\sqrt{\bigg(1-\sum_{j=1}^k a_j^2(t)\bigg) \cdot \bigg(1-\sum_{j=1}^k b_j^2(t)\bigg)} \stackrel{\eqref{eq:ab-relat}}{=} s_{k+1}\bigg(1-\sum_{j=1}^k a_j^2(t)\bigg).
	\end{aligned}	
	\end{equation}
	Hence, we obtain 
	\begin{equation}\label{eq:sk}
	\begin{aligned}
	& \qquad s_{k+1}-\sum_{j=1}^d\left[s_j a_j(t)b_j(t)\right] \stackrel{\eqref{eq:ab-relat}}{=} s_{k+1}+\sum_{j=1}^{k}s_j a_j^2(t)-\sum_{j=k+1}^d\left[s_j a_j(t)b_j(t)\right] \\
	&\stackrel{\eqref{eq:ineq2}}{\geq} s_{k+1} \bigg(1+\sum_{j=1}^k a_j^2(t) \bigg)-s_{k+1} \bigg(1-\sum_{j=1}^k a_j^2(t) \bigg) = 2s_{k+1}\sum_{j=1}^k a_j^2(t) \geq 0.
	\end{aligned}
	\end{equation}
	Now we consider the gradient of $a_{k+1}(t)+b_{k+1}(t)$:
	\begin{equation}\label{eq:s-sab}
	    \frac{d \left(a_{k+1}(t)+b_{k+1}(t)\right)}{d t} \stackrel{\eqref{eq:aibi}}{=} s(t)^{1-\frac{2}{N}}\left(a_{k+1}(t) + b_{k+1}(t)\right)\bigg(s_{k+1}-\sum_{j=1}^d\left[s_j a_j(t)b_j(t)\right]\bigg).
	\end{equation}
	If $a_{k+1}(t)+b_{k+1}(t)>0$, from Eqs.~\eqref{eq:sk} and \eqref{eq:s-sab} we can see $d \left(a_{k+1}(t)+b_{k+1}(t)\right)/d t \geq 0$. Thus, $a_{k+1}(t)+b_{k+1}(t)$ is non-decreasing. The case of $a_{k+1}(t)+b_{k+1}(t)<0$ is similar. Therefore, we get $|a_{k+1}(t)+b_{k+1}(t)|$ is non-decreasing.
	Since $\norm{\vu(t)} = \norm{\vv(t)} = 1$, we have $|a_{k+1}(t)+ b_{k+1}(t)| = |\vv_{k+1}^\top\vu(t)+\vv_{k+1}^\top\vv(t)|\leq 2$. Hence, $|a_{k+1}(t)+b_{k+1}(t)|$ converges. 
	Moreover, we note that from 4), $a_{k+1}(t)+ b_{k+1}(t)$ preserves the sign of its initial value, showing that $\lim_{t \to +\infty}a_{k+1}(t)+b_{k+1}(t)$ exists.
\end{proof}

\subsection{Proof of Lemma \ref{lemma:baruv}}
\begin{proof}
	From 1) in Proposition \ref{prop:base} and $s(0) \neq 0$, we obtain $s(t)>0, \forall t>0$.
	
	\textbf{Case 1.} If $\vu(t_0)^\top \mZ \vv(t_0) > 0$ for some $t_0\geq 0$, we get $\mW(t_0) \in \gH_b(\mZ)$ for some $b>0$. Hence, by Lemma \ref{lemma:hb}, we obtain $\bar{s}>0$. 
	From Eq.~\eqref{eq:grad-s}, we obtain
	\[ 0 = \lim_{t \to +\infty}N s(t)^{2-\frac{2}{N}}\left(\vu(t)^\top\mZ\vv(t)-s(t)\right). \]
	Using $s(t) \to \bar{s}>0$ again, we obtain $\lim_{t \to +\infty}\vu(t)^\top\mZ\vv(t)$ exists. Therefore,
	\[  0 = \lim_{t \to +\infty}\frac{d \vu(t)^\top\mZ\vv(t)}{d t} \stackrel{\eqref{eq:mono-uzv}}{=} \lim_{t \to +\infty} s(t)^{1-\frac{2}{N}} \left[\norm{\left(\mI_{d_y}-\vu(t)\vu(t)^\top\right)\mZ\vv(t)}_2^2 +\norm{\left(\mI_{d_x}-\vv(t)\vv(t)^\top\right)\mZ^\top\vu(t)}_2^2 \right].
	\]
	By $s(t) \to \bar{s}>0$, we obtain
	$\left(\mI_{d_y}-\vu(t)\vu(t)^\top\right)\mZ\vv(t) \to \bm{0}$, and $\left(\mI_{d_x}-\vv(t)\vv(t)^\top\right)\mZ^\top\vu(t) \to \bm{0}$.
	Thus, we can choose $t_n=n$ for example. 
	
	\textbf{Case 2.} If $\vu(t)^\top \mZ \vv(t) \leq 0, \forall t \geq 0$. Then from Eq.~\eqref{eq:grad-s}, we get $\dot{s}(t) \leq 0, \forall t\geq 0$. Hence, $s(t) \leq s(0)$. Moreover, by 2) in Proposition \ref{prop:base}, we have $\vu(t)^\top \mZ \vv(t) \geq \vu(0)^\top \mZ \vv(0)$. Therefore,
	\begin{equation}\label{eq:lower-ode}
	\begin{aligned}
	\dot{s}(t) &\stackrel{\eqref{eq:grad-s}}{=} N s(t)^{2-\frac{2}{N}}\left(\vu(t)^\top\mZ\vv(t)-s(t)\right)\ge  N s(t)^{2-\frac{2}{N}}\left(\vu(0)^\top\mZ\vv(0)-s(0)\right).
	\end{aligned}
	\end{equation}
	Now we denote 
	\[ C(a):= \inf_{t\geq a}\norm{\left(\mI_{d_y}-\vu(t)\vu(t)^\top\right)\mZ\vv(t)}_2^2 +\norm{\left(\mI_{d_x}-\vv(t)\vv(t)^\top\right)\mZ^\top\vu(t)}_2^2 \geq 0. \]
	In the following, we show that $C(a)=0, \forall a \geq 0$.
	
	1) If $N=2$, then we can see $\forall t\geq a$, by $\vu(t)^\top \mZ \vv(t) \leq 0, \forall t \geq 0$,
	\begin{align*}
		&- \vu(a)^\top\mZ\vv(a) \geq \vu(t)^\top\mZ\vv(t) - \vu(a)^\top\mZ\vv(a) = \int_{a}^t \frac{d \vu(x)^\top\mZ\vv(x)}{dx} dx \\
		\stackrel{\eqref{eq:mono-uzv}}{=}&
		\int_{a}^t \left[\norm{\left(\mI_{d_y}-\vu(x)\vu(x)^\top\right)\mZ\vv(x)}_2^2 +\norm{\left(\mI_{d_x}-\vv(x)\vv(x)^\top\right)\mZ^\top\vu(x)}_2^2\right] dx \geq C(a)(t-a).
	\end{align*}
	Taking $ t \to +\infty$, we obtain $C(a)=0, \forall a\geq 0$.

	2) If $N>2$, by solving Eq.~\eqref{eq:lower-ode}, we get 
	\[ \frac{N}{2-N} \cdot s(t)^{\frac{2}{N}-1} - \frac{N}{2-N} \cdot s(0)^{\frac{2}{N}-1} \geq N \left(\vu(0)^\top\mZ\vv(0)-s(0)\right)t. \]
	Therefore, we obtain
	\begin{equation}\label{eq:s-upper}
	s(t)^{\frac{2}{N}-1} \leq (N-2)\left(s(0)-\vu(0)^\top\mZ\vv(0)\right)t+s(0)^{\frac{2}{N}-1}:=A+Bt, A,B>0.
	\end{equation}
	Then we can see $\forall t\geq a$,
	\begin{align*}
		- \vu(a)^\top\mZ\vv(a) & \ \geq \vu(t)^\top\mZ\vv(t) - \vu(a)^\top\mZ\vv(a) = \int_{a}^t \frac{d \vu(x)^\top\mZ\vv(x)}{dx} dx \\
		&\stackrel{\eqref{eq:mono-uzv}}{=} \int_{a}^t s(x)^{1-\frac{2}{N}} \left[\norm{\left(\mI_{d_y}-\vu(x)\vu(x)^\top\right)\mZ\vv(x)}_2^2 +\norm{\left(\mI_{d_x}-\vv(x)\vv(x)^\top\right)\mZ^\top\vu(x)}_2^2\right] dx \\
		& \ \geq C(a)\int_{a}^ts(x)^{1-\frac{2}{N}} d x \stackrel{\eqref{eq:s-upper}}{\geq} C(a)\int_{a}^t\frac{1}{A+Bx} d x = \frac{C(a)}{B}\ln\frac{A+B t}{A+Ba}.
	\end{align*}
	Taking $ t \to +\infty$, we obtain $C(a)=0, \forall a\geq 0$.
	
	Therefore, combining 1) and 2), we conclude $C(a)=0, \forall a\geq 0$. Hence, we can find a sequence $\{t_n\}$ with $t_n \to +\infty$, s.t., 
	\begin{align*}
	0 &= \lim_{n \to +\infty}\norm{\left(\mI_{d_y}-\vu(t_n)\vu(t_n)^\top\right)\mZ\vv(t_n)}_2^2 +\norm{\left(\mI_{d_x}-\vv(t_n)\vv(t_n)^\top\right)\mZ^\top\vu(t_n)}_2^2.
	\end{align*}
	Thus, $\lim_{n \to +\infty}\left(\mI_{d_y}-\vu(t_n)\vu(t_n)^\top\right)\mZ\vv(t_n) = \bm{0}$, and $\lim_{n \to +\infty}\left(\mI_{d_x}-\vv(t_n)\vv(t_n)^\top\right)\mZ^\top\vu(t_n) = \bm{0}$. 
	
	Now we adopt the expansion following Eq.~\eqref{eq:uv-exp}: $\vu(t) = \sum_{i=1}^{d_y} a_i(t)\vu_i, \ \vv(t) = \sum_{i=1}^{d_x} b_i(t)\vv_i$. Thus, we have
	\[ \vu(t)^\top\mZ\vv(t) \stackrel{\eqref{eq:uzv}}{=} \sum_{j=1}^d s_j a_j(t)b_j(t), \mZ\vv(t) \stackrel{\eqref{eq:uzv}}{=} \sum_{i=1}^{d} s_i b_i(t) \vu_i, \mZ^\top\vu(t) \stackrel{\eqref{eq:uzv}}{=} \sum_{i=1}^{d} s_i a_i(t) \vv_i. \]
	Therefore, we obtain 
	\begin{equation*}
	\left(\mI_{d_y}-\vu(t_n)\vu(t_n)^\top\right)\mZ\vv(t_n) = \sum_{i=1}^{d_y} \left[s_i b_i(t_n) - \bigg(\sum_{j=1}^d s_j a_j(t_n)b_j(t_n)\bigg) a_i(t_n) \right]\vu_i,
	\end{equation*}
	where we utilize $s_i=0, \forall i> d$. Since $\lim_{n \to +\infty}\left(\mI_{d_y}-\vu(t_n)\vu(t_n)^\top\right)\mZ\vv(t_n) = \bm{0}$, and $\vu_i$s are orthonormal basis, we obtain Eq.~\eqref{eq:abti}.
	Similarly, we could obtain Eq.~\eqref{eq:bati} by $\lim_{n \to +\infty}\left(\mI_{d_x}-\vv(t_n)\vv(t_n)^\top\right)\mZ^\top\vu(t_n) = \bm{0}$. 
	
	Finally, adding the equation in Eq.~\eqref{eq:abti} and Eq.~\eqref{eq:bati}, we obtain
	\[ \lim_{n \to +\infty} \bigg(\sum_{j=1}^d s_j a_j(t_n)b_j(t_n)-s_i\bigg) \left(a_i(t_n)+b_i(t_n)\right) = 0, \forall i \in [d]. \]
	Since we have for some $i_0 \in[d]$ that $\lim_{n \to +\infty} a_{i_0}(t_n)+b_{i_0}(t_n)$ exists and not zero. Thus we obtain
	\[ \lim_{n \to +\infty} \vu(t_n)^\top\mZ\vv(t_n) \stackrel{\eqref{eq:uzv}}{=} \lim_{n \to +\infty} \sum_{j=1}^d s_j a_j(t_n)b_j(t_n) = s_{i_0}. \]
	The proof is finished. 
\end{proof}

\subsection{Proof of Lemma \ref{lemma:getab}}
\begin{proof}
    Since $a_i(t_n)+b_i(t_n)=0, \forall i \in [k], n \geq 0$, we obtain 
    \begin{equation}\label{eq:ab-sign}
        b_i(t_n) = -a_i(t_n), \forall i \in [k], n \geq 0,
    \end{equation}
    and
	\begin{equation*}
		\sum_{j=1}^d s_j a_j(t_n) b_j(t_n) = -\sum_{j=1}^{k} s_j a_j^2(t_n)+\sum_{j=k+1}^d s_ja_j(t_n)b_j(t_n) \stackrel{\eqref{eq:ineq2}}{\leq} s_{k+1} \bigg(1-\sum_{j=1}^k a_j^2(t_n) \bigg).
	\end{equation*}
	Taking limit inferior in both sides and noting that $\lim_{n \to +\infty}\sum_{j=1}^d s_j a_j(t_n)b_j(t_n) = s_{k+1}$, we get
	\begin{equation}\label{eq:limsup}
	    s_{k+1} \leq \liminf_{n \to +\infty} s_{k+1}\left(1-\sum_{j=1}^k a_j^2(t_n)\right).
	\end{equation}
	Moreover, naturally we have 
	\begin{equation}\label{eq:liminf}
	    \limsup_{n \to +\infty} s_{k+1}\left(1-\sum_{j=1}^k a_j^2(t_n)\right) \leq s_{k+1}.
	\end{equation}
	By Eq.~\eqref{eq:limsup} and Eq.~\eqref{eq:liminf}, we obtain $\lim_{n \to +\infty}\sum_{j=1}^k a_j^2(t_n)=0$, showing that
	\begin{equation}\label{eq:abj}
		\lim_{n \to +\infty } -b_j(t_n) \stackrel{\eqref{eq:ab-sign}}{=} \lim_{n \to +\infty }a_j(t_n)=0, \forall j \in [k].
	\end{equation}
	Hence, we derive that
	\begin{equation}\label{eq:sab}
		\lim_{n \to +\infty} \sum_{j=k+1}^d s_j a_j(t_n)b_j(t_n) = \lim_{n \to +\infty} \sum_{j=1}^d s_j a_j(t_n)b_j(t_n) - \lim_{n \to +\infty} \sum_{j=1}^k s_j a_j(t_n)b_j(t_n) \stackrel{\eqref{eq:abj}}{=} s_{k+1}.
	\end{equation}
	Using Cauchy inequality, we have
	\begin{equation}\label{eq:sab-bound}
	    \left[\sum_{j=k+1}^d s_j a_j(t_n)b_j(t_n)\right]^2 \leq \sum_{j=k+1}^d s_j^2a^2_j(t_n) \cdot \sum_{j=k+1}^d b^2_j(t_n) \leq \left(\sum_{j=k+1}^d s_j^2a^2_j(t_n)\right) \cdot \left(1-\sum_{j=1}^k b^2_j(t_n)\right).
	\end{equation}
	Since $\lim_{n \to +\infty} \sum_{j=1}^k b^2_j(t_n)\stackrel{\eqref{eq:abj}}{=} 0$, and $\sum_{j=1}^{d_y} a^2_j(t_n)\stackrel{\eqref{eq:uv-exp}}{=}1$, we obtain
	\[ \lim_{n \to +\infty} \sum_{j=k+1}^{d_y} s_{k+1}^2a^2_j(t_n) \stackrel{\eqref{eq:abj}}{=} s_{k+1}^2 \stackrel{\eqref{eq:sab}}{=} \lim_{n \to +\infty} \left[\sum_{j=k+1}^d s_j a_j(t_n)b_j(t_n)\right]^2 \stackrel{\eqref{eq:sab-bound}}{\leq} \liminf_{n \to +\infty} \sum_{j=k+1}^d s_j^2a^2_j(t_n). \]
	Noting that $s_j = 0, \forall j>d$, we get
	$0 \leq \liminf_{n \to +\infty} \sum_{j=k+2}^{d_y} (s_j^2-s_{k+1}^2)a^2_j(t_n)$. 
	
	However, $s_j^2-s_{k+1}^2 < 0, \forall j\geq k+2$ and $a^2_j(t_n) \geq 0$, showing that
	$\limsup_{n \to +\infty} \sum_{j=k+2}^{d_y} (s_j^2-s_{k+1}^2)a^2_j(t_n) \leq 0$. 
	Hence, we obtain
	$\lim_{n \to +\infty} a_j(t_n) =0, \forall j \geq k+2$.
	The similar analysis holds for $b_{j}(t_n)$. Therefore, we obtain 
	\[ \lim_{n \to +\infty} s_{k+1}a_{k+1}(t_n)b_{k+1}(t_n) = \lim_{n \to +\infty} \sum_{j=1}^d s_j a_j(t_n)b_j(t_n) =s_{k+1}. \]
	Finally, we have 
	\[ \vu_{k+1}\vv_{k+1}^\top \stackrel{\eqref{eq:ab-limit}}{=} \lim_{n \to +\infty}a_{k+1}(t_n)b_{k+1}(t_n)\vu_{k+1}\vv_{k+1}^\top \stackrel{\eqref{eq:ab-limit}}{=} \lim_{n \to +\infty}\sum_{i,j}a_i(t_n)b_j(t_n)\vu_i\vv_j^\top \stackrel{\eqref{eq:uv-exp}}{=} \lim_{n \to +\infty}\vu(t_n)\vv(t_n)^\top. \]
	The proof is finished.
\end{proof}

\subsection{Proof of Theorem \ref{thm:monotone}}
\begin{proof}
    (I) The truth that $\dot{c}_1(t)\ge 0$ is direct from 3) in Proposition \ref{prop:base}. Moreover, from Theorem \ref{thm:ab} with $k=0$, we have $c_1(t)=a_1(t)b_1(t) \to 1$. Thus, we obtain $t_1<+\infty$.
    
    (II) As for $s(t)$, if $t_2=+\infty$, then $\vu(t)^\top\mZ\vv(t) \le s(t), \forall t\in[0,+\infty)$. Thus by Eq.~\eqref{eq:grad-s}$, \dot{s}(t) \le 0$ for all $t\ge 0$.
    Now we consider the remaining case where $t_2<+\infty$. Since $t_2=\inf\{t: \vu(t)^\top\mZ\vv(t) \ge s(t)\}$, we have $\vu(t)^\top\mZ\vv(t) \le s(t)$ when $t\in[0,t_2)$. Thus
    \begin{equation*}
        \dot{s}(t) \stackrel{\eqref{eq:grad-s}}{=} N s(t)^{2-\frac{2}{N}}\left(\vu(t)^\top\mZ\vv(t)-s(t)\right) \leq 0, \forall t\in[0, t_2).
    \end{equation*}
    Now from $t_2< +\infty$, we get $\vu(t_2)^\top\mZ\vv(t_2) = s(t_2)$.
    We denote $T = \inf\{t: \dot{\vu}(t) = \bm{0}, \dot{\vv}(t) = \bm{0}\}$. 
    
    (i) When $T>t_2$. Then for $t\in [t_2, T)$, we have $\dot{\vu}(t) \neq \bm{0}$ or $\dot{\vv}(t) \neq \bm{0}$. 
    Thus, applying lemma \ref{lemma:s(t)-two}, we have 
    \[ \vu(t)^\top\mZ\vv(t) = s(t) \Rightarrow d(\vu(t)^\top\mZ\vv(t)-s(t)) / d t>0, \forall t \in [t_2, T). \]
    By \citet[Lemma 10]{lin2021faster}, we obtain $\vu(t)^\top\mZ\vv(t) \geq s(t), \forall t \in[t_2, T)$. Hence, 
    \begin{equation*}
        \dot{s}(t) = N s(t)^{2-\frac{2}{N}}\left(\vu(t)^\top\mZ\vv(t)-s(t)\right) \geq 0, \forall t\in[t_2, T).
    \end{equation*}    
    And for $t\geq T$, we get $T < +\infty$. We obtain stationary singular vectors from time $T$ by Proposition \ref{prop:station-vector}. 
    Thus, $\vu(T)^\top\mZ\vv(T)=c ,\forall t \geq T$ for a constant $c$, which reduce the variation of $s(t)$ as 
    \[ \dot{s}(t) = N s(t)^{2-\frac{2}{N}}\left(c-s(t)\right), \forall t \geq T. \]
    Moreover, since $\vu(t)^\top\mZ\vv(t) \geq s(t), \forall t \in[t_2, T)$, we obtain $c = \vu(T)^\top\mZ\vv(T) \geq s(T)$.
    Hence, we can see $\dot{s}(t) \geq 0, \forall t \geq T$. 
    
    (ii) When $T \leq t_2$, we have $T < +\infty$. We obtain stationary singular vectors from time $T$ by Proposition \ref{prop:station-vector}. 
    Thus, $\vu(T)^\top\mZ\vv(T)=c ,\forall t \geq T$ for a constant $c$, which reduce the variation of $s(t)$ as 
    \[ \dot{s}(t) = N s(t)^{2-\frac{2}{N}}\left(c-s(t)\right), \forall t \geq T. \]
    We note that $c = \vu(t_2)^\top\mZ\vv(t_2) = s(t_2)$. Thus $\dot{s}(t) =0, \forall t\geq t_2$.
    The proof is finished.
\end{proof}

\subsection{Proof of Theorem \ref{thm:s}}
\begin{proof}
We first consider the upper bound. From $s(0)>0$ and 1) in Proposition \ref{prop:base}, we have $s(t)>0$ for all $t\ge 0$. 
Moreover, we have 
\begin{equation}\label{eq:upper-s1}
\dot{s}(t) \stackrel{\eqref{eq:grad-s}}{=} N s(t)^{2-\frac{2}{N}} \left(\sum_{j=1}^d s_j a_j(t)b_j(t)-s(t)\right) \stackrel{\eqref{eq:ineq2}}{\leq} N s(t)^{2-\frac{2}{N}} \left(s_1 -s(t)\right).
\end{equation}
Let $\tilde{s}(t)$ be the solution of the ODE
\[ \dot{\tilde{s}}(t) = N \tilde{s}(t)^{2-\frac{2}{N}} \left(s_1 -\tilde{s}(t)\right), \quad \tilde{s}(0)=s(0). \]
Then we can see $s(t) \leq \tilde{s}(t)$ from Eq.~\eqref{eq:upper-s1}.

If $s(0)>s_1$, then $\tilde{s}(t) \leq \tilde{s}(0)=s(0)$, showing that $s(t) \leq \tilde{s}(t) \leq s(0)$.
Otherwise, $s(0) \leq s_1$, we get $\tilde{s}(t) \leq s_1$, showing that $s(t) \leq \tilde{s}(t) \leq s_1$.
Therefore, we know $s(t)< s_0:=\max\{s_1,s(0)\}$ for all $t \geq 0$.

Now we consider the lower bound. Note that
\begin{equation}\label{eq:pre-sab}
\begin{aligned}
    \sum_{j=2}^d s_j a_j(t)b_j(t) & \stackrel{\eqref{eq:ineq2}}{\leq} s_2 \sqrt{(1-a_1^2(t))(1-b_1^2(t))} = s_2 \sqrt{a_1^2(t)b_1^2(t)-a_1^2(t)-b_1^2(t)+1} \\
    &\ \leq \ s_2 \sqrt{a_1^2(t)b_1^2(t)-2|a_1(t)b_1(t)|+1} =s_2\left(1-|a_1(t)b_1(t)|\right),
\end{aligned}
\end{equation}
where we use $|a_1(t)b_1(t)| = |\vu_1^\top\vu(t) \cdot \vv_1^\top\vv(t)| \leq 1$ in the last equality.

\begin{equation}\label{eq:ode-s-lower}
\begin{aligned}
    \dot{s}(t) &\stackrel{\eqref{eq:grad-s}}{=} N s(t)^{2-\frac{2}{N}} \bigg(\sum_{j=1}^d s_j a_j(t)b_j(t)-s(t)\bigg) \stackrel{\eqref{eq:pre-sab}}{\geq} N s(t)^{2-\frac{2}{N}} \left[ s_1 a_1(t)b_1(t) - s_2 \left(1-|a_1(t)b_1(t)|\right) - s(t)\right] \\
    & \geq N s(t)^{2-\frac{2}{N}}\bigg[(s_2-s_1) \left|a_1(t)b_1(t)\right|-s_2-s(t)\bigg] \geq -N (s_1+s(t))s(t)^{2-\frac{2}{N}} \geq -N (s_1+s_0)s(t)^{2-\frac{2}{N}},
\end{aligned}
\end{equation}
where the last inequality uses $s(t)< s_0:=\max\{s_1,s(0)\}$, which is proved previously.

When $N=2$, we solve Eq.~\eqref{eq:ode-s-lower} and get
\[ \dot{s}(t) \stackrel{\eqref{eq:ode-s-lower}}{\geq} -2(s_1+s_0)s(t) \Rightarrow  \frac{d \left(\ln s(t)\right)}{dt} \geq -2(s_1+s_0) \Rightarrow  \ln \frac{s(t)}{s(0)} \geq -2(s_1+s_0)t \Rightarrow s(t) \geq s(0)e^{-2(s_1+s_0)t}. \]
When $N \ge 3$, we solve Eq.~\eqref{eq:ode-s-lower} and get
\[ \frac{N}{2-N} \cdot \frac{d \left(s(t)^{\frac{2}{N}-1}\right)}{dt} \stackrel{\eqref{eq:ode-s-lower}}{\geq}-N (s_1+s_0) \Rightarrow  s(t)^{\frac{2}{N}-1} - s(0)^{\frac{2}{N}-1} \leq (s_1+s_0)(N-2)t. \]
Thus, we finally obtain
\[ s(t) \geq \left[(s_1+s_0)(N-2)t+s(0)^{\frac{2}{N}-1}\right]^{-\frac{N}{N-2}}. \]
The proof of Eqs.~\eqref{eq:s-upper-2} and \eqref{eq:s-upper-3} is finished. 
\end{proof}

\subsection{Proof of Theorem \ref{thm:stage1}}
\begin{proof}
    Since $a_1(0)b_1(0)<0, a_1(0)+b_1(0) \neq 0$, without loss of generality, we suppose $a_1(0)>0, b_1(0)<0$ and $a_1(0)+b_1(0)>0$. Note that 
    \begin{equation*}
        \dot{a}_1(t)-\dot{b}_1(t) \stackrel{\eqref{eq:grad-ab}}{=} s(t)^{1-\frac{2}{N}}\left(b_1(t)-a_1(t)\right) \bigg(s_1+\sum_{j=1}^d\left[s_j a_j(t)b_j(t)\right]\bigg).
    \end{equation*}
    By \citet[Lemma 4]{arora2019implicit} and $|a_1(t)-b_1(t)| \leq 2$, we get that $a_1(t)-b_1(t)$ preserves the sign of its initial value:
    \begin{equation}\label{eq:a-ge-b}
        a_1(t)-b_1(t)>0, \forall t\geq 0.
    \end{equation} 
    Moreover, from 5) in Proposition \ref{prop:base} and $a_1(0)+b_1(0)>0$, we obtain 
    \begin{equation}\label{eq:a+b}
        a_1(t)+b_1(t) \geq a_1(0)+b_1(0)>0, \forall t \geq 0.
    \end{equation}
    Then we have
    \begin{equation}\label{eq:a-pos}
        a_1(t) \stackrel{\eqref{eq:a-ge-b}}{\geq} \frac{a_1(t)+b_1(t)}{2} \stackrel{\eqref{eq:a+b}}{\geq} \frac{a_1(0)+b_1(0)}{2}>0, \forall t \geq 0,
    \end{equation}
    and 
    \begin{equation}\label{eq:abs b/a}
        -a_1(t) \stackrel{\eqref{eq:a+b}}{\leq} b_1(t)\stackrel{\eqref{eq:a-ge-b}}{\leq} a_1(t), \forall t \geq 0 \stackrel{\eqref{eq:a-pos}}{\Rightarrow} -1 < \frac{b_1(t)}{a_1(t)} < 1,  \forall t \geq 0.
    \end{equation}
    Furthermore, we can derive that
    \begin{equation*}
        \frac{d}{dt}\left(\frac{b_1(t)}{a_1(t)}\right)=\frac{\dot{b}_1(t)a_1(t)-\dot{a}_1(t)b_1(t)}{a_1^2(t)} \stackrel{\eqref{eq:grad-ab}}{=} s(t)^{1-\frac{2}{N}}\left(1-\left(\frac{b_1(t)}{a_1(t)}\right)^2\right) \stackrel{\eqref{eq:abs b/a}}{>} 0.
    \end{equation*}
    Solving the above ODE, we obtain
    \begin{equation*}
        d \left(\ln \sqrt{\frac{a_1(t)+b_1(t)}{a_1(t)-b_1(t)}} \right) / dt \geq s(t)^{1-\frac{2}{N}} \stackrel{\eqref{eq:s-upper-3}}{\geq} \frac{1}{2(s_1+s_0)(N-2)t+s(0)^{\frac{2}{N}-1}}.
    \end{equation*}
    Therefore, we obtain
    \begin{equation*}
    \begin{aligned}
        & \ln \sqrt{\frac{a_1(t)+b_1(t)}{a_1(t)-b_1(t)}}-\ln \sqrt{\frac{a_1(0)+b_1(0)}{a_1(0)-b_1(0)}} \geq \int_{0}^t \frac{dx}{2(s_1+s_0)(N-2)x+s(0)^{\frac{2}{N}-1}} \\
        =& \ \frac{1}{2(s_1+s_0)(N-2)}\ln \left[1+\frac{2(s_1+s_0)(N-2)t}{s(0)^{\frac{2}{N}-1}}\right] = \frac{1}{2(s_1+s_0)(N-2)}\ln \left[1+\frac{2(s_1+s_0)(N-2)t}{s(0)^{\frac{2}{N}-1}}\right].
    \end{aligned}
    \end{equation*}
    We hide constants related to initialization, and rewrite the inequality as 
    \begin{equation}\label{eq:ab-lower}
        \frac{a_1(t)+b_1(t)}{a_1(t)-b_1(t)} \geq C_1(1+A_1t)^{B_1},
    \end{equation}
    where $A_1 := \frac{2(s_1+s_0)(N-2)}{s(0)^{\frac{2}{N}-1}}$, $B_1 := \frac{1}{(s_1+s_0)(N-2)}$, $1>C_1 := \frac{a_1(0)+b_1(0)}{a_1(0)-b_1(0)} \stackrel{\eqref{eq:abs b/a}}{>} 0$.
    Hence, we obtain
    \begin{equation}\label{eq:ab-bound}
        a_1(t)b_1(t) \stackrel{\eqref{eq:ab-lower}, \eqref{eq:a-pos}}{\geq} \frac{C_1(1+A_1t)^{B_1}-1}{C_1(1+A_1t)^{B_1}+1} \cdot a_1^2(t).
    \end{equation}
    Then we can see $a_1(t)b_1(t) \geq 0 $ provided $C_1(1+A_1t)^{B_1} > 1$, i.e., 
    \[ t \geq T_1 = \frac{C_1^{-1/B_1}-1}{A_1} = \frac{s(0)^{\frac{2}{N}-1}}{2(s_1+s_0)(N-2)} \cdot \left[ \left(\frac{a_1(0)-b_1(0)}{a_1(0)+b_1(0)}\right)^{(s_1+s_0)(N-2)} - 1 \right]. \]
    Therefore, we obtain $t_1 \leq T_1$.
    Moreover, when $t \leq T_1$, by $a_1^2(t) \leq 1$, we have
    \[ a_1(t)b_1(t) \stackrel{\eqref{eq:ab-bound}}{\geq} \frac{C_1(1+A_1t)^{B_1}-1}{C_1(1+A_1t)^{B_1}+1}.  \]
    That is, $1-a_1(t)b_1(t) = \gO((1+A_1t)^{-B_1}) = \gO([(N-2)t]^{-\frac{c_1}{N-2}})$.
    
    Additionally, when $A_1 t \geq e \cdot C_1^{-1/B_1}-1$, we have
    \begin{equation}\label{eq:tt}
        A_1 t \geq e \cdot C_1^{-1/B_1}-1 \geq \left(\frac{1+B_1}{C_1}\right)^{1/B_1}-1 \Rightarrow C_1(1+A_1t)^{B_1} \geq 1+B_1.
    \end{equation}
    Thus, we get
    \[ a_1(t)b_1(t) \stackrel{\eqref{eq:ab-bound}}{\geq} \frac{C_1(1+A_1t)^{B_1}-1}{C_1(1+A_1t)^{B_1}+1} \cdot a_1^2(t) \stackrel{\eqref{eq:a-pos}, \eqref{eq:tt}}{\geq} \frac{B_1}{2+B_1} \cdot \frac{(a_1(0)+b_1(0))^2}{4}=\Theta\left(\frac{(a_1(0)+b_1(0))^2}{N}\right)>0.  \]
\end{proof}

\subsection{Proof of Theorem \ref{thm:stage2}}
\begin{proof}
    Since $a_1(0)b_1(0)\ge 0$, then by 3) in Proposition \ref{prop:base}, we know $a_1(t)b_1(t)\ge 0$ for all $t\ge0$. Now we consider the flow of $a_1(t)b_1(t)$.
    \begin{equation}\label{eq:lower-grad ab}
    	\begin{aligned}
    	\frac{d a_1(t)b_1(t)}{d t} & \stackrel{\eqref{eq:grad-ab}}{=} s(t)^{1-\frac{2}{N}}\bigg(s_1b_1(t)^2 + s_1a_1(t)^2-2a_1(t)b_1(t) \sum_{j=1}^d\left[s_j a_j(t)b_j(t)\right]\bigg)\\
    	&\stackrel{\eqref{eq:pre-sab}}{\geq} s(t)^{1-\frac{2}{N}}\bigg[2s_1a_1(t)b_1(t)-2a_1(t)b_1(t)\left( s_1a_1(t)b_1(t)+s_2\left(1-|a_1(t)b_1(t)|\right) \right)\bigg] \\
    	&\ = \ 2a_1(t)b_1(t)s(t)^{1-\frac{2}{N}}\bigg[s_1-s_1a_1(t)b_1(t)-s_2\left(1-a_1(t)b_1(t)\right)\bigg] \\
    	&\ = \ 2a_1(t)b_1(t)s(t)^{1-\frac{2}{N}}\left(s_1-s_2\right)\left(1-a_1(t)b_1(t)\right).
        \end{aligned}
    \end{equation}
By the lower bound of $s(t)$ in Theorem \ref{thm:s}, we obtain
\[ \frac{d a_1(t)b_1(t)}{d t} \stackrel{\eqref{eq:lower-grad ab}}{\geq} 2a_1(t)b_1(t)s(t)^{1-\frac{2}{N}}\left(s_1-s_2\right)\left(1-a_1(t)b_1(t)\right) \stackrel{\eqref{eq:s-upper-3}}{\geq} \frac{2a_1(t)b_1(t) \left(s_1-s_2\right)\left(1-a_1(t)b_1(t)\right)}{(s_0+s_1)(N-2)t+s(0)^{\frac{2}{N}-1}}.  \]
Denoting $c_1(t) := a_1(t)b_1(t)$, we get
\[ \ln \frac{c_1(t)}{1-c_1(t)} -\ln \frac{c_1(0)}{1-c_1(0)} \geq \frac{2(s_1-s_2)}{(s_1+s_0)(N-2)}\ln \left(1+\frac{(s_1+s_0)(N-2)t}{s(0)^{\frac{2}{N}-1}}\right). \]
Further we can rewrite the bound as 
\begin{equation*}
    c_1(t) \geq 1-\frac{1}{A(1+B(N-2)t)^{\frac{c_2}{N-2}}+1},
\end{equation*}
where $A=\frac{c_1(0)}{1-c_1(0)}>0, B=(s_1+s_0)s(0)^{1-\frac{2}{N}}>0, c_2=\frac{2(s_1-s_2)}{s_1+s_0}>0$.
Then we have $1-a_1(t)b_1(t) = \gO\left([(N-2)t]^{-c_2/(N-2)}\right)$. The proof is finished.
\end{proof}

\subsection{Proof of Theorem \ref{thm:stage2.5}}
\begin{proof}
    When $t_2 = +\infty$, we have $\vu(t)^\top\mZ\vv(t) < s(t), \forall t \geq 0$. Thus, we obtain 
    \begin{equation}\label{eq:dot-s}
        \dot{s}(t) \stackrel{\eqref{eq:grad-s}}{=} N s(t)^{2-\frac{2}{N}} \left(\vu(t)^\top\mZ\vv(t)-s(t)\right)\leq 0.
    \end{equation}
    We note that by Theorem \ref{thm:ab}, $s(t) \to s_1$. Thus, we conclude
    \begin{equation}\label{eq:lower-s-2}
        s(t) \stackrel{\eqref{eq:dot-s}}{\geq} \lim_{t \to +\infty} s(t) = s_1.
    \end{equation}
    Then we have
    \begin{equation*}
    \begin{aligned}
        \frac{d a_1(t)b_1(t)}{d t} &\stackrel{\eqref{eq:lower-grad ab}}{\geq} 2a_1(t)b_1(t)s(t)^{1-\frac{2}{N}}\left(s_1-s_2\right)\left(1-a_1(t)b_1(t)\right) \\
        & \stackrel{\eqref{eq:lower-s-2}}{\geq} 2a_1(t)b_1(t)s_1^{1-\frac{2}{N}} \left(s_1-s_2\right)\left(1-a_1(t)b_1(t)\right).
    \end{aligned}
    \end{equation*}
    Setting $c_1(t):=a_1(t)b_1(t)$ and solving the ODE in the above, we get
    \[ \ln \frac{c_1(t)}{1-c_1(t)} -\ln \frac{c_1(0)}{1-c_1(0)} \geq 2 s_1^{1-\frac{2}{N}}\left(s_1-s_2\right)t. \]
    We rewrite the bound to 
    \[ 1-c_1(t)\le \left(1+\frac{c_1(0)}{1-c_1(0)} \cdot e^{2s_1^{1-\frac{2}{N}}\left(s_1-s_2\right)t}\right)^{-1}, i.e., 1-c_1(t)= \gO\left(e^{-c_3t}\right). \]
    
    To obtain the bound of $s(t)$, we notice that
    \begin{equation*}
        \dot{s}(t) \stackrel{\eqref{eq:grad-s}}{=} N s(t)^{2-\frac{2}{N}} \left(\sum_{j=1}^d s_j a_j(t)b_j(t)-s(t)\right) \stackrel{\eqref{eq:ineq2}}{\leq} N s(t)^{2-\frac{2}{N}} \left(s_1 -s(t)\right) \leq N s_1^{2-\frac{2}{N}} \left(s_1 -s(t)\right).
    \end{equation*}
    We can obtain the upper bound of the evolution $s(t)$ as
    \begin{equation*}
        d \left(\ln \left(s(t)-s_1\right) \right) / dt \leq -N s_1^{2-\frac{2}{N}} \Rightarrow s(t) \leq s_1 +(s(0)-s_1)e^{-N s_1^{2-\frac{2}{N}}t}, i.e., s(t)-s_1=\gO\left(e^{-c_4 t}\right).
    \end{equation*}
    Finally, noting that $s(t) \stackrel{\eqref{eq:lower-s-2}}{\geq} s_1$, we obtain $|s(t)-s_1| = \gO\left(e^{-c_4 t}\right)$. The proof is finished.
\end{proof}

\subsection{Proof of Theorem \ref{thm:stage3}}
\begin{proof}
    By Theorem \ref{thm:monotone}, we have $\dot{s}(t) \geq 0, \forall t\geq 0$. Thus, we can lower bound $s(t) \geq s(0)>0$. Then we obtain 
    \begin{equation*}
        \frac{d a_1(t)b_1(t)}{d t} \stackrel{\eqref{eq:lower-grad ab}}{\geq} 2a_1(t)b_1(t)s(t)^{1-\frac{2}{N}}\left(s_1-s_2\right)\left(1-a_1(t)b_1(t)\right) \geq 2a_1(t)b_1(t)s(0)^{1-\frac{2}{N}} \left(s_1-s_2\right)\left(1-a_1(t)b_1(t)\right).
    \end{equation*}
    Setting $c_1(t):=a_1(t)b_1(t)$ and solving the ODE in the above, we get
    \begin{equation}\label{eq:c1-lower}
        1-c_1(t)\le \left(1+\frac{c_1(0)}{1-c_1(0)} \cdot e^{2s(0)^{1-\frac{2}{N}}\left(s_1-s_2\right)t}\right)^{-1}, i.e., 1-c_1(t)= \gO\left(e^{-c_5t}\right).
    \end{equation}
    
    Next we derive the bound for $s(t)$. We continue from
    \begin{equation*}
        \begin{aligned}
        \dot{s}(t) &\stackrel{\eqref{eq:grad-s}}{=} N s(t)^{2-\frac{2}{N}} \bigg(\sum_{j=1}^d s_j a_j(t)b_j(t)-s(t)\bigg) \stackrel{\eqref{eq:pre-sab}}{\geq} N s(t)^{2-\frac{2}{N}} \left[ s_1 a_1(t)b_1(t) - s_2 \left(1-|a_1(t)b_1(t)|\right) - s(t)\right] \\
        &\geq  N s(t)^{2-\frac{2}{N}} \left[ (s_1 + s_2) a_1(t)b_1(t)-s_2-s(t)\right] \stackrel{\eqref{eq:c1-lower}}{\geq} N s(0)^{2-\frac{2}{N}} \left[ -(s_1 + s_2)\left(1+A e^{c_5t}\right)^{-1} + s_1-s(t)\right].
        \end{aligned}
    \end{equation*}
    Solving the above ODE, we arrive at
    \begin{equation*}
         \frac{d(s(t)e^{c_6 t})}{dt} \geq c_6 e^{c_6 t}\left[s_1 -(s_1 + s_2)\left(1+A e^{c_5t}\right)^{-1} \right].
    \end{equation*}
    Hence, we get
    \begin{align*}
        & \ s(t)e^{c_6 t}-s(0) \geq s_1\left(e^{c_6 t}-1\right) - \int_{0}^t c_6e^{c_6x}(s_1 + s_2)\left(1+A e^{c_5x}\right)^{-1} dx \\ 
        \geq & \ s_1\left(e^{c_6 t}-1\right) - e^{c_6t}\int_{0}^t c_6(s_1 + s_2)\left(1+A e^{c_5x}\right)^{-1} dx = s_1\left(e^{c_6 t}-1\right) - \frac{(s_1 + s_2)c_6e^{c_6t} }{c_5} \cdot \ln \left(1+A^{-1} e^{-c_5t}\right) \\
        \geq & \ s_1\left(e^{c_6 t}-1\right) - \frac{(s_1 + s_2)c_6e^{c_6t} }{Ac_5} \cdot e^{-c_5t}. \\
    \end{align*}
    Therefore, we obtain
    \[ s(t) \geq s_1- (s(0)+s_1) e^{-c_6 t} - \frac{(s_1 + s_2)c_6}{Ac_5} \cdot e^{-c_5t}. \]
    After hiding the constants and noting that $s(t)$ is non-decreasing, we obtain
    \begin{equation*}
         s_1 - s(t) =\gO\left(e^{-\min\{c_5, c_6\}t} \right).
    \end{equation*}
     We note that by Theorem \ref{thm:ab}, $s(t) \to s_1$. Since $s(t)$ is non-decreasing, we conclude $s(t) \le \lim_{t \to +\infty} s(t) = s_1$.
    Then we obtain 
    \begin{equation*}
    |s_1 - s(t)| =\gO\left(e^{-\min\{c_5, c_6\}t} \right) .   
    \end{equation*}
    The proof is finished.
\end{proof}

\section{Convergence Rates: $N=2$}\label{case:N=2}
We provide convergence rates of the case $N=2$ in this section. Corresponding to the case $N\ge 3$, we list the rates of three stages as below:

\paragraph{Stage 1.}
For $t \in [0, t_1]$, where $t_1 := \inf \{t: a_1(t)b_1(t) \ge0 \} < +\infty$, we have $a_1(t)b_1(t) \leq 0$, and the rates are
\[ 1-a_1(t)b_1(t) = \gO(e^{-2t}), \quad s(t)=\Omega\left(e^{-2(s_1+s_0)t}\right). \]

\paragraph{Stage 2.}
For $t \in (t_1, t_2]$, where $t_2:= \inf \{t: \vu(t)^\top\mZ\vv(t) \ge s(t)\}$, we have $a_1(t)b_1(t) > 0, \dot{s}(t)\le 0$, and
\[1-a_1(t)b_1(t) = \gO\left(e^{-2(s_1-s_2)t} \right), \quad s(t)=\Omega\left(e^{-2(s_1+s_0)t}\right). \]

\paragraph{Stage 3.}
For $t \in (\max\{t_1, t_2\}, +\infty)$, we have $a_1(t)b_1(t) > 0$ and $\dot{s}(t) \geq 0$, and the rates are
\[ 1-a_1(t)b_1(t) = \gO\left(e^{-2(s_1-s_2)t} \right), \quad |s_1-s(t)|=\gO\left(e^{-\min\{2\left(s_1-s_2\right), 2s(0)\}t} \right). \]

We have shown that the definitions of the stages are well defined by Theorem \ref{thm:monotone}. The convergence rates of $s(t)$, i.e. $s(t)=\Omega\left(e^{-2(s_1+s_0)t}\right)$ in Stage 1 and Stage 2 are given by Theorem \ref{thm:s}.

\subsection*{Convergence Rates of $a_1(t)b_1(t)$: Stage 1}

\begin{thm}\label{thm:ab<0,N=2}
	Suppose $N=2$, $a_1(0)b_1(0)< 0$ and $a_1(0) + b_1(0) \neq 0$. Then we have 
	\[ 1-a_1(t)b_1(t) = \gO(e^{-2t}),  0 \leq t \leq t_1. \]
	
	Furthermore, we have the upper bound of $t_1$ below:
	\begin{equation}\label{eq:t1-upper-1}
	    t_1 \leq \frac{1}{2} \ln \left| \frac{a_1(0)-b_1(0)}{a_1(0)+b_1(0)} \right|.
	\end{equation}
	Additionally, we could obtain
	\begin{equation}\label{eq:ab-lower-33}
	    a_1(t)b_1(t) = \Omega\left((a_1(0)+b_1(0))^2\right), \text{ if } t \geq \frac{1}{2} \ln \left| \frac{2(a_1(0)-b_1(0))}{a_1(0)+b_1(0)} \right|.
	\end{equation}
\end{thm}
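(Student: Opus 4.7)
The plan is to mirror the argument of Theorem \ref{thm:stage1}, which becomes considerably cleaner when $N=2$ because the factor $s(t)^{1-2/N}$ appearing throughout the dynamics of $a_1,b_1$ degenerates to $1$, removing the need for the singular-value lower bound of Theorem \ref{thm:s} and the polynomial distortion it introduces.

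First I would perform the same WLOG reduction: from $a_1(0)+b_1(0)\neq 0$ and $a_1(0)b_1(0)<0$, assume $a_1(0)>0,\, b_1(0)<0$ with $a_1(0)+b_1(0)>0$ (the other sign pattern is symmetric). Using Eq.~\eqref{eq:grad-ab}, the flow of $a_1-b_1$ factors as $s(t)^{1-2/N}(b_1-a_1)(s_1+\sum s_j a_j b_j)$, so by Lemma \ref{thm:aux-2} its sign is preserved: $a_1(t)-b_1(t)>0$ for all $t\geq 0$. Combined with 5) of Proposition \ref{prop:base}, which gives $a_1(t)+b_1(t)\geq a_1(0)+b_1(0)>0$, this yields $a_1(t)\geq (a_1(0)+b_1(0))/2>0$ and $|b_1(t)/a_1(t)|<1$ for all $t\geq 0$.

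The main computation is the ODE for the ratio $r(t):=b_1(t)/a_1(t)$. Using Eq.~\eqref{eq:grad-ab} exactly as in the proof of Theorem \ref{thm:stage1}, one obtains $\dot r(t)=s(t)^{1-2/N}(1-r(t)^2)$, which for $N=2$ collapses to $\dot r(t)=1-r(t)^2$. This is explicitly integrable: setting $C_1:=(a_1(0)+b_1(0))/(a_1(0)-b_1(0))\in(0,1)$, one finds
\[
\frac{a_1(t)+b_1(t)}{a_1(t)-b_1(t)}=C_1 e^{2t}.
\]
Hence $a_1(t)b_1(t)\geq 0$ precisely when $C_1 e^{2t}\geq 1$, giving the upper bound in Eq.~\eqref{eq:t1-upper-1} for $t_1$. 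Furthermore, writing $a_1(t)b_1(t)=r(t)\, a_1(t)^2$ and using $a_1(t)^2\leq 1$ when $r(t)\leq 0$, one derives $a_1(t)b_1(t)\geq (C_1 e^{2t}-1)/(C_1 e^{2t}+1)$, so
\[
1-a_1(t)b_1(t)\leq \frac{2}{C_1 e^{2t}+1}=\gO(e^{-2t}),\qquad 0\leq t\leq t_1.
\]

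For the final bound in Eq.~\eqref{eq:ab-lower-33}, I would take $t$ large enough that $C_1 e^{2t}\geq 2$, i.e., $t\geq \tfrac{1}{2}\ln|2(a_1(0)-b_1(0))/(a_1(0)+b_1(0))|$. Then $r(t)\geq 1/3$, and combining this with the lower bound $a_1(t)^2\geq (a_1(0)+b_1(0))^2/4$ obtained above, one gets
\[
a_1(t)b_1(t)=r(t)\cdot a_1(t)^2\geq \frac{1}{3}\cdot \frac{(a_1(0)+b_1(0))^2}{4}=\Omega\bigl((a_1(0)+b_1(0))^2\bigr).
\]
No step presents a serious obstacle; the whole argument is a direct specialization of the $N\geq 3$ analysis where the explicit solvability of $\dot r=1-r^2$ replaces the polynomial lower bound on $s(t)$. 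The only point requiring minor care is verifying that the direction of the inequality $a_1 b_1 \geq r$ (versus $r\cdot a_1^2$) goes the right way when $r$ changes sign, which is handled exactly as in Theorem \ref{thm:stage1}.
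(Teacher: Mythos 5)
Your proposal is correct and follows essentially the same route as the paper's own proof: the same WLOG sign reduction, the same preserved-sign facts for $a_1\pm b_1$, the explicit integration of $\dot r = 1-r^2$ giving $\frac{a_1(t)+b_1(t)}{a_1(t)-b_1(t)} = C_1 e^{2t}$, and the same case analysis on the sign of $r(t)$ to extract the three bounds. No substantive differences to report.
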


\begin{proof}
    Since $a_1(0)b_1(0)<0, a_1(0)+b_1(0) \neq 0$, without loss of generality, we suppose $a_1(0)>0, b_1(0)<0$ and $a_1(0)+b_1(0)>0$. Note that 
    \begin{equation*}
        \dot{a}_1(t)-\dot{b}_1(t) \stackrel{\eqref{eq:grad-ab}}{=} s(t)^{1-\frac{2}{N}}\left(b_1(t)-a_1(t)\right) \bigg(s_1+\sum_{j=1}^d\left[s_j a_j(t)b_j(t)\right]\bigg)=\left(b_1(t)-a_1(t)\right) \bigg(s_1+\sum_{j=1}^d\left[s_j a_j(t)b_j(t)\right]\bigg).
    \end{equation*}
    By \citet[Lemma 4]{arora2019implicit} and $|a_1(t)-b_1(t)| \leq 2$, we get that $a_1(t)-b_1(t)$ preserves the sign of its initial value:
    \begin{equation}\label{eq:a-ge-b-2}
        a_1(t)-b_1(t)>0, \forall t\geq 0.
    \end{equation} 
    Moreover, from 5) in Proposition \ref{prop:base} and $a_1(0)+b_1(0)>0$, we obtain 
    \begin{equation}\label{eq:a+b-2}
        a_1(t)+b_1(t) \geq a_1(0)+b_1(0)>0, \forall t \geq 0.
    \end{equation}
    Then we have
    \begin{equation}\label{eq:a-pos-2}
        a_1(t) \stackrel{\eqref{eq:a-ge-b-2}}{\geq} \frac{a_1(t)+b_1(t)}{2} \stackrel{\eqref{eq:a+b-2}}{\geq} \frac{a_1(0)+b_1(0)}{2}>0, \forall t \geq 0,
    \end{equation}
    and 
    \begin{equation}\label{eq:abs b/a-2}
        -a_1(t) \stackrel{\eqref{eq:a+b-2}}{<} b_1(t)\stackrel{\eqref{eq:a-ge-b-2}}{<} a_1(t), \forall t \geq 0 \stackrel{\eqref{eq:a-pos-2}}{\Rightarrow} -1 < \frac{b_1(t)}{a_1(t)} < 1,  \forall t \geq 0.
    \end{equation}
    Furthermore, we can derive that
    \begin{equation*}
        \frac{d}{dt}\left(\frac{b_1(t)}{a_1(t)}\right)=\frac{\dot{b}_1(t)a_1(t)-\dot{a}_1(t)b_1(t)}{a_1^2(t)} \stackrel{\eqref{eq:grad-ab}}{=} \left(1-\left(\frac{b_1(t)}{a_1(t)}\right)^2\right) \stackrel{\eqref{eq:abs b/a-2}}{>} 0.
    \end{equation*}
    Then we have
    \begin{equation}\label{eq:b/a-1}
        d \left(\ln \sqrt{\frac{a_1(t)+b_1(t)}{a_1(t)-b_1(t)}} \right) / d t = 1 \Rightarrow \frac{b_1(t)}{a_1(t)}=\frac{e^{2t}\left[\frac{a_1(0)+b_1(0)}{a_1(0)-b_1(0)}\right]-1}{e^{2t}\left[\frac{a_1(0)+b_1(0)}{a_1(0)-b_1(0)}\right]+1}.
    \end{equation}
    Thus, we get
    \begin{equation}\label{eq:ab-bound2}
        a_1(t)b_1(t) = a_1^2(t) \cdot \frac{b_1(t)}{a_1(t)}  \stackrel{\eqref{eq:b/a-1}}{=}\frac{A_2 e^{2t}-1}{A_2 e^{2t}+1} \cdot a_1^2(t), A_2 := \frac{a_1(0)+b_1(0)}{a_1(0)-b_1(0)}.
    \end{equation}
    Then we can see $a_1(t)b_1(t) \geq 0 $ provided $A_2 e^{2t} \geq 1$, i.e., 
    \[ t \geq T_2 := \frac{1}{2} \ln \frac{a_1(0)-b_1(0)}{a_1(0)+b_1(0)}. \]
    Therefore, Eq.~\eqref{eq:t1-upper-1} is proved.
    Moreover, when $t \leq T_2$, by $a_1^2(t) \leq 1$, we have
    \[ a_1(t)b_1(t) \stackrel{\eqref{eq:ab-bound2}}{\geq}\frac{A_2 e^{2t}-1}{A_2 e^{2t}+1}.  \]
    That is, $1-a_1(t)b_1(t) = \gO(e^{-2t})$.
    
    Additionally, when $t \geq \frac{1}{2} \ln \frac{2(a_1(0)-b_1(0))}{a_1(0)+b_1(0)}$, we have $A_2 e^{2t} \geq 2$. Hence, we derive that
    \[ a_1(t)b_1(t) \stackrel{\eqref{eq:ab-bound2}}{=}\frac{A_2 e^{2t}-1}{A_2 e^{2t}+1} \cdot a_1^2(t) \stackrel{\eqref{eq:a-pos-2}}{\geq} \frac{(a_1(0)+b_1(0))^2}{12}=\Theta\left((a_1(0)+b_1(0))^2\right)>0. \]
    Thus, Eq.~\eqref{eq:ab-lower-33} is proved.
\end{proof}

\subsection*{Convergence Rates of $a_1(t)b_1(t)$: Stage 2 and Stage 3}
\begin{thm}
	Assume $N=2$, $a_1(0)b_1(0) > 0$. Then we have
	\begin{equation*}
	    1-a_1(t)b_1(t) = \gO\left(e^{-2(s_1-s_2)t} \right).
	\end{equation*}
\end{thm}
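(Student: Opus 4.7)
The plan is to mirror the argument used for Theorem \ref{thm:stage2} (the $N\ge 3$ case), taking advantage of the simplification that the factor $s(t)^{1-2/N}$ collapses to $1$ when $N=2$. First, I would invoke 3) in Proposition \ref{prop:base}: since $s_1 a_1(t)b_1(t)=\vu(t)^\top \mZ_1 \vv(t)$ is non-decreasing, setting $c_1(t):=a_1(t)b_1(t)$ we obtain $c_1(t)\ge c_1(0)>0$ for all $t\ge 0$, and trivially $c_1(t)\le 1$. This keeps us squarely inside the regime where the forthcoming logistic inequality is well behaved, and lets us avoid any sign issues in the separation-of-variables step.

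Next, I would reuse verbatim the differential inequality established in Eq.~\eqref{eq:lower-grad ab}, namely
\[
\dot{c}_1(t) \ge 2\, c_1(t)\, s(t)^{1-\frac{2}{N}}(s_1-s_2)\bigl(1-c_1(t)\bigr).
\]
Substituting $N=2$ eliminates the $s(t)$-dependence entirely and yields the clean autonomous logistic-type inequality $\dot{c}_1(t) \ge 2(s_1-s_2)\,c_1(t)\bigl(1-c_1(t)\bigr)$. Unlike the $N\ge 3$ case, there is no need to couple this with the polynomial lower bound on $s(t)$ from Theorem \ref{thm:s}, which is exactly what makes this case strictly easier.

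The final step is a short separation-of-variables computation. Dividing by $c_1(1-c_1)>0$ and using partial fractions $\tfrac{1}{c_1(1-c_1)}=\tfrac{1}{c_1}+\tfrac{1}{1-c_1}$, then integrating from $0$ to $t$ yields
\[
\ln\frac{c_1(t)}{1-c_1(t)} - \ln\frac{c_1(0)}{1-c_1(0)} \ge 2(s_1-s_2)\,t,
\]
which rearranges to
\[
1-c_1(t) \le \left(1+\frac{c_1(0)}{1-c_1(0)}\,e^{2(s_1-s_2)t}\right)^{-1} = \gO\bigl(e^{-2(s_1-s_2)t}\bigr),
\]
giving the stated rate.

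I do not anticipate any genuine obstacle: the $N=2$ case is strictly simpler than $N\ge 3$ because the awkward factor $s(t)^{1-2/N}$ disappears, so we do not have to intertwine bounds on $s(t)$ with bounds on $c_1(t)$. The only care required is the positivity bookkeeping to ensure $c_1(t)\in(0,1]$ throughout, which Proposition \ref{prop:base} supplies directly.
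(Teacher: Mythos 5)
Your proposal is correct and follows essentially the same route as the paper: both use 3) of Proposition \ref{prop:base} for positivity, the differential inequality of Eq.~\eqref{eq:lower-grad ab} specialized to $N=2$ (where $s(t)^{1-2/N}=1$), and the same logistic separation-of-variables integration to reach $1-c_1(t)\le\bigl(1+\tfrac{c_1(0)}{1-c_1(0)}e^{2(s_1-s_2)t}\bigr)^{-1}$. The only cosmetic difference is that the paper rewrites the inequality chain from scratch as its own displayed equation rather than citing the $N\ge 3$ version and substituting.
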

\begin{proof}
    Since $a_1(0)b_1(0)> 0$, then by 3) in Proposition \ref{prop:base}, we know $a_1(t)b_1(t)> 0$ for all $t\ge0$. Now we consider the flow of $a_1(t)b_1(t)$.
    \begin{equation}\label{eq:lower-grad ab-2}
    	\begin{aligned}
    	\frac{d a_1(t)b_1(t)}{d t} & \stackrel{\eqref{eq:grad-ab}}{=} s(t)^{1-\frac{2}{N}}\bigg(s_1b_1(t)^2 + s_1a_1(t)^2-2a_1(t)b_1(t) \sum_{j=1}^d\left[s_j a_j(t)b_j(t)\right]\bigg)\\
    	&\stackrel{\eqref{eq:pre-sab}}{\geq} s(t)^{1-\frac{2}{N}}\bigg[2s_1a_1(t)b_1(t)-2a_1(t)b_1(t)\left( s_1a_1(t)b_1(t)+s_2\left(1-|a_1(t)b_1(t)|\right) \right)\bigg] \\
    	&\ = \ 2a_1(t)b_1(t)\bigg[s_1-s_1a_1(t)b_1(t)-s_2\left(1-a_1(t)b_1(t)\right)\bigg] \\
    	&\ = \ 2a_1(t)b_1(t)\left(s_1-s_2\right)\left(1-a_1(t)b_1(t)\right).
        \end{aligned}
    \end{equation}
    Denoting $c_1(t) := a_1(t)b_1(t)$, by solving the ODE above, we obtain
    \[ \ln \frac{c_1(t)}{1-c_1(t)} -\ln \frac{c_1(0)}{1-c_1(0)} \stackrel{\eqref{eq:lower-grad ab-2}}{\ge} 2(s_1-s_2)t. \]
    Further we can rewrite the bound as 
    \begin{equation}
    \label{eq:lower-c-2}
        c_1(t) \geq 1-\frac{1}{\frac{c_1(0)}{1-c_1(0)}e^{2(s_1-s_2)t}+1}.
    \end{equation}
    Then we have $1-a_1(t)b_1(t) = \gO\left(e^{-2(s_1-s_2)t} \right)$. The proof is finished.
\end{proof}

\subsection*{Convergence Rates of $s(t)$: Stage 3}
Similarly, before we start our analysis in Stage 3, we need to handle the minor case $t_2:=\inf \{t: \vu(t)^\top\mZ\vv(t) \ge s(t) \}= +\infty$.
\begin{thm}
    Suppose $N=2$, $a_1(0)b_1(0) > 0$ and $t_2=+\infty$. Then we have
    \[ |s(t)-s_1|=\gO\left(e^{-2s_1 t}\right).  \]
\end{thm}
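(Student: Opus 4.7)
The plan is to mirror the $N\geq 3$ argument in Theorem \ref{thm:stage2.5} with the simplification $2-\tfrac{2}{N}=1$ when $N=2$.

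First I would verify the monotonicity $s(t)\geq s_1$ for all $t\geq 0$. Since $t_2=+\infty$, Theorem \ref{thm:monotone} gives $\dot s(t)\leq 0$ for all $t\geq 0$, so $s(t)$ is non-increasing. Because $a_1(0)b_1(0)>0$ forces $a_1(0)+b_1(0)\neq 0$, Theorem \ref{thm:ab} (with $k=0$) yields $s(t)\to s_1$. Monotonicity then implies $s(t)\geq s_1$ throughout.

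Next, using $\vu(t)^\top\mZ\vv(t) = \sum_{j=1}^d s_j a_j(t)b_j(t)\leq s_1$ (which follows from the Cauchy bound in Eq.~\eqref{eq:ineq2}) and the fact that for $N=2$ Eq.~\eqref{eq:grad-s} simplifies to $\dot s(t) = 2 s(t)\bigl(\vu(t)^\top\mZ\vv(t)-s(t)\bigr)$, I would write
\[
\dot{s}(t) \leq 2 s(t)\bigl(s_1-s(t)\bigr) \leq 2 s_1\bigl(s_1-s(t)\bigr),
\]
where the last inequality uses $s(t)\geq s_1$ and $s_1-s(t)\leq 0$ (multiplying a nonpositive number by a smaller positive factor makes it larger, i.e.\ less negative).

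Finally I would set $g(t):=s(t)-s_1\geq 0$, obtaining the linear differential inequality $\dot g(t)\leq -2 s_1 g(t)$. Gronwall's lemma then gives $g(t)\leq g(0) e^{-2 s_1 t}$, that is
\[
0\leq s(t)-s_1 \leq (s(0)-s_1)\,e^{-2s_1 t},
\]
which is exactly $|s(t)-s_1|=\gO\bigl(e^{-2s_1 t}\bigr)$. There is no real obstacle here; the only subtlety to flag is that the step $s(t)\geq s_1$ requires both $\dot s\leq 0$ (from $t_2=+\infty$) and convergence to $s_1$ (from Theorem \ref{thm:ab}), both of which are already established.
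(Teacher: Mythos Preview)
Your proposal is correct and follows essentially the same approach as the paper: both establish $s(t)\geq s_1$ from non-increasing $s(t)$ plus convergence $s(t)\to s_1$, then derive the linear differential inequality $\dot s(t)\leq 2s_1(s_1-s(t))$ and integrate. The only cosmetic differences are that you invoke Theorem~\ref{thm:monotone} for $\dot s\leq 0$ (the paper reads it off directly from $t_2=+\infty$) and you phrase the integration via Gronwall rather than writing $d(\ln(s(t)-s_1))/dt\leq -2s_1$.
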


\begin{proof}
    When $t_2 = +\infty$, we have $\vu(t)^\top\mZ\vv(t) < s(t), \forall t \geq 0$. Thus, we obtain 
    \begin{equation}\label{eq:dot-s-2}
        \dot{s}(t) \stackrel{\eqref{eq:grad-s}}{=} N s(t)^{2-\frac{2}{N}} \left(\vu(t)^\top\mZ\vv(t)-s(t)\right)=2 s(t)\left(\vu(t)^\top\mZ\vv(t)-s(t)\right)\leq 0.
    \end{equation}
    We note that by Theorem \ref{thm:ab}, $s(t) \to s_1$. Thus, we conclude
    \begin{equation}\label{eq:lower-s-3}
        s(t) \stackrel{\eqref{eq:dot-s-2}}{\geq} \lim_{t \to +\infty} s(t) = s_1.
    \end{equation} 
    To obtain the bound of $s(t)$, we notice that
    \begin{equation*}
    \begin{aligned}
        \dot{s}(t) &\stackrel{\eqref{eq:grad-s}}{=} N s(t)^{2-\frac{2}{N}} \left(\sum_{j=1}^d s_j a_j(t)b_j(t)-s(t)\right) \stackrel{\eqref{eq:ineq2}}{\leq} N s(t)^{2-\frac{2}{N}} \left(s_1 -s(t)\right) \\
        &\leq N s_1^{2-\frac{2}{N}} \left(s_1 -s(t)\right)=2 s_1 \left(s_1 -s(t)\right).
    \end{aligned}
    \end{equation*}
    By solving the ODE above, we can obtain the upper bound of the evolution $s(t)$ as
    \begin{equation*}
        d \left(\ln \left(s(t)-s_1\right) \right) / dt \leq -2 s_1 \Rightarrow s(t) \leq s_1 +(s(0)-s_1)e^{-2 s_1t}, i.e., s(t)-s_1=\gO\left(e^{-2s_1 t}\right).
    \end{equation*}
    Finally, noting that $s(t) \stackrel{\eqref{eq:lower-s-3}}{\geq} s_1$, we obtain $|s(t)-s_1| = \gO\left(e^{-2s_1 t}\right)$. The proof is finished.
\end{proof}

Now we turn to the case $t_2<+\infty$. We assume $a_1(0)b_1(0)>0$ and $\dot{s}(0) \geq 0$ for short in Stage 3.

\begin{thm}
	Assume $N=2$, $a_1(0)b_1(0) > 0$, and $\dot{s}(0) \geq 0$. Then we have
	\begin{equation*}
	   |s_1-s(t)|=\gO\left(e^{-\min\{2\left(s_1-s_2\right), 2s(0)\}t} \right).
	\end{equation*}
\end{thm}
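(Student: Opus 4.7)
The strategy mirrors the proof of Theorem \ref{thm:stage3} specialized to $N=2$, where the simplifications $s(t)^{1-2/N}=1$ and $s(t)^{2-2/N}=s(t)$ lead directly to the constants $c_5=2(s_1-s_2)$ and $c_6=2s(0)$ in the statement. First I would note that since $\dot{s}(0)\ge 0$, Theorem \ref{thm:monotone} gives $\dot{s}(t)\ge 0$ for all $t\ge 0$, and hence $s(t)\ge s(0)$. Combined with the monotone convergence $s(t)\to s_1$ provided by Theorem \ref{thm:ab}, we also have $s(t)\le s_1$, so it suffices to produce a matching lower bound $s_1-s(t)=\gO(e^{-\min\{2(s_1-s_2),2s(0)\}t})$.

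Next I would recycle the key estimate from Theorem \ref{thm:stage3}: starting from \eqref{eq:grad-s} and bounding $\sum_j s_j a_j(t)b_j(t)$ from below using \eqref{eq:pre-sab}, one gets
\begin{equation*}
\dot{s}(t) \geq 2 s(t)\bigl[(s_1+s_2)a_1(t)b_1(t) - s_2 - s(t)\bigr] \geq 2 s(0)\bigl[s_1 - (s_1+s_2)\bigl(1-a_1(t)b_1(t)\bigr) - s(t)\bigr],
\end{equation*}
since $s(t)\ge s(0)>0$ and the bracket is non-positive (as $s(t)\le s_1$). Inserting the bound $1-a_1(t)b_1(t)\le A e^{-2(s_1-s_2)t}$ from \eqref{eq:lower-c-2} (with $A=(1-c_1(0))/c_1(0)$ after rearrangement) reduces the problem to solving the linear ODE
\begin{equation*}
\dot{s}(t) + 2s(0)\, s(t) \geq 2s(0)\bigl[s_1 - (s_1+s_2)A e^{-2(s_1-s_2)t}\bigr].
\end{equation*}

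I would solve this by the integrating factor $e^{2s(0)t}$, yielding
\begin{equation*}
s(t)e^{2s(0)t} - s(0) \geq s_1\bigl(e^{2s(0)t}-1\bigr) - 2s(0)(s_1+s_2)A \int_0^t e^{(2s(0)-2(s_1-s_2))x}\,dx.
\end{equation*}
The right-hand integral is exactly where the $\min\{2(s_1-s_2),2s(0)\}$ appears: if $2s(0)>2(s_1-s_2)$ the integral is $\Theta(e^{(2s(0)-2(s_1-s_2))t})$, if $2s(0)<2(s_1-s_2)$ it is bounded, and if they are equal it is linear in $t$. In every case, dividing by $e^{2s(0)t}$ leaves a residual of order $e^{-\min\{2(s_1-s_2),2s(0)\}t}$ (absorbing any polynomial factor in the equality case into an arbitrarily small constant). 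Combined with the already established $s(t)\le s_1$, this gives the claimed two-sided bound $|s_1-s(t)|=\gO(e^{-\min\{2(s_1-s_2),2s(0)\}t})$.

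The main obstacle, as in the $N\ge 3$ case, is bookkeeping the constants in the integral evaluation across the three subcases of $\mathrm{sign}(2s(0)-2(s_1-s_2))$; no single closed form works, but each case produces a bound dominated by $e^{-\min\{2(s_1-s_2),2s(0)\}t}$, and in the borderline case one can replace the factor $t$ by $e^{\epsilon t}$ for any small $\epsilon>0$ or simply weaken the constant $\min$ by a hair. The algebra is completely parallel to Theorem \ref{thm:stage3}; the only genuine input is the exponential decay of $1-a_1(t)b_1(t)$ for $N=2$, which was already established in the previous theorem.
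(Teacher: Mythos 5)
Your proposal is correct in substance and follows the same skeleton as the paper's proof: monotonicity of $s(t)$ from Theorem \ref{thm:monotone} gives $s(0)\le s(t)\le s_1$, the differential inequality $\dot{s}(t)\ge 2s(0)\left[s_1-(s_1+s_2)\left(1-a_1(t)b_1(t)\right)-s(t)\right]$ comes from Eq.~\eqref{eq:grad-s} and Eq.~\eqref{eq:pre-sab}, the exponential decay of $1-a_1(t)b_1(t)$ is imported from Eq.~\eqref{eq:lower-c-2}, and the integrating factor $e^{2s(0)t}$ finishes the job. Where you genuinely diverge is the evaluation of $\int_0^t e^{2s(0)x}e^{-2(s_1-s_2)x}\,dx$ by a case analysis on $\mathrm{sign}\left(2s(0)-2(s_1-s_2)\right)$. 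This is in fact the more robust route: the paper instead pulls $e^{2s(0)t}$ out of the integral and then evaluates $\int_0^t\left(1+Ae^{2(s_1-s_2)x}\right)^{-1}dx$ as $\frac{1}{2(s_1-s_2)}\ln\left(1+A^{-1}e^{-2(s_1-s_2)t}\right)$, which is not the correct antiderivative --- that integral increases to the positive constant $\frac{1}{2(s_1-s_2)}\ln\left(1+A^{-1}\right)$, so after multiplying back by $e^{2s(0)t}$ the paper's displayed chain would only yield $s_1-s(t)=\gO(1)$; your in-integral treatment actually recovers the claimed rate. Two small caveats on your side. First, your justification for replacing $2s(t)$ by $2s(0)$ (``the bracket is non-positive as $s(t)\le s_1$'') is backwards: $s(t)\le s_1$ makes $s_1-s(t)\ge 0$, so the bracket's sign is indeterminate; the clean argument is that the inequality $\dot{s}(t)\ge 2s(0)\left[\cdot\right]$ holds when the bracket is nonnegative because $s(t)\ge s(0)$, and holds trivially when it is negative because $\dot{s}(t)\ge 0$ throughout this stage. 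Second, in the borderline case $2s(0)=2(s_1-s_2)$ your integral produces $te^{-2s(0)t}$, which is not literally $\gO\left(e^{-2s(0)t}\right)$; you flag this honestly, and the paper's argument does not escape it either.
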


\begin{proof}
    By Theorem \ref{thm:monotone}, we have $\dot{s}(t) \geq 0, \forall t\geq 0$. Thus, we can lower bound $s(t) \geq s(0)>0$. Furthermore, we have
    \begin{equation*}
        \begin{aligned}
        \dot{s}(t) &\stackrel{\eqref{eq:grad-s}}{=} N s(t)^{2-\frac{2}{N}} \bigg(\sum_{j=1}^d s_j a_j(t)b_j(t)-s(t)\bigg) \stackrel{\eqref{eq:pre-sab}}{\geq} N s(t)^{2-\frac{2}{N}} \left[ s_1 a_1(t)b_1(t) - s_2 \left(1-|a_1(t)b_1(t)|\right) - s(t)\right] \\
        &\geq  N s(t)^{2-\frac{2}{N}} \left[ (s_1 + s_2) a_1(t)b_1(t)-s_2-s(t)\right] \stackrel{\eqref{eq:lower-c-2}}{\geq} 
        2 s(0) \left[ -(s_1 + s_2)\left(1+A e^{2\left(s_1-s_2\right)t}\right)^{-1} + s_1-s(t)\right],
        \end{aligned}
    \end{equation*}
    where $A=\frac{c_1(0)}{1-c_1(0)}$. By solving the above ODE, we get
    \begin{equation*}
         \frac{d(s(t)e^{2s(0) t})}{dt} \geq 2s(0) e^{2s(0) t}\left[s_1 -(s_1 + s_2)\left(1+A e^{2\left(s_1-s_2\right)t}\right)^{-1} \right].
    \end{equation*}
    Hence, we get
    \begin{align*}
        s(t)e^{2s(0) t}-s(0)& \geq s_1\left(e^{2s(0) t}-1\right) - \int_{0}^t 2s(0)e^{2s(0)x}(s_1 + s_2)\left(1+A e^{2\left(s_1-s_2\right)x}\right)^{-1} dx \\ &\geq  s_1\left(e^{2s(0) t}-1\right) - e^{2s(0)t}\int_{0}^t 2s(0)(s_1 + s_2)\left(1+A e^{2\left(s_1-s_2\right)x}\right)^{-1} dx \\
        &= s_1\left(e^{2s(0) t}-1\right) - \frac{2s(0)(s_1 + s_2)e^{2s(0)t} }{2(s_1-s_2)} \cdot \ln \left(1+A^{-1} e^{-2\left(s_1-s_2\right)t}\right) \\
        &\geq  s_1\left(e^{2s(0) t}-1\right) - \frac{2s(0)(s_1 + s_2)e^{2s(0)t} }{2A(s_1-s_2)} \cdot e^{-2\left(s_1-s_2\right)t}. \\
    \end{align*}
    Therefore, we obtain
    \[ s(t) \geq s_1- (s(0)+s_1) e^{-2s(0) t} - \frac{2s(0)(s_1 + s_2)}{2A\left(s_1-s_2\right)} \cdot e^{-2\left(s_1-s_2\right)t}. \]
    After hiding the constants, we obtain
    \begin{equation*}
       s_1 - s(t) =\gO\left(e^{-\min\{2\left(s_1-s_2\right), 2s(0)\}t} \right).
    \end{equation*}
     We note that by Theorem \ref{thm:ab}, $s(t) \to s_1$. Since $s(t)$ is non-decreasing, we conclude 
     \begin{equation*}
        s(t) \le \lim_{t \to +\infty} s(t) = s_1.
    \end{equation*} 
     Then we obtain $|s_1-s(t)|=\gO\left(e^{-\min\{2\left(s_1-s_2\right), 2s(0)\}t} \right)$. The proof is finished.
\end{proof}


\end{document}